\documentclass{article}

% if you need to pass options to natbib, use, e.g.:
\PassOptionsToPackage{numbers, compress}{natbib}
% before loading neurips_2020

% ready for submission
\usepackage{iclr2022_conference,times}
\usepackage[utf8]{inputenc} % allow utf-8 input
\usepackage[T1]{fontenc}    % use 8-bit T1 fonts
\usepackage{hyperref}       % hyperlinks
\usepackage{url,amsthm}            % simple URL typesetting
\usepackage{booktabs}       % professional-quality tables
\usepackage{amsfonts}       % blackboard math symbols
\usepackage{nicefrac}       % compact symbols for 1/2, etc.
\usepackage{microtype}
\usepackage{graphicx}
\usepackage{booktabs} % for professional tables
\usepackage{makecell}
\usepackage{graphicx}
\usepackage{caption}
\usepackage{courier}
\usepackage{bm}
\usepackage{multirow}
\usepackage{color,colortbl,xcolor}
\usepackage{tabularx}
\usepackage{subcaption}
\usepackage{amssymb}
\usepackage{amsmath}
\usepackage{gensymb}
\usepackage{amsfonts}       % blackboard math symbols
\usepackage{enumitem}
\usepackage{multirow}
\usepackage{wrapfig}
\usepackage{algorithm}
\usepackage{algorithmic}
\usepackage{amsmath,nccmath}
\newcommand{\revision}[1]{{\color{black}#1}}
\newcommand{\R}{\mathbb R}

\newcommand{\E}{\mathbb E}

\newtheorem{Theorem}{Theorem}

\newtheorem{Lemma}{Lemma}
\definecolor{Gray}{gray}{0.9}
\usepackage{hyperref}

\title{Meta-Learning with Fewer Tasks through\\ Task Interpolation}
\author{Huaxiu Yao$^1$, Linjun Zhang$^2$, Chelsea Finn$^1$\\ $^1$Stanford University, $^2$Rutgers University\\ $^1$\texttt{\{huaxiu,cbfinn\}@cs.stanford.edu}, $^2$\texttt{linjun.zhang@rutgers.edu}} 

\iclrfinalcopy

\begin{document}

\maketitle

\begin{abstract}
Meta-learning enables algorithms to quickly learn a newly encountered task with just a few labeled examples by transferring previously learned knowledge. However, the bottleneck of current meta-learning algorithms is the requirement of a large number of meta-training tasks, which may not be accessible in real-world scenarios. To address the challenge that available tasks may not densely sample the space of tasks, we propose to augment the task set through interpolation. By meta-learning with task interpolation (MLTI), our approach effectively generates additional tasks by randomly sampling a pair of tasks and interpolating the corresponding features and labels. Under both gradient-based and metric-based meta-learning settings, our theoretical analysis shows MLTI corresponds to a data-adaptive meta-regularization and further improves the generalization. Empirically, in our experiments on eight datasets from diverse domains including image recognition, pose prediction, molecule property prediction, and medical image classification, we find that the proposed general MLTI framework is compatible with representative meta-learning algorithms and consistently outperforms other state-of-the-art strategies.
\end{abstract}
\section{Introduction}
\label{sec:intro}
Meta-learning has powered machine learning systems to learn new tasks with only a few examples, by learning how to learn across a set of meta-training tasks. While existing algorithms are remarkably efficient at adapting to new tasks at meta-test time, the meta-training process itself is not efficient. Analogous to the training process in supervised learning, the meta-training process treats tasks as data samples and the superior performance of these meta-learning algorithms relies on having a large number of diverse meta-training tasks. However, sufficient meta-training tasks may not always be available in real-world. Take medical image classification as an example: due to concerns of privacy, it is impractical to collect large amounts of data from various diseases and construct the meta-training tasks. Under the task-insufficient scenario, the meta-learner can easily memorize these meta-training tasks, limiting its generalization ability on the meta-testing tasks. To address this limitation, we aim to develop a strategy to regularize meta-learning algorithms and improve their generalization when the meta-training tasks are limited and only sparsely cover the space of relevant tasks.

Recently, a variety of regularization methods for meta-learning have been proposed, including techniques that impose explicit regularization to the meta-learning model~\citep{jamal2019task,yin2020meta} and methods that augment tasks by making modifications to individual training tasks through noise~\citep{lee2020meta} or mixup~\citep{ni2020data,yao2020improving}. However, these methods are largely designed to either tackle only the memorization problem~\citep{yin2020meta} or to improve performance of meta-learning~\citep{yao2020improving} when plenty of meta-training tasks are provided. Instead, we aim to target the task distribution directly, leading to an approach that is particularly well-suited to settings with limited meta-training tasks.

Concretely, as illustrated in Figure~\ref{fig:method_illustration}, we aim to densify the task distribution by providing interpolated tasks across meta-training tasks, resulting in a new task interpolation algorithm named \textbf{MLTI} (\textbf{M}eta-\textbf{L}earning with \textbf{T}ask \textbf{I}nterpolation). The key idea behind MLTI is to generate new tasks by interpolating between pairs of randomly sampled meta-training tasks. This interpolation can be instantiated in a variety of ways, and we present two variants that we find to be particularly effective.
\begin{figure}[t]
\centering
  \includegraphics[width=0.95\textwidth]{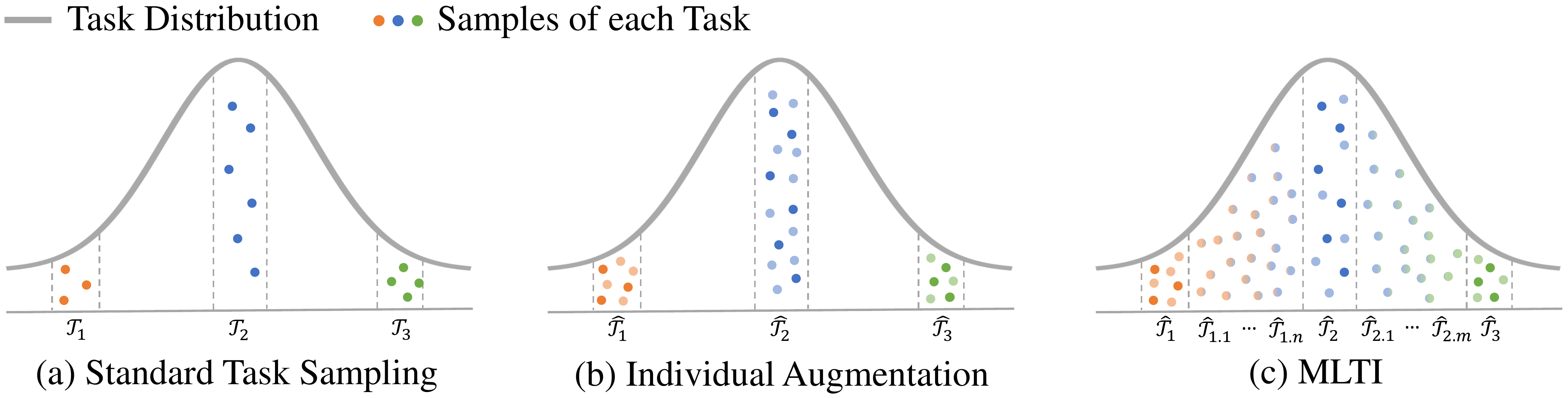}
  \caption{Motivations behind MLTI. (a) three tasks are sampled from the task distribution; (b) individual augmentation methods (e.g.,~\citep{ni2020data,yao2020improving} augment each task within its own distribution); (c) MLTI densifies the task-level distribution by performing cross-task interpolation. }\label{fig:method_illustration}
  \vspace{-1.5em}
\end{figure} The first label-sharing (LS) scenario includes tasks that share the same set of classes (e.g., RainbowMNIST~\citep{finn2019online}). For each LS task pair randomly drawn from the meta-training tasks, MLTI linearly interpolates their features and accordingly applies the same interpolation strategy on the corresponding labels. The second non-label-sharing (NLS) scenario includes classification tasks with different sets of classes (e.g., miniImagenet). For each additional NLS task, we first randomly select two original meta-training tasks and then generate new classes by linearly interpolating the features of the sampled classes, which draw one class in each original task without replacement. Since MLTI is essentially changing only the tasks, it can be readily used with any meta-learning approach and can be combined with prior regularization techniques that target the model.

In summary, our primary contributions are: (1) We propose a new task augmentation method (MLTI) that densifies the task distribution by introducing additional tasks; (2) Theoretically, we prove that MLTI regularizes meta-learning algorithms and improves the generalization ability. (3) Empirically, in eight real-world datasets from various domains, MLTI consistently outperforms six prior meta-learning regularization methods and is compatible with six representative meta-learning algorithms.

\vspace{-0.3em}

\section{Preliminaries}
\vspace{-0.3em}
\newcommand{\task}{\mathcal{T}}
\newcommand{\taski}{\task_i}
\newcommand{\testtask}{\task_t}
\newcommand{\data}{\mathcal{D}}
\newcommand{\datais}{\mathcal{D}_i^s}
\newcommand{\dataiq}{\mathcal{D}_i^q}
\newcommand{\featis}{\mathbf{X}_i^s}
\newcommand{\featiq}{\mathbf{X}_i^q}
\newcommand{\labelis}{\mathbf{Y}_i^s}
\newcommand{\labeliq}{\mathbf{Y}_i^q}
\newcommand{\protor}{\mathbf{c}_r}
\newcommand{\proto}{\mathbf{c}}

\newcommand{\datas}{\mathcal{D}^s}
\newcommand{\dataq}{\mathcal{D}^q}
\newcommand{\feat}{\mathbf{X}}
\newcommand{\feats}{\mathbf{X}^s}
\newcommand{\featq}{\mathbf{X}^q}
\newcommand{\labely}{\mathbf{Y}}
\newcommand{\labels}{\mathbf{Y}^s}
\newcommand{\labelq}{\mathbf{Y}^q}
\newcommand{\hiddenis}{\mathbf{H}_i^s}
\newcommand{\hiddeniq}{\mathbf{H}_i^q}
\newcommand{\hiddens}{\mathbf{H}^s}
\newcommand{\hiddenq}{\mathbf{H}^q}
\newcommand{\loss}{\mathcal{L}}
% etc, etc.
\textbf{Problem statement.} In meta-learning, we assume each task \begin{small}$\taski$\end{small} is $i.i.d.$ sampled from a task distribution \begin{small}$p(\task)$\end{small} associated with a dataset \begin{small}$\mathcal{D}_i$\end{small}, from which we $i.i.d.$ sample a support set \begin{small}$\datais=(\featis, \labelis)=\{(\mathbf{x}_{i,k}^s,\mathbf{y}_{i,k}^s)\}_{k=1}^{N_s}$\end{small} and a query set \begin{small}$\dataiq=(\featiq, \labeliq)=\{(\mathbf{x}_{i,k}^q,\mathbf{y}_{i,k}^q)\}_{k=1}^{N_q}$\end{small}. Given a predictive model $f$ (a.k.a., the base model) with parameter $\theta$, meta-learning algorithms first train the base model on meta-training tasks. Then, during the meta-testing stage, the well-trained base model $f$ is applied to the new task $\testtask$ with the help of its support set \begin{small}$\mathcal{D}_t^s$\end{small} and finally evaluate the performance on the query set \begin{small}$\mathcal{D}_t^q$\end{small}. In the rest of this section, we will introduce both gradient-based and metric-based meta-learning algorithms. For simplicity, we omit the subscript of the meta-training task index $i$ in the rest of this section.

\textbf{Gradient-based meta-learning.}
In gradient-based meta-learning, we use model-agnostic meta-learning (MAML)~\citep{finn2017meta} as an example and denote the corresponding base model as \begin{small}$f^{MAML}$\end{small}. Here, the goal of MAML is to learn initial parameters $\theta^{*}$ such that one or a few gradient steps on \begin{small}$\datas$\end{small} leads to a model that performs well on task \begin{small}$\task$\end{small}. During the meta-training stage, the performance of the adapted model \begin{small}$f_{\phi}$\end{small} is evaluated on the corresponding query set \begin{small}$\dataq$\end{small} and is used to optimize the model parameter \begin{small}$\theta$\end{small}. Formally, the bi-level optimization process with expected risk is formulated as:
\begin{equation}
\label{eqn:meta_loss}
\small
    \theta^{*} \leftarrow \arg\min_{\theta} \mathbb{E}_{\task \sim p(\task)}\left[\loss(f^{MAML}_{\phi}; \dataq)\right ],\;\; \mathrm{where}\; \phi=\theta-\eta\nabla_{\theta}\loss(f^{MAML}_{\theta};\datas),
\end{equation}
where $\eta$ denotes the inner-loop learning rate and \begin{small}$\loss$\end{small} is defined as the loss, which is formulated as cross-entropy loss (i.e., \begin{small}$\mathbb{E}_{\task\sim p(\task)}[-\sum_{k}\log p(\mathbf{y}_{k}^q|\mathbf{x}_{k}^q,f_{\phi})]$\end{small}) and mean square error (i.e., \begin{small}$\mathbb{E}_{\task \sim p(\task)}[\sum_{k}\|f_{\phi}(\mathbf{x}_{k}^q)-\mathbf{y}_{k}^q\|]$\end{small}) for classification and regression problems, respectively. During the meta-testing stage, for task \begin{small}$\testtask$\end{small}, the adapted parameter $\phi_t$ is achieved by fine-tuning $\theta_t$ on the support set \begin{small}$\mathcal{D}_t^s$\end{small}.

\textbf{Metric-based meta-learning.} The aim of metric-based meta-learning is to perform a non-parametric learner on the top of meta-learned embedding space. Taking prototypical network (ProtoNet) with base model \begin{small}$f^{PN}$\end{small} as an example~\citep{snell2017prototypical}, for each task \begin{small}$\mathcal{T}$\end{small}, we first compute class prototype representation \begin{small}$\{\protor\}_{r=1}^{R}$\end{small} as the representation vector of the support samples belonging to class $k$ as \begin{small}$
    \protor=\frac{1}{N_r}\sum_{(\mathbf{x}^{s}_{k;r},\mathbf{y}^{s}_{k;r})\in \mathcal{D}_{r}^{s}}f^{PN}_{\theta}(\mathbf{x}_{k;r}^{s})$\end{small}, 
where \begin{small}$\mathcal{D}_{r}^{s}$\end{small} represents the subset of support samples labeled as class $r$ and the number of this subset is \begin{small}$N_r$\end{small}. Then, given a query data sample \begin{small}$\mathbf{x}_{k}^q$\end{small} in the query set, the probability of assigning it to the $r$-th class is measured by the distance $d$ between its representation \begin{small}$f^{PN}_{\theta}(\mathbf{x}_{k}^q)$\end{small} and prototype representation $\mathbf{c}_{r}$, and the cross-entropy loss of ProtoNet is formulated as:
\begin{equation}\label{eqn:protonet}
\small
    \loss = \E_{\task \sim p(\task)}\left[-\sum_{k,r}\log p(\mathbf{y}_{k}^q=r|\mathbf{x}_{k}^q)\right]=\E_{\task \sim p(\task)}\left[-\sum_{k,r}\log\frac{\exp(-d(f^{PN}_{\theta}(\mathbf{x}_{k}^{q}),\protor))}{\sum_{r'}\exp(-d(f^{PN}_{\theta}(\mathbf{x}_{k}^{q}),\mathbf{c}_{r'}))}\right].
\end{equation}
At the meta-testing stage, the predicted label of each query samples is assigned to the class with maximal probability (i.e., \begin{small}$\hat{\mathbf{y}}_{k}^q=\arg\max_{r} p(\mathbf{y}_{k}^q=r|\mathbf{x}_{k}^q)$\end{small}).

The estimation of the expected loss in Eqn.~\eqref{eqn:meta_loss} or~\eqref{eqn:protonet} is challenging since the distribution \begin{small}$p(\task)$\end{small} is unknown in practical situations. A common way of estimation is to approximate the expected risk in Eqn.~\eqref{eqn:meta_loss} by a set of meta-training tasks \begin{small}$\{\taski\}_{i=1}^{|I|}$\end{small} (use MAML as an example):
\begin{equation}
\label{eq:erm}
\small
    \theta^{*} \leftarrow \frac{1}{|I|}\arg\min_{\theta} \sum_{i=1}^{|I|}\loss(f^{MAML}_{\phi_i}; \dataiq),\;\;\mathrm{where}\; \phi_i=\theta-\alpha\nabla_{\theta}\loss(f^{MAML}_{\theta};\datais).
\end{equation}
However, this approximation method still faces the challenge: optimizing Eqn.~\eqref{eq:erm}, as suggested in~\citep{rajendran2020meta,yin2020meta}, can result in memorization of the meta-training tasks, thus limiting the generalization of the meta-learning model to new tasks, especially in domains with limited meta-training tasks.
\section{Meta-learning with Task Interpolation}
To address the memorization issue described in the last section, we aim to develop a framework that allows meta-learning methods to generalize well to new few-shot learning tasks, even when the provided meta-training tasks are only sparsely sampled from the task distribution. To accomplish this, we introduce meta-learning with task interpolation (MLTI). The key idea behind MLTI is to \emph{densify} the task distribution by generating new tasks that interpolate between provided meta-training tasks. This approach requires no additional task data or supervision, and can be combined with any base meta-learning algorithm, including MAML and ProtoNet. 

Before detailing the proposed strategy, we first discuss two scenarios of meta-training task distributions, \emph{label-sharing} and \emph{non-label-sharing} tasks, which have distinct implications for task interpolation. Formally, we define these two scenarios as:

\textbf{Definition 1 (label-sharing tasks)} 
\textit{If the labels of all tasks share the same label space, we refer it as the label-sharing (LS) scenario. Take Pascal3D pose prediction~\citep{yin2020meta} as an example, each task is to predict the current orientation of the object relative to its canonical orientation, and the range of canonical orientation is shared across all tasks.}

\textbf{Definition 2 (non-label-sharing tasks)} 
\textit{The non-label-sharing (NLS) scenario assumes that different semantic meanings of labels across tasks. For example, the \texttt{piano} class in the miniImagenet dataset may correspond to a class label of \texttt{0} for one task and \texttt{1} for another task.} 

\textbf{MLTI for label-sharing tasks}. 
First, we will discuss MLTI under the label-sharing scenario, where it applies the same interpolation strategy on both features/hidden representations and label spaces. Concretely, let's say that a model $f$ consists of
\begin{small}$L$\end{small} layers and the hidden representation of samples \begin{small}$\mathbf{X}$\end{small}
at the $l$-th layer is denoted as
\begin{small}$\mathbf{H}^l=f_{\theta^l}(\mathbf{X})$\end{small} (\begin{small}$0 \leq\! l\! \leq L^s $\end{small}), where \begin{small}$\mathbf{H}^0=\mathbf{X}$\end{small} and \begin{small}$L^s$\end{small} represents the number of layers shared across all tasks. In gradient-based methods, as suggested in~\citep{yin2020meta}, only part of the layers are shared (i.e., \begin{small}$L^s<L$\end{small}). In metric-based methods, all layers are shared (i.e., \begin{small}$L^s=L$\end{small}). Given a pair of tasks with their sampled support and query sets (i.e., \begin{small}$\task_i=\{\datas_i, \dataq_i\}$\end{small} and \begin{small}$\task_j=\{\datas_j, \dataq_j\}$\end{small}) under the same label space, MLTI first randomly selects one layer $l$ and then applies the task interpolation separately on the hidden representations (\begin{small}$\mathbf{H}^{s(q),l}_i$\end{small}, \begin{small}$\mathbf{H}^{s(q),l}_j$\end{small}) and corresponding labels (\begin{small}$\mathbf{Y}^{s(q)}_i$\end{small}, \begin{small}$\mathbf{Y}^{s(q)}_j$\end{small}) of the support (query) sets as:
\begin{equation}
\small
\begin{aligned}
\label{eq:ls_mix}
&\Tilde{\mathbf{H}}_{cr}^{s,l}=\lambda \mathbf{H}^{s,l}_i+(1-\lambda)\mathbf{H}^{s,l}_j,\;\;\Tilde{\mathbf{Y}}_{cr}^{s,l}=\lambda\mathbf{Y}_i^s+(1-\lambda)\mathbf{Y}_j^s,  \\
&\Tilde{\mathbf{H}}_{cr}^{q,l}=\lambda \mathbf{H}^{q,l}_i+(1-\lambda)\mathbf{H}^{q,l}_j,\;\;\Tilde{\mathbf{Y}}_{cr}^{q,l}=\lambda\mathbf{Y}_i^q+(1-\lambda)\mathbf{Y}_j^q,  
\end{aligned}
\end{equation}
where \begin{small}$\lambda \in [0,1]$\end{small} is sampled from a Beta distribution \begin{small}$\mathrm{Beta}(\alpha, \beta)$\end{small} and \revision{the subscript "cr" represents "cross"}. \revision{Notice that both the support and query sets will be replaced by the interpolated ones in MLTI, while only the query set is replaced in approaches like~\citet{yao2020improving}.} Besides, Manifold Mixup~\citep{verma2019manifold} in Eqn.~\eqref{eq:ls_mix} can be replaced by different task interpolation methods (e.g., Mixup~\citep{zhang2017mixup}, CutMix~\citep{yun2019cutmix}). 

\textbf{MLTI for non-label-sharing tasks.} Under non-label-sharing scenarios, tasks have different label spaces, making it infeasible to directly interpolate the labels. Instead, we generate the new task by performing the feature-level interpolation and re-assign a new label to the interpolated class. Specifically, given samples from class $r$ in task \begin{small}$\mathcal{T}_i$\end{small} and class $r'$ in task \begin{small}$\mathcal{T}_j$\end{small}, we denote the interpolated features as Intrpl($r$, $r'$), which are formally defined as:
\begin{equation}
\small
\label{eq:nsl_mix}
\Tilde{\mathbf{H}}^{s,l}_{cr;r}=\lambda \mathbf{H}^{s,l}_{i;r}+(1-\lambda)\mathbf{H}^{s,l}_{j;r'}, \;\;
\Tilde{\mathbf{H}}^{q,l}_{cr;r}=\lambda \mathbf{H}^{q,l}_{i;r}+(1-\lambda)\mathbf{H}^{q,l}_{j;r'}.
\end{equation}
The interpolated samples are regarded as a new class in the interpolated task. After randomly selecting $N$ class pairs, we can construct an $N$-way interpolated task. Take a 3-way classification as an example, assume task \begin{small}$\mathcal{T}_i$\end{small} has classes ($i_1$, $i_2$, $i_3$) and task \begin{small}$\mathcal{T}_j$\end{small} has classes ($j_1$, $j_2$, $j_3$). One potential interpolated task could be a 3-way task with classes ($e_1$, $e_2$, $e_3$), where the labels are associated with interpolated features (Intrpl($i_1$, $j_2$), Intrpl($i_2$, $j_3$), Intrpl($i_3$, $j_1$)). Note that, for ProtoNet and its variants, we apply the interpolation strategies of Eqn.~\eqref{eq:nsl_mix} on both LS and NLS scenarios since it is intractable to calculate prototypes with mixed labels. 

Finally, we note that MLTI supports both inter-task and intra-task interpolation, as we allow the case when $i=j$. As we will find in Sec.~\ref{sec:exp}, intra-task interpolation can be complementary to cross-task interpolation and further improve the generalization. Under this case, the intra-task interpolation can also be replaced by any existing intra-task augmentation strategies (e.g., MetaMix~\citep{yao2020improving}). 

After generating the interpolated support set \begin{small}$\mathcal{D}_{i,cr}^s=(\Tilde{\mathbf{H}}^{s,l}_{i,cr}, \Tilde{\mathbf{Y}}^{s}_{i,cr})$\end{small} and query set \begin{small}$\mathcal{D}_{i,cr}^q=(\Tilde{\mathbf{H}}^{q,l}_{i,cr}, \Tilde{\mathbf{Y}}^{q}_{i,cr})$\end{small}, we replace the original support and query sets with the interpolated ones. With MAML as an example, we reformulate the optimization process in Eqn.~\eqref{eq:erm} as:
\begin{equation}
\small
    \theta^{*} \leftarrow \frac{1}{|I|}\arg\min_{\theta} \sum_{i=1}^{|I|}\mathcal{L}(f^{MAML}_{\phi^{L-l}_{i,cr}}; \mathcal{D}_{i,cr}^{q}),\;\;\mathrm{where}\; \phi^{L-l}_{i,cr}=\theta^{L-l}-\alpha\nabla_{\theta^{L-l}}\mathcal{L}(f^{MAML}_{\theta^{L-l}};\mathcal{D}_{i,cr}^{s}),
\end{equation}
where the superscript \begin{small}$L-l$\end{small} represents the rest of layers after the selected layer $l$. Detailed pseudocode of MAML and ProtoNet is shown in Alg.~\ref{alg:mlti_maml_training} and Alg.~\ref{alg:mlti_maml_testing} in Appendix~\ref{sec:app_pseudocodes}, respectively.
\section{Theoretical Analysis}
\label{sec:theory}
We now theoretically investigate how MLTI improves the generalization performance with both gradient-based and the metric-based meta-learning methods. Specifically, we theoretically prove that MLTI essentially induces a data-dependent regularizer on both categories of meta-learning methods and controls the Rademacher complexity, leading to greater generalization. Here, we only discuss the non-label-sharing (NLS) scenario (see detailed proof in Appendix~\ref{sec:app_proof_nls}) and leave the analysis of the label-sharing scenario in Appendix~\ref{sec:ls}.

\subsection{Gradient-Based Meta-Learning with MLTI}
In gradient-based meta-learning, we analyze the generalization ability by considering the two-layer neural network with binary classification. For the simplicity of presentation, we assume the sample size of different task are the same and equal to $N$. Suppose there are \begin{small}$|I|$\end{small} tasks. For each task \begin{small}$\taski$\end{small}, we consider the logistic loss \begin{small}$\ell(f^{MAML}(\mathbf{x}),\mathbf{y})=\log(1+\exp(f^{MAML}(\mathbf{x})))-yf^{MAML}(\mathbf{x})$\end{small} with $f^{MAML}$ modeled by  \begin{small}$f^{MAML}_{\phi_i}(\mathbf{x}_{i,k})=\phi_i^\top\sigma(\mathbf{W}\mathbf{x}_{i,k}):=\phi_i^{\top}\mathbf{h}^1_{i,k}$\end{small}, where \begin{small}$\mathbf{h}^1_{i,k}$\end{small} represents the hidden representation on the first layer of sample \begin{small}$\mathbf{x}_{i,k}$\end{small}. % in task \begin{small}$\taski$\end{small}. 
Under the NLS setting, the interpolated task is constructed by Eqn.~\eqref{eq:nsl_mix}. We assume the interpolation performs on the hidden layer (following Eqn.~\eqref{eq:nsl_mix} with \begin{small}$l=1$\end{small}) and denote the interpolated query set as \begin{small}$\mathcal{D}_{i,cr}^q=(\Tilde{\mathbf{H}}^{q,1}_{i,cr}, \Tilde{\mathbf{Y}}^{q}_{i,cr})$\end{small}. For simplicity, in this subsection, we omit the superscript $q$ and define 
the empirical training loss as
\begin{small}$\loss_{t}(\{\data_{i,cr}\}_{i=1}^{|I|})=|I|^{-1}\sum_{i=1}^{|I|}\loss(\data_{i,cr})={(N|I|)^{-1}}\sum_{i=1}^{|I|}\sum_{k=1}^N\loss(f_{\phi_i}(\mathbf{x}_{i,k,cr}),\mathbf{y}_{i,k,cr})$\end{small}. We first present a lemma showing that the loss \begin{small}$\loss_{t}(\{\data_{i,cr}\}_{i=1}^{|I|})$\end{small} induced by MLTI has a \revision{regularization} effect.

\begin{Lemma}~\label{thm:gbml-reg}
Consider the MLTI with \begin{small}$\lambda\sim \mathrm{Beta}(\alpha,\beta)$\end{small}. Let \begin{small} $\psi(u)={e^u}/{(1+e^u)^2}$ \end{small} and \begin{small} $N_{i,r}$ \end{small} denotes the number of samples from the class $r$ in task \begin{small}$\task_i$\end{small}. There exists a constant $c>0$,  such that  the second-order approximation of \begin{small}$\loss_{t}(\{\data_{i,cr}\}_{i=1}^{|I|})$\end{small} is given by 
\begin{small}
\begin{equation}
\begin{aligned}
\label{eq:gbml_approx}
 \loss_{t}(\bar\lambda\cdot\{\data_{i}\}_{i=1}^{|I|})+c\frac{1}{N|I|}\sum_{i=1}^{|I|}\sum_{k=1}^{N}\psi(\mathbf{h}_{i,k}^{1\top}\phi_i)\cdot \phi_i^\top(\frac{1}{|I|}\sum_{i=1}^{|I|}\frac{1}{2}\sum_{r=1}^2\frac{1}{N_{i,r}}\sum_{i=1}^{|I|}\sum_{k=1}^{N_{i,r}}\mathbf{h}^1_{i,k;r}\mathbf{h}_{i,k;r}^{1\top})\phi_i,
\end{aligned}
\end{equation}
\end{small}%, $c=\E[\frac{(1-\lambda)^2}{\lambda^2}]$.
where \begin{small}$\bar\lambda=\E_{\mathcal{D}_\lambda}[\lambda]$, \end{small} with \begin{small}$\mathcal{D}_\lambda\sim\frac{\alpha}{\alpha+\beta}\mathrm{Beta}(\alpha+1,\beta)+\frac{\beta}{\alpha+\beta}\mathrm{Beta}(\beta+1,\alpha)$\end{small}.
\end{Lemma}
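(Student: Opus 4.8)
The plan is to follow the now-standard ``mixup-as-regularization'' expansion (in the spirit of \citep{zhang2017mixup}'s interpolation and the quadratic-approximation analyses of it), adapted to the NLS bi-level setting, reading the claim as an approximation of the \emph{expected} value of $\loss_{t}(\{\data_{i,cr}\}_{i=1}^{|I|})$ over the interpolation randomness: over $\lambda\sim\mathrm{Beta}(\alpha,\beta)$, over the paired task $j$, and over the sampled class $r'$ and index $k'$ producing each interpolated hidden vector $\mathbf h^1_{i,k,cr}=\lambda\mathbf h^1_{i,k}+(1-\lambda)\mathbf h^1_{j,k';r'}$. There are two stages. Stage one rewrites this expected loss in an ``anchored'' form. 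Since the logistic loss $\ell(u,y)=\log(1+e^u)-yu$ is affine in $y$, a convex-combination label satisfies $\ell(u,\lambda y_i+(1-\lambda)y_j)=\lambda\,\ell(u,y_i)+(1-\lambda)\,\ell(u,y_j)$; combined with the symmetry of the (uniform) task/class pairing and the Beta-moment identity $\E_{\lambda\sim\mathrm{Beta}(\alpha,\beta)}[\lambda\, g(\lambda)]=\tfrac{\alpha}{\alpha+\beta}\E_{\lambda\sim\mathrm{Beta}(\alpha+1,\beta)}[g(\lambda)]$ (and its mirror with $\alpha,\beta$ swapped), this collapses the expected MLTI loss into $\E_{i,k}\,\E_{j,k',r'}\,\E_{\lambda\sim\mathcal D_\lambda}\big[\ell(\phi_i^\top(\lambda\mathbf h^1_{i,k}+(1-\lambda)\mathbf h^1_{j,k';r'}),\,\mathbf y_{i,k})\big]$, in which the anchor's hard label $\mathbf y_{i,k}$ survives --- exactly the NLS relabeling rule --- and $\lambda$ is drawn from the reweighted mixture $\mathcal D_\lambda$ in the statement, so $\bar\lambda=\E_{\mathcal D_\lambda}[\lambda]$.

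Stage two is a second-order Taylor expansion of the anchored loss about the ``shrunk'' feature $\bar\lambda\,\mathbf h^1_{i,k}$ in the interpolation fluctuation $\xi_{i,k}:=(\lambda\mathbf h^1_{i,k}+(1-\lambda)\mathbf h^1_{j,k';r'})-\bar\lambda\,\mathbf h^1_{i,k}$. Writing the composite loss as $u\mapsto\log(1+e^u)-\mathbf y u$ with $u=\phi_i^\top\mathbf h$, its gradient in $\mathbf h$ is $(\mathrm{sigmoid}(\phi_i^\top\mathbf h)-\mathbf y)\phi_i$ and its Hessian in $\mathbf h$ is $\psi(\phi_i^\top\mathbf h)\,\phi_i\phi_i^\top$, since $\psi(u)=e^u/(1+e^u)^2$ is precisely the second derivative of $\log(1+e^u)$. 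The zeroth-order term, averaged over $i,k$, is $\loss_{t}(\bar\lambda\cdot\{\data_{i}\}_{i=1}^{|I|})$. The first-order term is $(\mathrm{sigmoid}(\bar\lambda\phi_i^\top\mathbf h^1_{i,k})-\mathbf y_{i,k})\,\phi_i^\top\E[\xi_{i,k}]$, which vanishes once the layer-$1$ representations are centered (or the grand-mean term is absorbed into the approximation). The second-order term is $\tfrac12\psi(\bar\lambda\phi_i^\top\mathbf h^1_{i,k})\,\phi_i^\top\E[\xi_{i,k}\xi_{i,k}^\top]\phi_i$; taking the expectation over $\lambda,j,k',r'$ turns $\E[\xi_{i,k}\xi_{i,k}^\top]$ into $\E_{\mathcal D_\lambda}[(1-\lambda)^2]$ times the task- and class-uniform average second-moment matrix $\tfrac1{|I|}\sum_j\tfrac12\sum_{r'=1}^2\tfrac1{N_{j,r'}}\sum_{k'=1}^{N_{j,r'}}\mathbf h^1_{j,k';r'}\mathbf h^{1\top}_{j,k';r'}$ (up to an anchor-only $\mathbf h^1_{i,k}\mathbf h^{1\top}_{i,k}$ piece that is either combined in or is lower order as $\alpha,\beta\to0$), and replacing $\psi(\bar\lambda\phi_i^\top\mathbf h^1_{i,k})$ by $\psi(\phi_i^\top\mathbf h^1_{i,k})$ to the same order. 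Folding $\tfrac12\E_{\mathcal D_\lambda}[(1-\lambda)^2]$ and the normalization into a single constant $c>0$ yields exactly Eqn.~\eqref{eq:gbml_approx}.

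It remains to certify that this is genuinely a ``second-order approximation''. The omitted Taylor remainder is cubic in $\xi_{i,k}$, hence controlled by the bounded third derivative of $\log(1+e^u)$ together with norm bounds on $\phi_i$, $\mathbf W$, and the inputs $\mathbf x_{i,k}$ (so $\|\mathbf h^1_{i,k}\|$ is bounded and $\xi_{i,k}=O(1-\bar\lambda)$ uniformly), making the discarded terms $O\!\big(\E_{\mathcal D_\lambda}[|1-\lambda|^3]\big)$ --- negligible against the retained $O\!\big(\E_{\mathcal D_\lambda}[(1-\lambda)^2]\big)$ regularizer in the small-$\alpha,\beta$ regime where $\bar\lambda\to1$. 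I expect the main obstacle to be stage one rather than the calculus: correctly reconciling the symmetrization/Beta-reweighting bookkeeping with the NLS class-pair construction (which class becomes the anchor, how pairs and classes are drawn, and why the hard anchor label is the one that survives) so that the cross-task averaging in the regularizer comes out exactly as the stated class- and task-uniform second-moment matrix. The centering assumption on the hidden features and the precise meaning of ``$\approx$'' (equality up to $o$ of the quadratic term) should also be made explicit, since otherwise the anchor-only covariance piece and the grand-mean shift term would need to be tracked as well.
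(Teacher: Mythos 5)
Your proposal is correct and takes essentially the same route as the paper's proof: a second-order Taylor expansion of $\loss_{t}(\{\data_{i,cr}\}_{i=1}^{|I|})$ around the conditional mean $\bar\lambda\,\mathbf{h}^{1}_{i,k}$ of the interpolated feature, with the first-order term vanishing by the centering of the hidden representations and the conditional covariance producing the task- and class-averaged second-moment regularizer, the $(1-\lambda)^2$ moment being absorbed into the constant $c$. Your stage-one symmetrization yielding $\mathcal{D}_\lambda$ (and hence $\bar\lambda$) is the step the paper itself only spells out, via Lemma 3.1 of \citet{zhang2020does}, in its label-sharing analysis, but it is the same mechanism, and your explicit tracking of the anchor-only covariance piece and the $\psi$-argument replacement only makes the paper's implicit bookkeeping more careful.
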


\revision{This lemma suggests that MLTI induces an (implicit) regularization term on $\phi_i$'s through task interpolation and therefore will lead to a better generalization bound, as we will show in Section~\ref{sec:exp:nls} with extensive numerical experiments. To study the improved generalization more explicitly, we consider the population version of the regularization term in Eqn.~\eqref{eq:gbml_approx} by considering the  function class
$\small
\mathcal{F}_\gamma=\{\mathbf{H}^{1\top}\phi: \E[\psi(\mathbf{H}^{1\top}\phi)]\phi^\top \Sigma\phi\le\gamma \},$ where \begin{small}$\Sigma=\E_{\task\sim p(\task)}\E_{\task}[\mathbf{H}^1\mathbf{H}^{1\top}]$\end{small}.} We also define \begin{small}$\mu_{\task}=\E_{\task}[\mathbf{H}^1]$\end{small} and assume the following condition of  the individual task distribution \begin{small}$\task$\end{small} as: for all \begin{small}$\task\sim p(\task)$\end{small}, \begin{small}$\task$\end{small} satisfies
\begin{equation}
\small
\label{gbml:assump}
rank(\Sigma)\le R,\; \|\Sigma^{\dagger/2}\mu_{\task}\|\le U,
\end{equation}
where \begin{small}$\Sigma^{\dagger}$\end{small} denotes the generalized inverse of \begin{small}$\Sigma$\end{small}.
Further, we assume that the distribution of \begin{small}$\mathbf{H}^1$\end{small} is \textit{$\rho$-retentive} for some \begin{small}$\rho\in(0, 1/2]$\end{small}, that is, if for any non-zero vector \begin{small}$v\in\R^d$\end{small}, \begin{small}$\big[\E[\psi(v^\top\mathbf{H}^{1} )]\big]^2 \ge \rho\cdot\min\{1,\E(v^\top \mathbf{H}^1)^2\}$\end{small}. Such an assumption has been similarly assumed in \citep{arora2020dropout,zhang2020does} and is satisfied when the weights has bounded $\ell_2$ norm.

\revision{We also regard \begin{small}$\loss_{t}(\{\data_{i}\}_{i=1}^{|I|})$\end{small} of tasks \begin{small}$\{\task_i\}_{i=1}^{|I|}$\end{small} as the empirical (training) loss \begin{small}$\mathcal{R}(\{\data_{i}\}_{i=1}^{|I|})$\end{small} and its corresponding population loss (on the test data) is defined as \begin{small}
$\mathcal{R}=\E_{\task_i\sim p(\mathcal \task)}\E_{(\mathbf{X}_i,\mathbf{Y}_i)\sim \task_i}[\loss(f_{\phi_i}(\mathbf{X}_i), \mathbf{Y}_i)]$.
\end{small} We then have the following theorem showing the improved generalization gap brought by MLTI.} %induced by~\citet{yao2020improving}: 
\begin{Theorem}\label{thm:metamix}
Suppose \begin{small}$\mathbf{X}_i$'s\end{small}, \begin{small}$\mathbf{Y}_i's$\end{small} and \begin{small}$\phi$\end{small} are bounded in spectral norm and assumption~\eqref{gbml:assump} holds. There exist constants \begin{small}$A_1, A_2, A_3 > 0$\end{small}, such that for all \begin{small}$f_{\task}\in \mathcal{F}_{\gamma}, \delta\in(0,1)$\end{small}, with probability at least $1 -\delta$ (over randomness of training sample), we have the following generalization bound
\begin{small}
\begin{equation*}
    |\mathcal{R}(\{\mathcal{D}_{i}\}_{i=1}^{|I|})-\mathcal{R}|\le A_1\max\{(\frac{\gamma}{\rho})^{1/4},(\frac{\gamma}{\rho})^{1/2}\}(\sqrt{\frac{R+U}{N}}+\sqrt{\frac{R+ U}{|I|}})+A_2\sqrt\frac{\log(|I|/\delta)}{N}+A_3\sqrt\frac{\log(1/\delta)}{|I|}.
\end{equation*}
\end{small}
\end{Theorem}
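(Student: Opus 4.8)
The plan is to prove the bound by a two–level uniform–convergence argument — one level for the finitely many sampled tasks, one for the finitely many query samples drawn within each task — with the effective complexity of the predictor class $\mathcal{F}_\gamma$ controlled through the $\rho$–retentive condition and assumption~\eqref{gbml:assump}. I would start from the decomposition
\[
\mathcal{R}(\{\data_i\}_{i=1}^{|I|})-\mathcal{R}
=\Big(\mathcal{R}(\{\data_i\})-\tfrac1{|I|}\textstyle\sum_i\E_{(\feat_i,\labely_i)\sim\taski}\loss(f_{\phi_i}(\feat_i),\labely_i)\Big)
+\Big(\tfrac1{|I|}\textstyle\sum_i\E_{(\feat_i,\labely_i)\sim\taski}\loss(f_{\phi_i}(\feat_i),\labely_i)-\mathcal{R}\Big),
\]
where the first bracket is a within–task deviation (fresh query draws, tasks and support sets fixed) and the second is an across–task deviation (i.i.d.\ tasks). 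Since $\feat_i,\labely_i$ and $\phi$ are bounded in spectral norm, the first–layer features $\mathbf h^1_{i,k}=\sigma(\mathbf W\mathbf x_{i,k})$ and the logistic loss are bounded, so each bracket is amenable to McDiarmid's inequality combined with symmetrization.

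For the within–task term I would condition on the sampled tasks and support sets (which fixes the inner–loop parameters $\phi_i$), apply bounded differences in the $N$ per–task query samples, and union bound over the $|I|$ tasks — this is what produces the $\log(|I|/\delta)$ factor — obtaining a bound $2\,\widehat{\mathfrak{R}}_{N}+c_1\sqrt{\log(|I|/\delta)/N}$, where $\widehat{\mathfrak{R}}_N$ is the per–task Rademacher complexity of the loss class. For the across–task term I would apply bounded differences in the $|I|$ i.i.d.\ tasks and symmetrize over them, obtaining $2\,\widehat{\mathfrak{R}}_{|I|}+c_2\sqrt{\log(1/\delta)/|I|}$, where $\widehat{\mathfrak{R}}_{|I|}$ is the corresponding task–level Rademacher complexity (which by Jensen is no larger than the same linear–class complexity with $|I|$ replacing $N$). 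By the $1$–Lipschitzness of the logistic loss in its first argument, Talagrand's contraction lemma reduces both $\widehat{\mathfrak{R}}_N$ and $\widehat{\mathfrak{R}}_{|I|}$ to the Rademacher complexity of the linear class $\{\mathbf h\mapsto\phi^\top\mathbf h:\ f_\task\in\mathcal{F}_\gamma\}$. Splitting $\mathbf h_{i,k}=\mu_{\taski}+(\mathbf h_{i,k}-\mu_{\taski})$ and using $\Sigma^{1/2}\Sigma^{\dagger/2}$ as the projection onto $\mathrm{range}(\Sigma)$, Cauchy–Schwarz gives $|\phi^\top\mu_{\taski}|\le\|\Sigma^{\dagger/2}\mu_{\taski}\|\sqrt{\phi^\top\Sigma\phi}\le U\sqrt{\phi^\top\Sigma\phi}$ for the mean part, while for the centered part $\E\|\tfrac1n\sum_k\epsilon_k\Sigma^{\dagger/2}(\mathbf h_{i,k}-\mu_{\taski})\|\le\sqrt{\mathrm{tr}(\Sigma^{\dagger}\Sigma)/n}\le\sqrt{R/n}$ because $\mathrm{rank}(\Sigma)\le R$; hence $\widehat{\mathfrak{R}}_n\le\big(\sup_{f_\task\in\mathcal{F}_\gamma}\sqrt{\phi^\top\Sigma\phi}\big)\cdot O(\sqrt{(R+U)/n})$ for $n\in\{N,|I|\}$.

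It remains to bound the effective radius $\sup_{f_\task\in\mathcal{F}_\gamma}\sqrt{\phi^\top\Sigma\phi}$, and this is where the $\rho$–retentive condition enters. Writing $t=\phi^\top\Sigma\phi$, retentiveness with $v=\phi$ gives $\E[\psi(\mathbf H^{1\top}\phi)]\ge\sqrt{\rho\min\{1,t\}}$, while $f_\task\in\mathcal{F}_\gamma$ forces $\E[\psi(\mathbf H^{1\top}\phi)]\,t\le\gamma$; multiplying the two inequalities yields $\sqrt{\rho\min\{1,t\}}\,t\le\gamma$. A short case split — $t\le1$ gives $t\le(\gamma/\sqrt\rho)^{2/3}$, and $t>1$ gives $t\le\gamma/\sqrt\rho$ — together with $\rho\le1/2$ shows $\sqrt t\le\max\{(\gamma/\rho)^{1/4},(\gamma/\rho)^{1/2}\}$ after absorbing slack into the constants. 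Plugging this into the two Rademacher bounds and collecting the concentration terms gives the claimed inequality, with $A_1$ from the Rademacher contributions ($\sqrt{(R+U)/N}$ and $\sqrt{(R+U)/|I|}$) and $A_2,A_3$ from the two McDiarmid terms. I expect the main obstacle to be the two–level bookkeeping: keeping the task–level and within–task randomness decoupled so that the $\phi_i$ produced by the MAML inner loop — which depends on $\data_i^s$ but not on the query set — is still covered by a single uniform statement over $\mathcal{F}_\gamma$, and tracking how the task–dependent mean $\mu_{\taski}$ forces the $U$–term to appear in both $\sqrt{(R+U)/N}$ and $\sqrt{(R+U)/|I|}$; by comparison the retentiveness–to–radius step, although it looks delicate, is a short deterministic computation once the case split is arranged.
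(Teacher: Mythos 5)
Your proposal is correct and follows essentially the same route as the paper's proof: the identical decomposition into a within-task deviation term (bounded by the per-task Rademacher complexity of $\mathcal{F}_\gamma$ plus a union bound over the $|I|$ tasks, which is exactly where the $\log(|I|/\delta)$ factor comes from) and an across-task deviation term (bounded by the task-level Rademacher complexity of the class of expected losses over task distributions, plus the standard $\sqrt{\log(1/\delta)/|I|}$ concentration term). The only difference is presentational: the paper invokes Theorems 3.4 and A.1 of \citet{zhang2020does} as a black box for the bound $\max\{(\gamma/\rho)^{1/4},(\gamma/\rho)^{1/2}\}\sqrt{(R+U)/n}$ on the complexity of $\mathcal{F}_\gamma$, whereas you re-derive that bound explicitly (contraction, the mean/centered split with $\Sigma^{\dagger/2}$, and the retentiveness-to-radius case analysis), which is the content of the cited result rather than a genuinely different argument.
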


\revision{Based on Lemma~\ref{thm:gbml-reg} and Theorem~\ref{thm:metamix}, MLTI regularizes on  $\phi^\top\Sigma\phi$ (implying a small $\gamma$) and therefore achieves a tighter generalization bound than the vanilla gradient-based method (%i.e., w/o MLTI, 
where $\gamma$ is unconstrained). Compared with the individual task augmentation (see Figure~\ref{fig:method_illustration}(b)), the regularization effect in Eqn.~\eqref{eq:gbml_approx} induced by MLTI is larger (i.e., smaller $\gamma$) since the total variance is generally larger than the within-group variance (see more details in the Appendix~\ref{sec:app_proof_variance}). Therefore, MLTI reduces the generalization error, which we also empirically validate in the experiments.}

\subsection{Metric-Based Meta-Learning with MLTI}
In the metric-based meta-learning, we consider the ProtoNet with linear representation in the binary classification, which has been commonly considered in other theoretical analysis of meta-learning, see, e.g., \citep{du2020few,tripuraneni2020provable}. Specifically, we assume \begin{small}$f^{PN}_\theta(\mathbf{x})=\theta^\top\mathbf{x}$\end{small} and $d(\cdot,\cdot)$ represents the squared Euclidean distance, then the loss of ProtoNet can be simplified as
\begin{equation}
\label{eq:loss_mbml}
\small
\arg\min_{\theta}\sum_{i=1}^{|I|}\sum_{k=1}^N\log p(\mathbf{y}_{i,k}=r|\mathbf{x}_{i,k})=\arg\min_{\theta}\sum_{i=1}^{|I|}\sum_{k=1}^N\frac{1}{1+\exp(\langle (\mathbf{x}_{i,k}-(\proto_1+\proto_2)/2, \theta \rangle)},
\end{equation}
where \begin{small}$\proto_1$\end{small} and \begin{small}$\proto_2$\end{small} are defined as the prototypes of class 1 and 2, respectively. Under this setting, the interpolation performs on the feature (i.e., $l=0$ in Eqn.~\eqref{eq:nsl_mix}).

We now present the following lemma showing that MLTI induces a regularization on the parameter \begin{small}$\theta$\end{small}. 

\begin{Lemma}~\label{thm:protonet-reg}
Considering the interpolated tasks \begin{small}$\{\mathcal{D}_{i,cr}\}_{i=1}^{|I|}$\end{small} with \begin{small}$\lambda\sim \mathrm{Beta}(\alpha,\beta)$\end{small}, we define \begin{small}$\loss_t(\{\mathcal{D}_i\}_{i=1}^{|I|})=(N|I|)^{-1}\sum_{i,k}({1+\exp(\langle (\mathbf{x}_{i,k}-(\mathbf{c}_1+\mathbf{c}_2)/2, \theta \rangle)})^{-1}$\end{small} and \begin{small}$\loss_t(\{\mathcal{D}_{i,cr}\}_{i=1}^{|I|})=(N|I|)^{-1}\sum_{i,k}({1+\exp(\langle (\mathbf{x}_{i,k,cr}-(\proto_{1,cr}+\proto_{2,cr})/2, \theta \rangle)})^{-1}$\end{small}. Recall  \begin{small}$\psi(u)={e^u}/{(1+e^u)^2}$\end{small}. The second-order approximation of \begin{small}$\loss_t(\{\mathcal{D}_{i,cr}\}_{i=1}^{|I|})$\end{small} is given by, for some constant \begin{small}$c>0$\end{small},  
\begin{equation}
\small
\label{eq:mbml_approx}
\mathcal{L}_t(\bar{\lambda}\{\mathcal{D}_{i}\}_{i=1}^{|I|})+c\cdot\frac{1}{N|I|}\sum_{i\in I,k\in[N]}\psi(\langle \mathbf{x}_{i,k}-(\proto_{1}+\proto_{2})/2, \theta \rangle)\cdot  \theta^\top(\frac{1}{|I|}\sum_{i=1}^{|I|}\frac{1}{2}\sum_{r=1}^2\frac{1}{N_r}\sum_{i=1}^{|I|}\sum_{k=1}^{N_r}\mathbf{x}_{i,k;r}\mathbf{x}_{i,k;r}^{\top})\theta.
\end{equation}
\end{Lemma}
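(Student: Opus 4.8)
The plan is to establish Lemma~\ref{thm:protonet-reg} as the ProtoNet-analogue of the gradient-based Lemma~\ref{thm:gbml-reg}, the only genuinely new ingredient being a reduction of the linear ProtoNet loss under feature-level interpolation to a Mixup-type loss on \emph{centered} features. First I would unpack $\mathcal{D}_{i,cr}$ for $l=0$ and $f^{PN}_\theta(\mathbf{x})=\theta^\top\mathbf{x}$. Because a class prototype is linear in its samples, interpolating the support features as in Eqn.~\eqref{eq:nsl_mix} interpolates the prototypes, $\proto_{r,cr}=\lambda\,\proto_{i,r}+(1-\lambda)\,\proto_{j,r'}$; hence, writing the centered feature $z_{i,k}:=\mathbf{x}_{i,k}-(\proto_{i,1}+\proto_{i,2})/2$, a short computation gives $\mathbf{x}_{i,k,cr}-(\proto_{1,cr}+\proto_{2,cr})/2=\lambda\,z_{i,k;r}+(1-\lambda)\,z_{j,k';r'}$. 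So every summand of $\loss_t(\{\mathcal{D}_{i,cr}\}_{i=1}^{|I|})$ equals $g\!\left(\langle\lambda z_{i,k;r}+(1-\lambda)z_{j,k';r'},\theta\rangle\right)$ with $g(u)=1/(1+e^{u})$ — exactly the functional form handled in Lemma~\ref{thm:gbml-reg} after the substitution $\mathbf{h}^1\mapsto z$, $\phi_i\mapsto\theta$. I would also record the exact identity that, for equal-size classes, the average of $z_{i,k;r}$ over the two classes of a task vanishes; averaging over the random class pairing this forces $\mathbb{E}_{j,k',r'}[z_{j,k';r'}]=0$, which is what will remove the cross terms later.

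The remaining two steps are mechanical and mirror the proof of Lemma~\ref{thm:gbml-reg}. First I would symmetrize $\lambda$: averaging over the unordered task/class pair and $\lambda\sim\mathrm{Beta}(\alpha,\beta)$ equals averaging over ordered pairs with $\lambda\sim\mathcal{D}_\lambda=\frac{\alpha}{\alpha+\beta}\mathrm{Beta}(\alpha+1,\beta)+\frac{\beta}{\alpha+\beta}\mathrm{Beta}(\beta+1,\alpha)$ (the reweighting of \citep{zhang2020does}), so that the ``self'' feature $z_{i,k}$ carries weight $\lambda$ with mean $\bar\lambda=\mathbb{E}_{\mathcal{D}_\lambda}[\lambda]$. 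Then I would Taylor-expand $g$ to second order, writing the argument as $\bar\lambda\langle z_{i,k},\theta\rangle+\delta$ with $\delta=\langle(\lambda-\bar\lambda)z_{i,k}+(1-\lambda)z_{j,k'},\theta\rangle$. The zeroth-order term sums to $\loss_t(\bar\lambda\{\mathcal{D}_i\}_{i=1}^{|I|})$; the first-order term has coefficient $\mathbb{E}[\delta]=0$, since $\mathbb{E}_{\mathcal{D}_\lambda}[\lambda-\bar\lambda]=0$ and $\mathbb{E}_{j,k'}[z_{j,k'}]=0$; the second-order term is governed by the same second-derivative factor as in Lemma~\ref{thm:gbml-reg} (i.e.\ $\psi$ evaluated near $\langle\mathbf{x}_{i,k}-(\proto_1+\proto_2)/2,\theta\rangle$) times $\mathbb{E}[\delta^2]$, and in $\mathbb{E}[\delta^2]$ the $z_{i,k}$--$z_{j,k'}$ cross term again drops by the zero-mean property, leaving a task-internal $\mathrm{Var}_{\mathcal{D}_\lambda}(\lambda)\langle z_{i,k},\theta\rangle^2$ piece plus the between-task quadratic form $\mathbb{E}_{\mathcal{D}_\lambda}[(1-\lambda)^2]\,\theta^\top\!\big(\mathbb{E}_{j,k',r'}[z_{j,k';r'}z_{j,k';r'}^\top]\big)\theta$. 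Identifying $\mathbb{E}_{j,k',r'}[z z^\top]$ with $\frac{1}{|I|}\sum_i\frac{1}{2}\sum_{r=1}^2\frac{1}{N_r}\sum_k\mathbf{x}_{i,k;r}\mathbf{x}_{i,k;r}^\top$ (the between-task aggregated second moment, up to the centering), folding the $\lambda$-moments into a constant $c>0$, and absorbing the task-internal piece into the baseline yields Eqn.~\eqref{eq:mbml_approx}.

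The step I expect to be the main obstacle is making the ``second-order approximation'' precise: bounding the third-order Taylor remainder uniformly over the (bounded) $\mathbf{x}_{i,k}$ and $\theta$, and carefully tracking which lower-order contributions are absorbed into $\loss_t(\bar\lambda\{\mathcal{D}_i\}_{i=1}^{|I|})$ versus the constant $c$ — in particular the residual $\mathrm{Var}(\lambda)\langle z_{i,k},\theta\rangle^2$ term and the support-versus-query mismatch in the definition of $z$ (prototypes are estimated from the support set while query samples come from the query set), which must be argued to be of the same order as the remainder. By contrast, Step~1 — the reduction to a centered-feature Mixup loss — is the conceptually new element but only a routine computation.
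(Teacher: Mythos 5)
Your proposal follows essentially the same route as the paper's proof: a second-order Taylor expansion of the interpolated loss around (a $\bar\lambda$-scaled version of) the original point, with the first-order term killed by the zero-mean/centering of the features and the second-order term producing $\psi(\cdot)$ times the aggregated second-moment quadratic form, with the $\lambda$-moments folded into $c$. Your explicit reduction via prototype linearity (so that the centered interpolated feature is the interpolation of centered features) and your bookkeeping of the residual $\mathrm{Var}(\lambda)$ term are in fact spelled out more carefully than in the paper, which handles these points implicitly at the same level of rigor.
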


Similar to the last section, we assume that the distribution of $\mathbf{x}$ is \textit{$\rho$-retentive} for some \begin{small}$\rho\in(0, 1/2]$\end{small}, and  investigate the following function class: let \begin{small}$\Sigma_X=\E[\mathbf{x}\mathbf{x}^\top]$\end{small},% that is closely related to the dual problem of Eqn.~\eqref{eq:mbml_approx}:
\begin{equation}
\label{eq:mbml_reg}
\small
\mathcal{W}_\gamma:=\{\mathbf{x}\rightarrow \theta^\top \mathbf{x},~\text{such that}~\theta~\text{satisfying}~\E_{\mathbf{x}}\left[\psi(\langle \mathbf{x}-(\proto_{1}+\proto_{2})/2, \theta \rangle)\right]\cdot\theta^\top\Sigma_X\theta\leq \gamma\}.
\end{equation}

We then have the following theorem on the explicit generalization bound of ProtoNet. 
\begin{Theorem}\label{thm:protonet}
Suppose \begin{small}$\mathbf{X}_i$'s, $\mathbf{Y}_i$'s\end{small} and $\theta$ are both bounded  in spectral norm, and the distribution of $\mathbf{x}$ is $\rho$-retentive and mean zero. Let \begin{small}$r_\Sigma=rank(\Sigma_X)$\end{small}, then there exist constants \begin{small}$B_1,B_2,B_3>0$\end{small}, for any \begin{small}$f\in \mathcal{W}_\gamma, \delta\in(0,1)$\end{small}, with probability at least $1-\delta$ (over the training sample),  such that 
\begin{equation*}
\small
\begin{aligned}
|\mathcal{R}(\{\mathcal{D}_{i}\}_{i=1}^{|I|})-\mathcal{R}|&\leq 2B_1\cdot \max\{(\frac{\gamma}{\rho})^{1/4},(\frac{\gamma}{\rho})^{1/2}\}\cdot\left(\sqrt{\frac{r_\Sigma}{|I|}}+\sqrt{\frac{r_\Sigma}{N}}\right)+B_2\sqrt{\frac{\log(1/\delta)}{2|I|}}+B_3\sqrt\frac{\log(|I|/\delta)}{N}.
\end{aligned}
\end{equation*}
\end{Theorem}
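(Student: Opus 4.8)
The plan is to combine the implicit-regularization characterization of Lemma~\ref{thm:protonet-reg} with a two-level uniform-convergence argument over the function class $\mathcal{W}_\gamma$ of Eqn.~\eqref{eq:mbml_reg}: one level controls the within-task sampling error over the $N$ query points, the other the task-level sampling error over the $|I|$ i.i.d. meta-training tasks. Throughout I use the reduced ProtoNet loss $\ell_\theta(\mathbf{x})=(1+\exp(\langle \mathbf{x}-(\proto_1+\proto_2)/2,\theta\rangle))^{-1}$ from Eqn.~\eqref{eq:loss_mbml}; since its derivative is $-\psi$, it is $\tfrac{1}{4}$-Lipschitz in the scalar argument $\langle \mathbf{x}-(\proto_1+\proto_2)/2,\theta\rangle$, so by Talagrand contraction every Rademacher bound reduces to a Rademacher bound for a linear statistic of $\mathbf{x}$.

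First I would translate the constraint defining $\mathcal{W}_\gamma$ into a plain ellipsoid constraint on $\theta$. The shift $(\proto_1+\proto_2)/2$ is bounded under the spectral-norm assumption (and vanishes for balanced, mean-zero tasks), so it can be absorbed into an absolute constant when invoking the $\rho$-retentive inequality and does not affect the Rademacher complexity of the linear part; thus it suffices, up to constants, to treat the constraint as $\E_{\mathbf{x}}[\psi(\langle\mathbf{x},\theta\rangle)]\cdot\theta^\top\Sigma_X\theta\le\gamma$. Applying $\rho$-retentiveness with $v=\theta$ and using $\E(\theta^\top\mathbf{x})^2=\theta^\top\Sigma_X\theta$ (mean-zeroness) gives $\E_{\mathbf{x}}[\psi(\langle\mathbf{x},\theta\rangle)]\ge\sqrt{\rho}\,\min\{1,\sqrt{\theta^\top\Sigma_X\theta}\}$. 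Substituting this into the constraint and solving the resulting one-variable inequality in $t=\theta^\top\Sigma_X\theta$ yields $\sqrt{\theta^\top\Sigma_X\theta}\le B_\gamma:=c_0\max\{(\gamma/\rho)^{1/4},(\gamma/\rho)^{1/2}\}$ for an absolute constant $c_0$; hence every $f\in\mathcal{W}_\gamma$ is parameterized by a $\theta$ in the $\Sigma_X$-ellipsoid of radius $B_\gamma$.

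Next I would bound the two terms in the decomposition $|\mathcal{R}(\{\mathcal{D}_i\})-\mathcal{R}|\le|\mathcal{R}(\{\mathcal{D}_i\})-\tfrac{1}{|I|}\sum_i\mathcal{R}_{\task_i}|+|\tfrac{1}{|I|}\sum_i\mathcal{R}_{\task_i}-\mathcal{R}|$, where $\mathcal{R}_{\task_i}=\E_{\task_i}[\ell_\theta]$ is the in-task population risk. For the first (within-task) term, fix $\theta$ in the ellipsoid and a task: replacing the empirical prototypes by their population counterparts costs $O(B_\gamma\sqrt{r_\Sigma/N})$ because prototypes are $N_r$-sample means and $\ell_\theta$ is Lipschitz, and the linear class $\{\mathbf{x}\mapsto\langle\mathbf{x},\theta\rangle:\theta^\top\Sigma_X\theta\le B_\gamma^2\}$ has empirical Rademacher complexity at most $O(B_\gamma\sqrt{r_\Sigma/N})$ by Cauchy--Schwarz together with $\mathrm{tr}(\widehat{\Sigma}^{\dagger}\widehat{\Sigma})=r_\Sigma$. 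Symmetrization and McDiarmid then give, uniformly over the ellipsoid and for one fixed task, an $O(B_\gamma\sqrt{r_\Sigma/N}+\sqrt{\log(1/\delta')/N})$ bound with probability $1-\delta'$; taking $\delta'=\delta/|I|$ and a union bound over the $|I|$ tasks produces the $B_1 B_\gamma\sqrt{r_\Sigma/N}$ and $B_3\sqrt{\log(|I|/\delta)/N}$ contributions. For the second (task-level) term, $\task\mapsto\mathcal{R}_{\task}(\theta)$ is again a $\tfrac{1}{4}$-Lipschitz function of a linear statistic of the task's feature distribution, so its Rademacher complexity over the $|I|$ i.i.d. tasks is $O(B_\gamma\sqrt{r_\Sigma/|I|})$ (with the task-averaged $\Sigma_X$ playing the role of the second-moment matrix), and McDiarmid over the i.i.d. tasks adds $O(\sqrt{\log(1/\delta)/|I|})$; this yields the $B_1 B_\gamma\sqrt{r_\Sigma/|I|}$ and $B_2\sqrt{\log(1/\delta)/(2|I|)}$ terms. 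Summing the two and renaming constants gives exactly the claimed inequality.

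The step I expect to be the main obstacle is controlling the randomness of the prototypes: since $\proto_1,\proto_2$ are data-dependent and enter $\ell_\theta$ nonlinearly, one must show both that replacing them by population means costs only $O(\sqrt{r_\Sigma/N})$ (rather than something dimension-dependent) and that the $\rho$-retentive inequality still applies after this recentering — this is precisely where the mean-zero and spectral-norm boundedness assumptions do the real work. Once the class is reduced to the $\Sigma_X$-ellipsoid of radius $B_\gamma$ in the first step, the remaining contraction, symmetrization, and concentration steps are routine and parallel the two-level linear-representation analyses in \citep{du2020few,tripuraneni2020provable}.
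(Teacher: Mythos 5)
Your proposal follows essentially the same route as the paper's proof: the same decomposition of $\mathcal{R}(\{\mathcal{D}_i\}_{i=1}^{|I|})-\mathcal{R}$ into a within-task term (bounded via the Rademacher complexity of $\mathcal{W}_\gamma$ at rate $\max\{(\gamma/\rho)^{1/4},(\gamma/\rho)^{1/2}\}\sqrt{r_\Sigma/N}$ plus a union bound over the $|I|$ tasks giving $\sqrt{\log(|I|/\delta)/N}$) and a task-level term (Rademacher complexity over the $|I|$ i.i.d.\ tasks at rate $\max\{(\gamma/\rho)^{1/4},(\gamma/\rho)^{1/2}\}\sqrt{r_\Sigma/|I|}$ plus $\sqrt{\log(1/\delta)/|I|}$ via the standard uniform-deviation lemma). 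The only difference is that you derive the reduction of $\mathcal{W}_\gamma$ to a $\Sigma_X$-ellipsoid from the $\rho$-retentive condition explicitly (and carefully flag the data-dependence of the prototypes), whereas the paper imports these steps wholesale from \citet{zhang2020does}; your treatment is, if anything, more complete on those points.
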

\revision{By Theorem~\ref{thm:protonet}, adding MLTI into ProtoNet would induce a small value of $\gamma$ and thus improve the generalization compared to the vanilla ProtoNet. Similarly, MLTI achieves tighter generalization bound than the individual task augmentation with a larger regularization term (i.e., smaller $\gamma$).}

\section{Related Work}
The goal of meta-learning is to enable few-shot generalization of machine learning algorithms by transferring the knowledge acquired from related tasks. One approach is gradient-based meta-learning~\citep{finn2017meta,finn2017model,finn2018probabilistic,grant2018recasting,flennerhag2019meta,lee2018gradient,li2017meta,oh2021boil,nichol2018reptile,rajeswaran2019meta,rusu2018meta}, where the meta-knowledge is formulated to be optimization-related parameters (e.g., model initial parameters, learning rate, pre-conditioning matrix). During the meta-training stage, the model is first adapted to each task via a truncated optimization and then the optimization-related parameters are optimized by maximizing the generalization performance of the model. Another line of research is metric-based meta-learning~\citep{cao2020concept,garcia2017few,liu2018transductive,mishra2018simple,snell2017prototypical,vinyals2016matching,yang2018learning,yoon2019tapnet}, which meta-learns an embedding space and uses a non-parametric learner to classify samples. Unlike prior works that propose new meta-learning algorithms, this work aims to improve the task-level generalization of these algorithms and reduce the negative effect of memorization, especially when the number of meta-training tasks is limited.

To mitigate the influence of memorization and improve the generalization, one line of research focuses on directly imposing regularization on meta-learning algorithms~\citep{guiroy2019towards,jamal2019task,tseng2020regularizing,yin2020meta}. Another line of research reduces the number of adapted parameters for gradient-based meta-learning~\citep{raghu2020rapid,zintgraf2019fast}. Instead of imposing regularization strategies (i.e., objectives, dropout, less adapted parameters), our approach focuses on augmenting the set of tasks for meta-training. Prior works have proposed domain-specific techniques to generate more data by augmenting images~\citep{chen2019image} or by reconstructing tasks with latent reasoning categories for NLP-related tasks~\citep{murty2021dreca}. Recent domain-agnostic techniques have augmented tasks by imposing label noise~\citep{rajendran2020meta} or applying Mixup~\citep{zhang2017mixup} and its variants (e.g., Manifold Mixup~\citep{verma2019manifold}) to each task~\citep{ni2020data,yao2020improving}. Unlike these domain-agnostic augmentation strategies that applying data augmentation on each task individually (Figure~\ref{fig:method_illustration}(b)), we directly densify the task distribution by generating additional tasks from pairs of existing tasks (Figure~\ref{fig:method_illustration}(c)). \revision{More discussions with individual task augmentation are provided in Appendix~\ref{sec:discussion_within}}. Empirically, we find that MLTI outperforms all of these above strategies in Section~\ref{sec:exp}. 
\section{Experiments}
\label{sec:exp}
In this section, we conduct experiments to test and understand the effectiveness of MLTI. Specifically, we aim to answer the following research questions under both label-sharing and non-label-sharing settings: \textbf{Q1}: Compared with prior methods for regularizing meta-learning, how does the MLTI perform? \textbf{Q2}: Is MLTI compatible with different backbone meta-learning algorithms and does it improve their performance? \textbf{Q3}: How does MLTI perform compared with only applying intra- or cross-task interpolation? \textbf{Q4}: How does the number of tasks affect the performance of MLTI? 

\begin{table*}[t]
\small
\caption{Overall performance (averaged accuracy/MSE (Pose) $\pm$ 95\% confidence interval) under label-sharing scenario. MLTI consistently improves the performance under the label-sharing scenario.}
\vspace{-1em}
\label{tab:overall_ls}
\begin{center}
\setlength{\tabcolsep}{1.1mm}{
\begin{tabular}{ll|c|c|c|c}
\toprule
Backbone & Strategies & Pose (15-shot) & RMNIST (1-shot) & NCI (5-shot) & Metabolism (5-shot) \\\midrule
\multirow{8}{*}{MAML} & Vanilla & 2.383 $\pm$ 0.087 & 57.34 $\pm$ 1.25\% & 77.09 $\pm$ 0.85\%  & 57.22 $\pm$ 1.01\% \\
& Meta-Reg & 2.358 $\pm$ 0.089 & 58.10 $\pm$ 1.15\% & 77.34 $\pm$ 0.87\% & 58.00 $\pm$ 0.96\% \\
& TAML & 2.208 $\pm$ 0.091 & 56.21 $\pm$ 1.46\% & 76.50 $\pm$ 0.87\% & 57.87 $\pm$ 1.05\%\\
& Meta-Dropout & 2.501 $\pm$ 0.090 & 56.19 $\pm$ 1.39\% & 77.21 $\pm$ 0.82\% & 57.53 $\pm$ 1.02\%  \\
& MetaAug & 2.296 $\pm$ 0.080 & 55.58 $\pm$ 0.97\% & 76.31 $\pm$ 0.98\% & 56.65 $\pm$ 1.00\% \\
& MetaMix & 2.064 $\pm$ 0.075 & 64.60 $\pm$ 1.14\% & 76.88 $\pm$ 0.73\% & 58.61 $\pm$ 1.03\% \\
& Meta-Maxup & 2.107 $\pm$ 0.077 & 62.13 $\pm$ 1.08\% & 77.90 $\pm$ 0.79\% & 58.43 $\pm$ 0.99\% \\\cmidrule{2-6}
& \textbf{MLTI (ours)} & \textbf{1.976 $\pm$ 0.073} & \textbf{65.92 $\pm$ 1.17\%} & \textbf{79.14 $\pm$ 0.73\%} & \textbf{60.28 $\pm$ 1.00\%} \\\midrule
\multirow{4}{*}{ProtoNet} & MetaAug & n/a & 65.41 $\pm$ 1.10\% & 74.84 $\pm$ 0.87\% & 61.06 $\pm$ 0.94\% \\
& MetaMix & n/a & 67.80 $\pm$ 0.97\% & 75.84 $\pm$ 0.85\% & 62.04 $\pm$ 0.93\% \\
& Meta-Maxup & n/a & 66.18 $\pm$ 1.08\% & 75.65 $\pm$ 0.84\% & 61.36 $\pm$ 0.91\% \\\cmidrule{2-6}
& \textbf{MLTI (ours)} & n/a & \textbf{70.14 $\pm$ 0.92\%} & \textbf{76.90 $\pm$ 0.81\%} & \textbf{63.47 $\pm$ 0.96\%} \\
\bottomrule
\end{tabular}}
\end{center}
\vspace{-1em}
\end{table*}

\begin{table*}[t]
\small
\caption{Ablation Studies under label-sharing scenario. The results are reported by the averaged accuracy/MSE $\pm$ 95\% confidence interval. }
\vspace{-1em}
\label{tab:ablation_ls}
\begin{center}
\setlength{\tabcolsep}{1.3mm}{
\begin{tabular}{ll|c|c|c|c}
\toprule
Backbone & Strategies & Pose (15-shot) & RMNIST (1-shot) & NCI (5-shot) & Metabolism (5-shot) \\\midrule
\multirow{4}{*}{MAML} & Vanilla & 2.383 $\pm$ 0.087 & 57.34 $\pm$ 1.25\% & 77.09 $\pm$ 0.85\% & 57.22 $\pm$ 1.01\% \\
& Intra-Intrpl & 2.072 $\pm$ 0.077 & 62.57 $\pm$ 1.70\% & 78.23 $\pm$ 0.78\% &  58.70 $\pm$ 0.97\% \\
& Cross-Intrpl  & 2.017 $\pm$ 0.072 & 65.34 $\pm$ 1.78\% & 78.64 $\pm$ 0.80\% & 59.60 $\pm$ 1.00\% \\\cmidrule{2-6}
\cmidrule{2-6}
& \textbf{MLTI} & \textbf{1.976 $\pm$ 0.073} & \textbf{65.92 $\pm$ 1.17\%} & \textbf{79.14 $\pm$ 0.73\%} & \textbf{60.28 $\pm$ 1.00\%} \\\midrule
\multirow{4}{*}{ProtoNet} & Vanilla & n/a & 65.41 $\pm$ 1.10\% & 74.84 $\pm$ 0.87\% & 61.06 $\pm$ 0.94\% \\
& Intra-Intrpl & n/a & 67.32 $\pm$ 0.94\% & 75.26 $\pm$ 0.87\% & 61.66 $\pm$ 0.88\%  \\
& Cross-Intrpl & n/a & 69.97 $\pm$ 0.85\% & 76.32 $\pm$ 0.85\% & 62.48 $\pm$ 0.91\% \\\cmidrule{2-6}
& \textbf{MLTI} & n/a & \textbf{70.14 $\pm$ 0.92\%} & \textbf{76.90 $\pm$ 0.81\%} & \textbf{63.47 $\pm$ 0.96\%} \\
\bottomrule
\end{tabular}}
\vspace{-1.5em}
\end{center}
\end{table*}
We compare MLTI with the following two representative domain-agnostic strategies: (1) directly imposing regularization into the meta-learning framework, including Meta-Reg~\citep{yin2020meta}, TAML~\citep{jamal2019task}, and Meta-dropout~\citep{lee2020meta}; and (2) individual task augmentation methods, including Meta-Augmentation~\citep{rajendran2020meta}, MetaMix~\citep{yao2020improving}, and Meta-Maxup~\citep{ni2020data}. We select MAML and ProtoNet as backbone methods and apply the corresponding meta-learning strategies to them according to their applicable scopes. Note that we also extend MetaMix and Meta-Maxup to ProtoNet, even though the methods only focus on gradient-based meta-learning in the original papers. To further test the compatibility of MLTI, we additionally apply MLTI to other meta-learning backbone algorithms, including MetaSGD~\citep{li2017meta}, ANIL~\citep{raghu2020rapid}, Meta-Curvature (MC)~\citep{park2019meta}, and MatchingNet~\citep{vinyals2016matching}. To provide a fair comparison, all methods use the same architecture of the base model as MLTI \revision{and all interpolation-based methods use the same interpolation strategies} (see Appendix~\ref{sec:app_experiment_ls} and~\ref{sec:app_experiment_nls} for details). 

\subsection{Label-sharing Scenario}
\textbf{Datasets and experimental setup.} Under the label-sharing scenario, we perform experiments on four datasets to evaluate the performance of MLTI: (1) PASCAL3D Pose regression (Pose)~\citep{yin2020meta}: it aims to predict the object pose of a grey-scale image relative to the canonical orientation. Following~\citet{yin2020meta}, we select 50 objects for meta-training and 15 objects for meta-testing; (2) RainbowMNIST (RMNIST)~\citep{finn2019online}: it is a 10-way classification dataset wherein each task is constructed by applying a combination of image transformation operators on the original MNIST dataset (e.g., scaling, coloring, rotation). We here use 14 and 10 combinations for meta-training and meta-testing, respectively. (3)\&(4) NCI~\citep{NCI} and TDC Metabolism (Metabolism)~\citep{tdc}: both are 2-way chemical classification datasets, which aim to predict the property of a set of chemical compounds. We use six data sources for meta-training, and the remaining three sources for meta-testing. The number of shots for the above four datasets are set as 15, 1, 5, and 5, respectively. More details on the datasets and set-up are provided in Appendix~\ref{sec:app_experiment_ls}. We adopt MSE to measure the performance for the Pose regression dataset and accuracy for the classification datasets.

\textbf{Results.} Under the label-sharing scenario, we report the overall performance and analyze the compatibility of MLTI in Table~\ref{tab:overall_ls} and Appendix~\ref{sec:app_compatibility_ls}, respectively. According to Table~\ref{tab:overall_ls}, we observe that MLTI outperforms other regularization strategies across the board, including passively adding regularization (i.e., Meta-Reg, TAML, Meta-Dropout) and augmenting tasks individually (i.e., Meta-Aug, MetaMix, Meta-Maxup). These results indicate that MLTI consistently improves generalization through interpolation on the task distribution. 
The claim is further be strengthened by the compatibility analysis (Appendix~\ref{sec:app_compatibility_ls}), where MLTI boosts the performance of a variety of meta-learning algorithms. 
We also investigate the effect of the number of meta-training tasks and report the performance in Appendix~\ref{sec:app_task_num_ls}. We observe that the improvements from MLTI are robust under different settings but that the greatest improvements come when the number of tasks is limited.

\textbf{Ablation study.} 
In Table~\ref{tab:ablation_ls}, we conduct an ablation study under the label-sharing scenario. Here, we investigate how MLTI performs compared with only applying intra-task interpolation (i.e., \begin{small}$\mathcal{T}_i=\mathcal{T}_j$\end{small}) and cross-task interpolation (i.e., \begin{small}$\mathcal{T}_i\neq\mathcal{T}_j$\end{small}), which are denoted as Intra-Intrpl and Cross-Intrpl, respectively. We observe that both Intra-Intrpl and Cross-Intrpl outperform the vanilla approach without task augmentation and that MLTI achieves the best performance, indicating that the strategies are complementary to some degree. In addition, cross-interpolation outperforms the intra-interpolation in most datasets. The results corroborate the effectiveness of cross-task interpolation when tasks are sparsely sampled from the data distribution. 

\subsection{Non-label-sharing Scenario}\label{sec:exp:nls}
\textbf{Datasets and experimental setup.} Under the non-label-sharing scenario, we conduct experiments on four datasets: (1) general image classification on miniImagenet~\citep{vinyals2016matching}; (2)\&(3) medical image classification on ISIC~\citep{milton2019automated} and DermNet~\citep{Dermnet}; and (4) cell type classification across organs on Tabular Murris~\citep{cao2020concept}. Since a task in meta-learning is defined to correspond to a particular data-generating distribution~\citep{finn2017model,rajeswaran2019meta}, the number of distinct meta-training tasks in $N$-way classification is actually the number of ways to choose $N$ from all base classes. Thus, for miniImagenet and Dermnet, we reduce the number tasks by limiting the number of meta-training classes (a.k.a., base classes) and obtain the \emph{miniImagenet-S}, \emph{ISIC}, \emph{DermNet-S}, \emph{Tabular Murris} benchmarks, whose base classes are 12, 4, 30, 57, respectively (\revision{see the experiments on full-size miniImagenet and DermNet in Appendix~\ref{sec:app_full_size}}). The experiments are performed under the $N$-way $K$-shot setting~\citep{finn2017meta}, where $N=2$ for ISIC and $N=5$ for the rest datasets. Note that, Meta-Aug~\citep{rajendran2020meta} under the non-label-sharing scenario is exactly the same as the label shuffling, which is already adopted in vanilla MAML and ProtoNet. Due to space limitations, we report only the accuracy for the non-label-sharing scenario here and provide the full table with 95\% confidence intervals in Appendix~\ref{sec:app_full_table}. More details about the datasets and set-up are in Appendix~\ref{sec:app_experiment_nls}.

\begin{table*}[t]
\small
\caption{Overall performance (averaged accuracy) under the non-label-sharing scenario. MLTI outperforms other strategies and improves the generalization ability.}
\vspace{-1em}
\label{tab:overall_nls}
\begin{center}
\setlength{\tabcolsep}{1.mm}{
\begin{tabular}{ll|cc|cc|cc|cc}
\toprule
\multirow{2}{*}{Backbone}  & \multirow{2}{*}{Strategies} & \multicolumn{2}{c|}{miniImagenet-S} & \multicolumn{2}{c|}{ISIC} & \multicolumn{2}{c|}{DermNet-S} 
%%CF.5.18: is this supposed to be DermNet-S?
& \multicolumn{2}{c}{Tabular Murris}\\
& & 1-shot & 5-shot & 1-shot & 5-shot & 1-shot & 5-shot & 1-shot & 5-shot\\\midrule
\multirow{7}{*}{MAML} & Vanilla & 38.27\% & 52.14\%  & 57.59\%  & 65.24\% & 43.47\% & 60.56\% & 79.08\% & 88.55\% \\
& Meta-Reg & 38.35\% & 51.74\% & 58.57\% & 68.45\% & 45.01\% & 60.92\% & 79.18\% & 89.08\% \\
& TAML & 38.70\% & 52.75\% & 58.39\% & 66.09\% & 45.73\% & 61.14\% & 79.82\% & 89.11\% \\
& Meta-Dropout & 38.32\% & 52.53\% & 58.40\% & 67.32\% & 44.30\% & 60.86\% & 78.18\% &  89.25\% \\
& MetaMix & 39.43\% & 54.14\% & 60.34\% & 69.47\% & 46.81\% & 63.52\% & 81.06\% & 89.75\% \\
& Meta-Maxup & 39.28\% & 53.02\% & 58.68\% & 69.16\% & 46.10\% & 62.64\% & 79.56\% & 88.88\% \\\cmidrule{2-10}
& \textbf{MLTI (ours)} & \textbf{41.58\%} & \textbf{55.22\%} & \textbf{61.79\%} & \textbf{70.69\%} & \textbf{48.03\%} & \textbf{64.55\%} & \textbf{81.73\%} & \textbf{91.08\%} \\\midrule
\multirow{4}{*}{ProtoNet} & Vanilla  & 36.26\% &  50.72\% & 58.56\% & 66.25\% & 44.21\% & 60.33\% & 80.03\% &  89.20\% \\
& MetaMix & 39.67\% & 53.10\% & 60.58\% & 70.12\% & 47.71\% & 62.68\% & 80.72\% & 89.30\% \\
& Meta-Maxup & 39.80\% & 53.35\% & 59.66\% & 68.97\% & 46.06\% & 62.97\% & 80.87\% & 89.42\% \\\cmidrule{2-10}
& \textbf{MLTI (ours)}  & \textbf{41.36\%} & \textbf{55.34\%} & \textbf{62.82\%} & \textbf{71.52\%} & \textbf{49.38\%} & \textbf{65.19\%} & \textbf{81.89\%} & \textbf{90.12\%} \\
\bottomrule
\end{tabular}}
\end{center}
\vspace{-2em}
\end{table*}

\begin{wraptable}{r}{0.55\textwidth}
\small
\caption{Cross-domain adaptation under the non-label-sharing scenario. A $\rightarrow$ B represents that the model is meta-trained on A and then is meta-tested on B.}
\vspace{-1em}
\label{tab:cross_nls}
\begin{center}
\setlength{\tabcolsep}{1.2mm}{
\begin{tabular}{ll|cc|cc}
\toprule
\multirow{2}{*}{Model} &  & \multicolumn{2}{c|}{mini $\rightarrow$ Dermnet} & \multicolumn{2}{c}{Dermnet $\rightarrow$ mini}\\
&  & 1-shot & 5-shot & 1-shot & 5-shot\\\midrule

\multirow{2}{*}{MAML} &  & 33.67\% & 50.40\% & 28.40\% & 40.93\% \\
& +MLTI & \textbf{36.74\%} & \textbf{52.56\%} & \textbf{30.03\%} & \textbf{42.25\%}   \\\midrule

\multirow{2}{*}{ProtoNet} & & 33.12\% & 50.13\% & 28.11\% & 40.35\% \\
& +MLTI & \textbf{35.46\%} & \textbf{51.79\%} &  \textbf{30.06\%} & \textbf{42.23\%} \\
\bottomrule
\end{tabular}}
\end{center}
\vspace{-1.2em}
\end{wraptable}
\textbf{Results.} 
Table~\ref{tab:overall_nls} gives the results of MLTI and prior methods. MLTI consistently outperforms other strategies. The performance gains suggest that MLTI can improve the generalization ability of meta-learning amidst sparsely sampled task distributions. We also analyze of compatibility of MLTI under the non-label-sharing scenario in Table~\ref{tab:compatibility_nls_app} of Appendix~\ref{sec:app_compatibility_nls}. The results validate that MLTI can robustly boost performance with different backbone methods. For the ablation study, we repeat the experiments on the non-label-sharing scenario and report the results in Table~\ref{tab:ablation_nls_app} of Appendix~\ref{sec:app_ablation_nls}. \revision{We additionally provide base model and hyperparameter analysis in Appendix~\ref{sec:app_hyperparameter}. MLTI achieves the best performance across various settings, further strengthen its effectiveness.}

\textbf{Cross-domain adaptation.} To further evaluate performance of MLTI, we conduct a comparison under the cross-domain adaptation setting where we meta-train the model on one source domain and evaluate it on another target domain. We perform cross-domain adaptation across miniImagenet-S and Dermnet-S and report the performance under MAML and ProtoNet in Table~\ref{tab:cross_nls}. The results validate that MLTI can improve generalization even in this more challenging setting. 

\begin{wrapfigure}{r}{0.5\textwidth}
	\centering
	\begin{subfigure}[c]{0.23\textwidth}
		\centering
		\includegraphics[width=\textwidth]{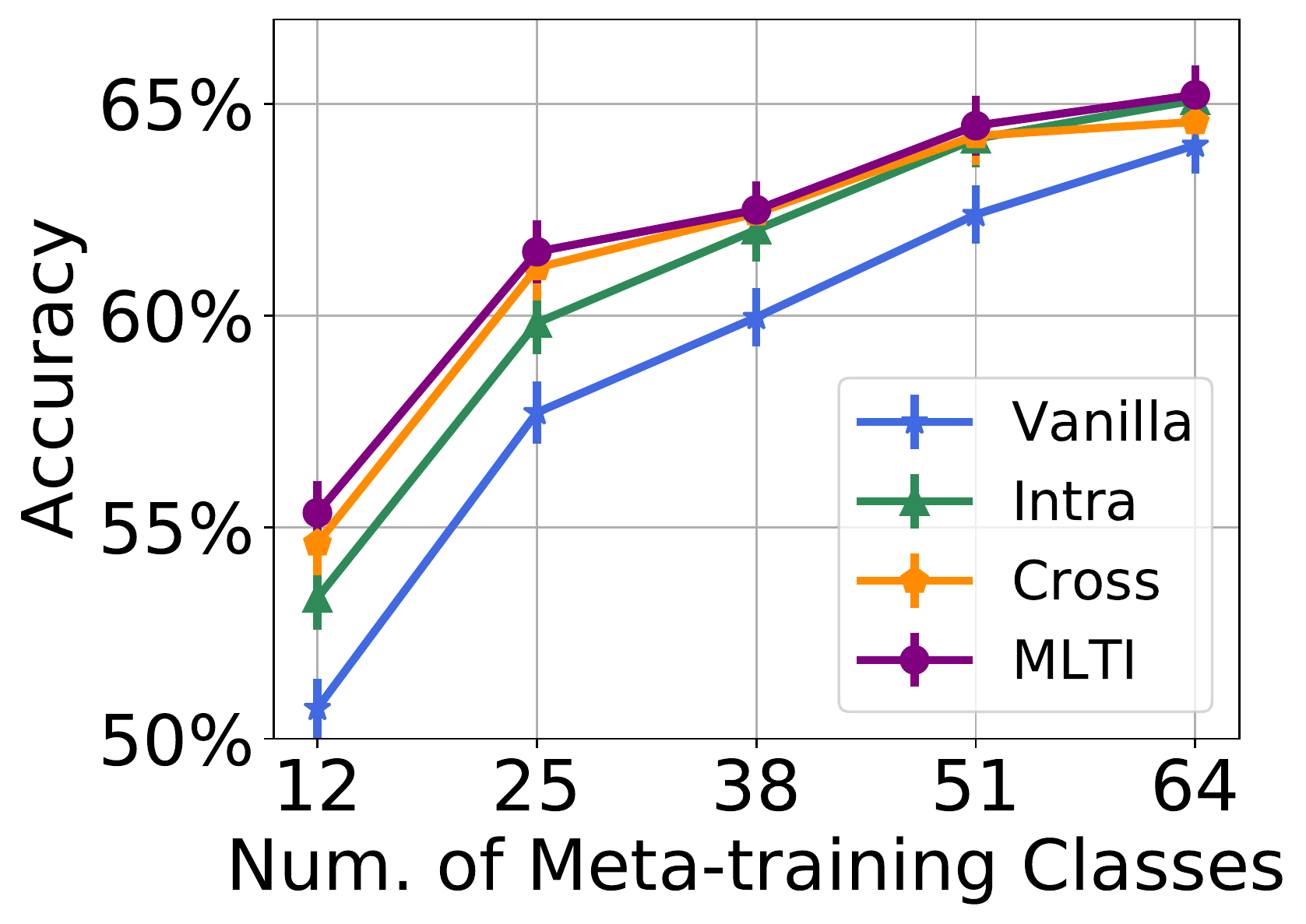}
		\caption{\label{fig:miniImagenet}: miniImagenet}
	\end{subfigure}
    \begin{subfigure}[c]{0.23\textwidth}
		\centering
		\includegraphics[width=\textwidth]{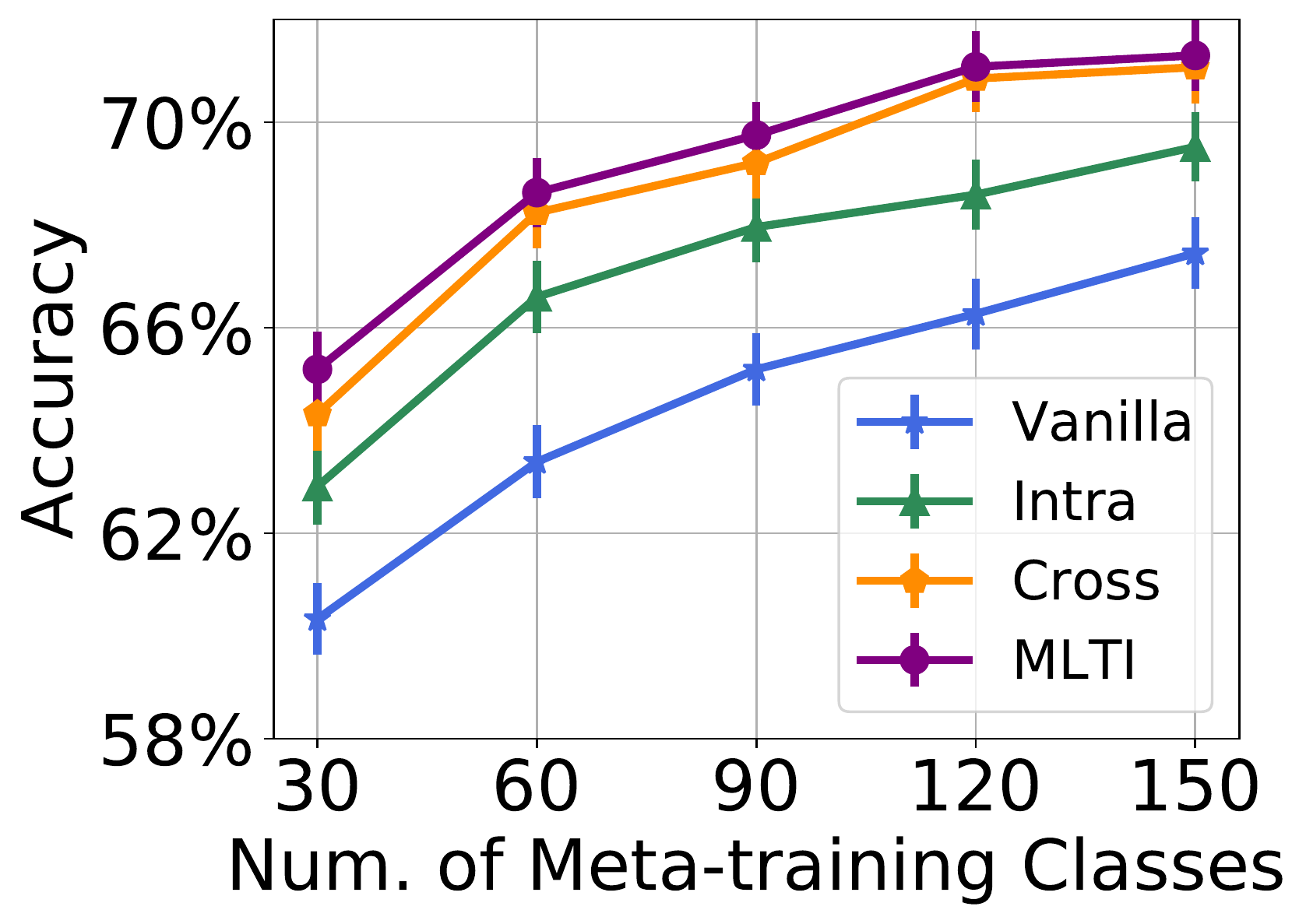}
		\caption{\label{fig:DermNet}: DermNet}
	\end{subfigure}
	%%CF.5.18: is there a way to add error bars to these?
	\caption{Accuracy w.r.t. the num. of tasks under the non-label-sharing scenario. Intra and Cross represent intra-task interpolation (i.e., \begin{small}$\mathcal{T}_i=\mathcal{T}_j$\end{small}) and cross-task interpolation (i.e., \begin{small}$\mathcal{T}_i\neq\mathcal{T}_j$\end{small}). }
	\label{fig:task_num}
	\vspace{-1em}
\end{wrapfigure}
\textbf{Effect of the number of meta-training tasks.} We analyze the effect of the number of tasks under 5-shot setting (with a ProtoNet backbone) in Figures~\ref{fig:miniImagenet} and~\ref{fig:DermNet} (see more results in Appendix~\ref{sec:app_task_num_nls}). We have two key observations: (1) MLTI consistently improves the performance for all numbers of tasks, showing its effectiveness and robustness; (2) The improvement gap between MLTI and the vanilla model decreases as the the number of tasks increases on miniImagenet, and keeps consistent on DermNet. We expect this is because the meta-training tasks may be more related to meta-testing tasks in miniImagenet, than in DermNet. Besides, we conduct an additional experiments in Appendix~\ref{sec:app_extremly_limited} to show the promise of MLTI when we only have extremely limited tasks. 

\begin{wrapfigure}{r}{0.31\textwidth}
\vspace{-1em}
\centering
	\includegraphics[width=0.3\textwidth]{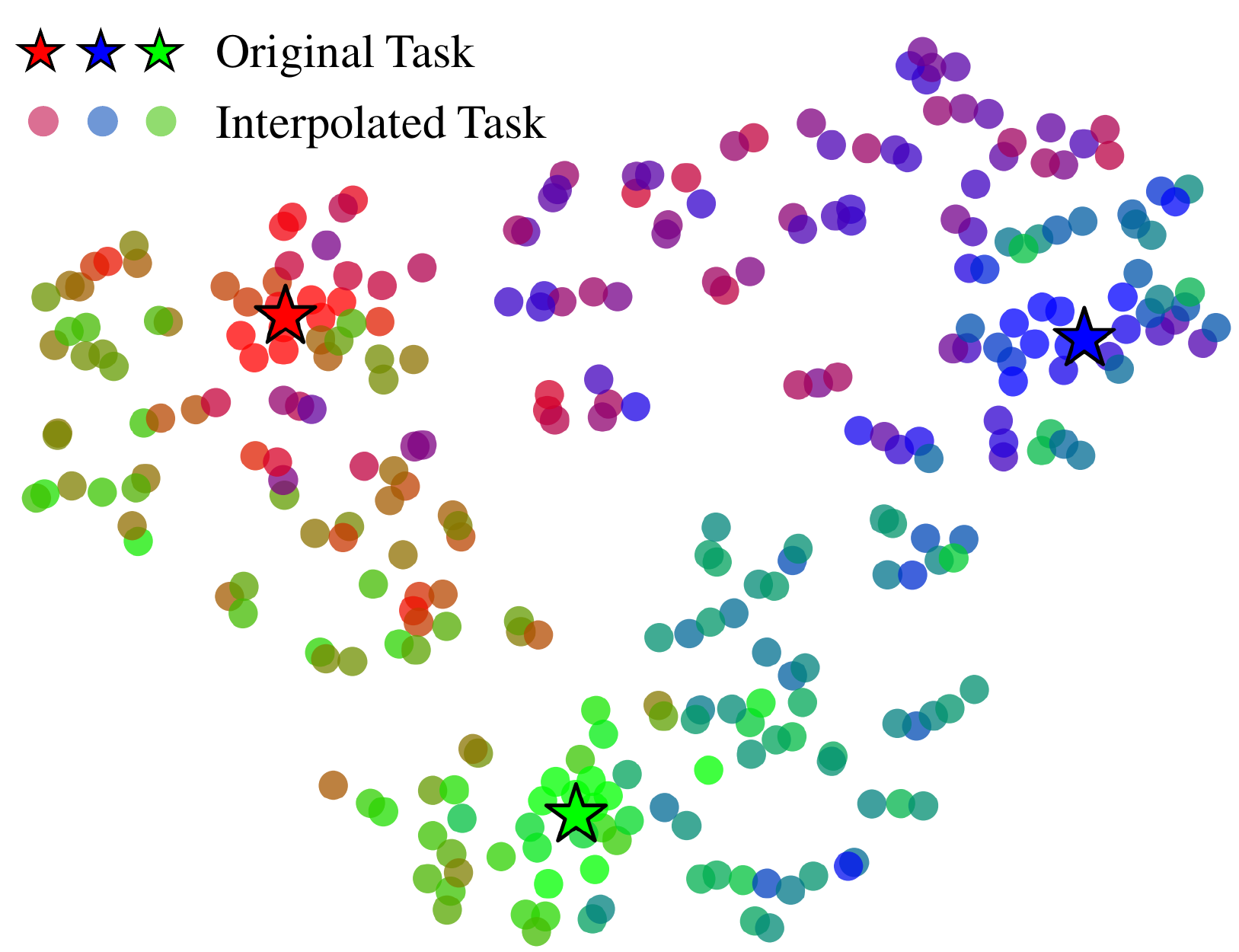}
	\caption{Visualization of the original and interpolated tasks.}
	\label{fig:vis_task}
	\vspace{-2em}
\end{wrapfigure}
\textbf{Analysis of Interpolated Tasks.} Building upon ProtoNet, we show the t-SNE~\citep{maaten2008visualizing} visualization of both original tasks and interpolated tasks in Figure~\ref{fig:vis_task}. Specifically, we randomly select 3 original tasks and 300 interpolated tasks under the 1-shot miniImagenet-S setting, where the color of each interpolated task indicates its proximity to the corresponding original tasks. Each task is represented by the averaged representation over its corresponding prototypes, where we combine both support and query sets to calculate the prototypes. The figure suggests that the interpolated tasks generated by MLTI indeed densify the task distribution and bridge the gap between different tasks. 
\section{Conclusion}
\label{sec:conclusion}
In this paper, we investigate the problem of meta-learning with fewer tasks and propose a new task interpolation strategy MLTI. The proposed MLTI targets the task distribution directly to generate more meta-training tasks via task interpolation for both label-sharing and non-label-sharing scenarios. The consistent performance gains across eight datasets demonstrate that MLTI improves the generalization of meta-learning algorithms especially when the number of available meta-training tasks is small, which is further supported by the theoretical analysis. 
% \clearpage
\section*{Reproducibility Statement}
For our theoretical results, a complete proof of all claims and the discussion of assumptions are provided in Appendix~\ref{sec:proof}. For our empirical results, we discuss the details of datasets and list all hyperparameters under the label-sharing scenario and non-label-sharing scenario in Appendix~\ref{sec:app_experiment_ls} and~\ref{sec:app_experiment_nls}, respectively. Code: \href{https://github.com/huaxiuyao/MLTI}{https://github.com/huaxiuyao/MLTI}.
\section*{Acknowledgement}
This work was supported in part by JPMorgan Chase \& Co and Juniper Networks. Any views or opinions expressed herein are solely those of the authors listed, and may differ from the views and opinions expressed by JPMorgan Chase \& Co., Juniper Networks or their affiliates. This material is not a product of the Research Department of J.P. Morgan Securities LLC and Juniper Networks. This material should not be construed as an individual recommendation for any particular client and is not intended as a recommendation of particular securities, financial instruments or strategies for a particular client. This material does not constitute a solicitation or offer in any jurisdiction. Linjun Zhang would like to acknowledge the support from  NSF DMS-2015378.
\bibliography{ref}
\bibliographystyle{plainnat}
\appendix
\section{Pseudocodes}
\label{sec:app_pseudocodes}
In this section, we show the pseudocodes for MLTI with MAML (meta-training process: Alg.~\ref{alg:mlti_maml_training}, meta-testing process: Alg.~\ref{alg:mlti_maml_testing}) and ProtoNet (meta-training process: Alg.~\ref{alg:mlti_protonet_training}, meta-testing process: Alg.~\ref{alg:mlti_protonet_testing}).
\begin{algorithm}[h]
    \caption{Meta-training Process of MAML with MLTI}
    \label{alg:mlti_maml_training}
    \begin{algorithmic}[1]
    \REQUIRE \begin{small}$p(\mathcal{T})$\end{small}: task distribution; \begin{small}$\eta$\end{small}, \begin{small}$\gamma$\end{small}: inner- and outer-loop learning rate; \begin{small}$L^s$\end{small}: the number of shared layers; Beta distribution
    \STATE Randomly initialize the model initial parameters \begin{small}$\theta$\end{small}
    \WHILE{not converge}
    \STATE Randomly sample a batch of tasks \begin{small}$\{\mathcal{T}_i\}_{i=1}^{|I|}$\end{small} with dataset
    \FOR{each task \begin{small}$\mathcal{T}_i$\end{small}}
    \STATE Sample a support set \begin{small}$\mathcal{D}^{s}_i=(\mathbf{X}_i^s, \mathbf{Y}_i^s)$\end{small} and a query set  \begin{small}$\mathcal{D}^{q}_i=(\mathbf{X}_i^q, \mathbf{Y}_i^q)$\end{small} from \begin{small}$\mathcal{D}_i$\end{small}
    \STATE Sample another task \begin{small}$\mathcal{T}_j$\end{small} (allow $i=j$) from \begin{small}$\{\mathcal{T}_i\}_{i=1}^{|I|}$\end{small} with corresponding support set \begin{small}$\mathcal{D}^{s}_j=(\mathbf{X}_j^s, \mathbf{Y}_j^s)$\end{small} and query set \begin{small}$\mathcal{D}^{q}_j=(\mathbf{X}_j^q, \mathbf{Y}_j^q)$\end{small}
    \STATE Random sample one layer \begin{small}$l$\end{small} from the shared layers
    \STATE Obtain the hidden representations \begin{small}$\mathbf{H}^{s,l}_i$\end{small}, \begin{small}$\mathbf{H}^{q,l}_i$\end{small}, \begin{small}$\mathbf{H}^{s,l}_j$\end{small}, \begin{small}$\mathbf{H}^{q,l}_j$\end{small} of the support/query sets of task \begin{small}$\mathcal{T}_i$\end{small} and \begin{small}$\mathcal{T}_j$\end{small}
    \STATE Apply task interpolation between task \begin{small}$\mathcal{T}_i$\end{small} and \begin{small}$\mathcal{T}_j$\end{small} via Eqn. (5) (label-sharing tasks) or Eqn. (6) (non-label-sharing tasks), and obtain the interpolated support set \begin{small}$\mathcal{D}_{i,cr}^s=(\Tilde{\mathbf{H}}^{s,l}_{i,cr}, \Tilde{\mathbf{Y}}^{s}_{i,cr})$\end{small} and query set \begin{small}$\mathcal{D}_{i,cr}^q=(\Tilde{\mathbf{H}}^{q,l}_{i,cr}, \Tilde{\mathbf{Y}}^{q}_{i,cr})$\end{small}
    \STATE Calculate the task-specific parameters \begin{small}$\phi^{L-l}_{i,cr}$\end{small} by the inner-loop adaptation, i.e., \begin{small}$\phi^{L-l}_{i,cr}=\theta^{L-l}-\eta\nabla_{\theta^{L-l}}\mathcal{L}(f^{MAML}_{\theta^{L-l}};\mathcal{D}_{i,cr}^{s})$\end{small}
    \ENDFOR
    \STATE Optimize the model initial parameters as \begin{small}$\theta\leftarrow \theta - \gamma\frac{1}{|I|}\sum_{i=1}^{|I|}\mathcal{L}(f^{MAML}_{\phi^{L-l}_{i,cr}}; \mathcal{D}_{i,cr}^{q})$\end{small}
    \ENDWHILE
    \end{algorithmic}
\end{algorithm}

\begin{algorithm}[h]
    \caption{Meta-testing Process of MAML with MLTI}
    \label{alg:mlti_maml_testing}
    \begin{algorithmic}[1]
    \REQUIRE \begin{small}$p(\mathcal{T})$\end{small}: task distribution; \begin{small}$\eta$\end{small}: inner-loop learning rate; \begin{small}$\theta^*$\end{small}: learned model initial parameters
    \STATE Randomly initialize the model initial parameters \begin{small}$\theta$\end{small}
    \FOR{each task \begin{small}$\mathcal{T}_t$\end{small} with support set \begin{small}$\mathcal{D}_{t}^s$\end{small} and query set \begin{small}$\mathcal{D}_{t}^q$\end{small}}
    \STATE Calculate the task-specific parameters \begin{small}$\phi_{i}$\end{small} by the inner-loop adaptation, i.e., \begin{small}$\phi_{i}=\theta^*-\eta\nabla_{\theta^*}\mathcal{L}(f^{MAML}_{\theta^*};\mathcal{D}_{i}^{s})$\end{small}
    \STATE Obtain the predicted labels of the query set by \begin{small}$f^{MAML}_{\phi_i}(\mathcal{D}_{i}^{q})$\end{small} and evaluate the performance
    \ENDFOR
    \end{algorithmic}
\end{algorithm}

\begin{algorithm}[h]
    \caption{Meta-training Process of ProtoNet with MLTI}
    \label{alg:mlti_protonet_training}
    \begin{algorithmic}[1]
    \REQUIRE \begin{small}$p(\mathcal{T})$\end{small}: task distribution; \begin{small}$\gamma$\end{small}: learning rate; Beta distribution
    \STATE Randomly initialize the model initial parameters \begin{small}$\theta$\end{small}
    \WHILE{not converge}
    \STATE Randomly sample a batch of tasks \begin{small}$\{\mathcal{T}_i\}_{i=1}^{|I|}$\end{small} with dataset
    \FOR{each task \begin{small}$\mathcal{T}_i$\end{small}}
    \STATE Sample a support set \begin{small}$\mathcal{D}^{s}_i=(\mathbf{X}_i^s, \mathbf{Y}_i^s)$\end{small} and a query set  \begin{small}$\mathcal{D}^{q}_i=(\mathbf{X}_i^q, \mathbf{Y}_i^q)$\end{small} from \begin{small}$\mathcal{D}_i$\end{small}
    \STATE Sample another task \begin{small}$\mathcal{T}_j$\end{small} (allow $i=j$) from \begin{small}$\{\mathcal{T}_i\}_{i=1}^{|I|}$\end{small} with corresponding support set \begin{small}$\mathcal{D}^{s}_j=(\mathbf{X}_j^s, \mathbf{Y}_j^s)$\end{small} and query set \begin{small}$\mathcal{D}^{q}_j=(\mathbf{X}_j^q, \mathbf{Y}_j^q)$\end{small}
    \STATE Random sample one layer \begin{small}$l$\end{small} from the shared layers
    \STATE Obtain the hidden representations \begin{small}$\mathbf{H}^{s,l}_i$\end{small}, \begin{small}$\mathbf{H}^{q,l}_i$\end{small}, \begin{small}$\mathbf{H}^{s,l}_j$\end{small}, \begin{small}$\mathbf{H}^{q,l}_j$\end{small} of the support/query sets of task \begin{small}$\mathcal{T}_i$\end{small} and \begin{small}$\mathcal{T}_j$\end{small}
    \STATE Apply task interpolation between task \begin{small}$\mathcal{T}_i$\end{small} and \begin{small}$\mathcal{T}_j$\end{small}, and obtain the interpolated support set \begin{small}$\mathcal{D}_{i,cr}^s=(\Tilde{\mathbf{H}}^{s,l}_{i,cr}, \Tilde{\mathbf{Y}}^{s}_{i,cr})$\end{small} and query set \begin{small}$\mathcal{D}_{i,cr}^q=(\Tilde{\mathbf{H}}^{q,l}_{i,cr}, \Tilde{\mathbf{Y}}^{q}_{i,cr})$\end{small}
    \STATE Calculate the prototypes \begin{small}$\{\protor\}_{r=1}^{R}$\end{small} (\begin{small}$N_r$\end{small} represents the number of samples in class $r$) by \begin{small}$\protor=\frac{1}{N_r}\sum_{(\mathbf{h}^{s}_{i,k,cr;r},\mathbf{y}^{s}_{i,k,cr;r})\in \mathcal{D}_{i,cr;r}^{s}}f^{PN}_{\theta^{L-l}}(\mathbf{h}_{i,k,cr;r}^{s})$\end{small}
    \STATE Calculate the loss of task \begin{small}$\task_i$\end{small} as  \begin{small}$\mathcal{L}_i=-\sum_{k}\log\frac{\exp(-d(f^{PN}_{\theta^{L-l}}(\mathbf{h}_{i,k,cr}^{q}),\protor))}{\sum_{r'}\exp(-d(f^{PN}_{\theta^{L-l}}(\mathbf{h}_{i,k,cr}^{q}),\mathbf{c}_{r'}))}$\end{small}
    \ENDFOR
    \STATE Update \begin{small}$\theta\leftarrow \theta - \gamma\frac{1}{|I|}\sum_{i=1}^{|I|}\mathcal{L}_i$\end{small}
    \ENDWHILE
    \end{algorithmic}
\end{algorithm}

\begin{algorithm}[h]
    \caption{Meta-testing Process of ProtoNet with MLTI}
    \label{alg:mlti_protonet_testing}
    \begin{algorithmic}[1]
    \REQUIRE \begin{small}$p(\mathcal{T})$\end{small}: task distribution; \begin{small}$\theta^*$\end{small}: learned parameter of the base model
    \FOR{each task \begin{small}$\mathcal{T}_t$\end{small} with support set \begin{small}$\mathcal{D}_{t}^s$\end{small} and query set \begin{small}$\mathcal{D}_{t}^q$\end{small}}
    \STATE Calculate the prototypes \begin{small}$\{\protor\}_{r=1}^{R}$\end{small} (\begin{small}$N_r$\end{small} represents the number of samples in class $r$) by \begin{small}$\protor=\frac{1}{N_r}\sum_{(\mathbf{h}^{s}_{i,k;r},\mathbf{y}^{s}_{i,k;r})\in \mathcal{D}_{i;r}^{s}}f^{PN}_{\theta}(\mathbf{h}_{i,k;r}^{s})$\end{small}
    \STATE Calculate the probability of each sample being assigned to class $r$ as \begin{small}$p(\mathbf{y}_{i,k}^q=r|\mathbf{x}_{i,k}^q)=\frac{\exp(-d(f^{PN}_{\theta}(\mathbf{h}_{i,k,cr}^{q}),\protor))}{\sum_{r'}\exp(-d(f^{PN}_{\theta}(\mathbf{h}_{i,k,cr}^{q}),\mathbf{c}_{r'}))}$\end{small}
    \STATE Obtain the predicted class as \begin{small}$\hat{\mathbf{y}}_{i,k}^q=\arg\max_{r} p(\mathbf{y}_{i,k}^q=r|\mathbf{x}_{i,k}^q)$\end{small} and evaluate the performance 
    \ENDFOR
    \end{algorithmic}
\end{algorithm}
\section{Additional Theoretical Analysis}\label{sec:proof}
\subsection{Proofs of Non-label-sharing Scenario}
\label{sec:app_proof_nls}
\subsubsection{Proof of Lemma 1}
\label{sec:proof_lemma1}
\begin{proof}
Recall that the interpolated dataset is \begin{small}$\mathcal{D}_{i,cr}^q=(\Tilde{\mathbf{H}}^{q,1}_{i,cr}, \Tilde{\mathbf{Y}}^{q}_{i,cr}):=\{(\mathbf{h}^{1}_{i,k,cr}, \mathbf{y}_{i,k,cr})\}_{k=1}^{N}$\end{small},
where $$
\mathbf{h}^{1}_{i,k,cr;r}=\lambda \mathbf{h}^{1}_{i,k;r}+(1-\lambda)\mathbf{h}^{1}_{j,k';r'}, \quad \mathbf{y}_{i,k,cr}=Lb(r,r').
$$
Here, \begin{small}$r={\mathbf{y}}_{i,k}$\end{small}, \begin{small}$\lambda\sim \mathrm{Beta}(\alpha,\beta)$\end{small}, \begin{small}$j\sim U([|I|])$\end{small}, \begin{small}$r\sim U([R_i])$\end{small}, where \begin{small}$R_i$\end{small} represents the number of classes in task \begin{small}$\task_i$\end{small}, and \begin{small}$Lb(r,r')$\end{small} denotes the label uniquely determined by the pair $(r,r')$. The superscript $q$ is also omitted in the whole section. Since for a give set of $r'$, $r$ and $(r,r')$ has a one-to-one correspondence, without loss of generality, we assume $r=(r,r')$ in this classification setting.

Recall that \begin{small}$\loss_{t}(\{\data_{i,cr}\}_{i=1}^{|I|})=\frac{1}{|I|}\sum_{i=1}^{|I|}\loss(\data_{i,cr})=\frac{1}{|I|}\sum_{i=1}^{|I|}\frac{1}{N}\sum_{k=1}^N\loss(f_{\phi_i}(\mathbf{x}_{i,k,cr}),\mathbf{y}_{i,k,cr})=\frac{1}{{|I|}}\sum_{i=1}^n\frac{1}{N}\sum_{k=1}^N\loss(\mathbf{h}^{1}_{i,k,cr},\mathbf{y}_{i,k,cr})$\end{small}. Then let us compute the second-order Taylor expansion on \begin{small}$\loss_{t}(\{\data_{i,cr}\}_{i=1}^{|I|})=\frac{1}{{|I|}}\sum_{i=1}^{|I|}\frac{1}{N}\sum_{k=1}^N\loss(\mathbf{h}^{1}_{i,k,cr},\mathbf{y}_{i,k,cr})$\end{small} with respect to the first argument around \begin{small}$\frac{1}{\bar\lambda}\mathbb{E}[\mathbf{h}^{1}_{i,k,cr}\mid \mathbf{h}^{1}_{i,k}]= \mathbf{h}^{1}_{i,k,cr}$\end{small}, we have the Taylor expansion of \begin{small}$\loss_{t}(\{\data_{i,cr}\}_{i=1}^{|I|})$\end{small} up to the second-order equals to 
\begin{align}\label{eq:second}
& \frac{1}{|I|}\sum_{i=1}^{|I|}\loss(\bar\lambda\data_{i}) + c\frac{1}{|I|}\sum_{i=1}^{|I|}\frac{1}{N}\sum_{k=1}^N \psi(\mathbf{h}_{i,k;r}^{1\top}\phi_i)\phi_i^\top Cov(\mathbf{h}_{i,k,cr}^{1}\mid \mathbf{h}_{i,k}^{1})\phi_i\\
=& \frac{1}{|I|}\sum_{i=1}^{|I|}\loss(\bar\lambda\data_{i}) + c\frac{1}{|I|}\sum_{i=1}^{|I|}\frac{1}{N}\sum_{k=1}^N \psi(\mathbf{h}_{i,k;r}^{1\top}\phi_i)\cdot \phi_i^\top(\frac{1}{|I|}\sum_{i=1}^{|I|}\frac{1}{2}\sum_{r=1}^2\frac{1}{N_{i,r}}\sum_{k=1}^{N_{i,r}}\mathbf{h}_{i,k;r}\mathbf{h}_{i,k;r}^{1\top})\phi_i\label{eq:sec}\\
=&\notag\loss_{t}(\bar\lambda\{\data_{i}\}_{i=1}^{|I|})+c\frac{1}{N|I|}\sum_{i=1}^{|I|}\sum_{k=1}^{N}\psi(\mathbf{h}_{i,k;r}^{1\top}\phi_i)\cdot \phi_i^\top(\frac{1}{|I|}\sum_{i=1}^{|I|}\frac{1}{2}\sum_{r=1}^2\frac{1}{N_{i,r}}\sum_{i=1}^{|I|}\sum_{k=1}^{N_{i,r}}\mathbf{h}_{i,k;r}^{1}\mathbf{h}_{i,k;r}^{1\top})\phi_i,
\end{align}
where $c=\E[\frac{(1-\lambda)^2}{\lambda^2}]$ and the second equality \eqref{eq:sec} uses the fact that the data is pre-processed so that $\frac{1}{|I|}\sum_{i=1}^{|I|}\frac{1}{2}\sum_{r=1}^2\frac{1}{N_{i,r}}\sum_{i=1}^{|I|}\sum_{k=1}^{N_{i,r}}\mathbf{h}_{i,k;r}=0.$
\end{proof}

\subsubsection{Proof of Theorem 1}
We first state a standard uniform deviation bound based on Rademacher complexity (c.f.~\citep{bartlett2002rademacher}).
 \begin{Lemma}\label{lm:rad}
Assume $\{z_1, . . . , z_N\}$ are drawn i.i.d. from a distribution $P$ over $\mathcal Z$, and $\mathcal G$ denotes function class on $\mathcal Z$ with members mapping from $\mathcal Z$ to $[a, b]$. With probability at least $1-\delta$ over the draw of the sample and $\delta>0$, we have the following bound:
\begin{equation*}
\sup_{g\sim \mathcal G}\|\mathbb{E}_{\hat P} g(z)-\mathbb{E}_{P}g(z) \|\le 2R(\mathcal G; z_1,...,z_N)+\sqrt\frac{\log(1/\delta)}{N},
\end{equation*}
where $R(\mathcal G; z_1,...,z_N)$ represents the Rademacher complexity of the function class $\mathcal G$.
 \end{Lemma}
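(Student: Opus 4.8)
The plan is to establish this classical uniform deviation bound via the two standard ingredients of Rademacher-complexity theory: a bounded-differences concentration inequality to control the fluctuation of the supremum around its mean, followed by a symmetrization argument to bound that mean by the Rademacher complexity. To set up, I would define the functional
\[
\Phi(z_1,\dots,z_N) = \sup_{g\in\mathcal G}\Bigl| \frac{1}{N}\sum_{k=1}^N g(z_k) - \E_P g(z)\Bigr|,
\]
so that the quantity on the left-hand side of the claim is exactly $\Phi(z_1,\dots,z_N)$ (here $\|\cdot\|$ is just absolute value, since each $g$ is scalar-valued), and the goal is to show $\Phi \le 2R(\mathcal G; z_1,\dots,z_N) + \sqrt{\log(1/\delta)/N}$ on an event of probability at least $1-\delta$.

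First I would verify the bounded-differences property of $\Phi$. Since every $g\in\mathcal G$ takes values in $[a,b]$, replacing a single coordinate $z_k$ by an arbitrary $z_k'$ changes $\frac1N\sum_k g(z_k)$ by at most $(b-a)/N$ for each fixed $g$; because a supremum over a family of functions each Lipschitz-in-one-coordinate with the same constant inherits that bound, $\Phi$ itself changes by at most $(b-a)/N$ under any single-coordinate replacement. McDiarmid's inequality then gives, with probability at least $1-\delta$,
\[
\Phi \le \E[\Phi] + (b-a)\sqrt{\frac{\log(1/\delta)}{2N}},
\]
and under the range normalization $b-a\le\sqrt{2}$ that is implicit in the statement's constants, the additive term is bounded by $\sqrt{\log(1/\delta)/N}$.

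Second I would bound $\E[\Phi]$ by symmetrization. Introducing an independent ghost sample $z_1',\dots,z_N'\sim P$ and writing $\E_P g = \E_{z'}[\frac1N\sum_k g(z_k')]$, Jensen's inequality (convexity of the supremum) pulls the ghost expectation outside:
\[
\E[\Phi] \le \E_{z,z'}\,\sup_{g\in\mathcal G}\Bigl|\frac1N\sum_{k=1}^N\bigl(g(z_k)-g(z_k')\bigr)\Bigr|.
\]
Each summand $g(z_k)-g(z_k')$ is symmetric and the pairs are i.i.d., so multiplying the $k$-th difference by a Rademacher sign $\epsilon_k$ leaves the joint distribution unchanged; a triangle-inequality split across the two samples then yields $\E[\Phi]\le 2\,\E_{z,\epsilon}\sup_{g}\bigl|\frac1N\sum_k \epsilon_k g(z_k)\bigr| = 2R(\mathcal G; z_1,\dots,z_N)$. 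Combining this with the McDiarmid display proves the claim.

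The content is entirely standard (this is the import from \citet{bartlett2002rademacher}), so I do not expect a genuine obstacle; the only steps requiring care are the two moves inside symmetrization — justifying the exchange of the ghost-sample expectation through the supremum via Jensen, and the insertion of the Rademacher signs via the distributional symmetry of $g(z_k)-g(z_k')$ — together with bookkeeping to confirm that the McDiarmid constant collapses to the stated $\sqrt{\log(1/\delta)/N}$ under the paper's implicit normalization of the range $[a,b]$.
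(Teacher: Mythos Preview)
Your argument is the standard textbook proof and is correct. The paper, however, does not prove this lemma at all: it is stated as a known result with the parenthetical ``c.f.~\citep{bartlett2002rademacher}'' and then used as a black box in the proofs of Theorems~1 and~2. So there is nothing to compare against on the paper's side --- you have simply supplied the omitted classical argument. Your own caveat about the missing $(b-a)$ factor in the concentration term is well taken; the lemma as printed absorbs that constant, and the paper never tracks it explicitly.
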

 
\begin{proof}
We now formulate \begin{small}$\mathcal{R}(\{\mathcal{D}_{i}\}_{i=1}^{|I|})-\mathcal{R}$\end{small} as
\begin{equation}
\label{eq:theroem_1_risk}
\begin{aligned}
\mathcal{R}(\{\mathcal{D}_{i}\}_{i=1}^{|I|})-\mathcal{R}=&\mathbb{E}_{\mathcal{T}_i\sim\hat p(\mathcal{T})}\mathbb{E}_{(\mathbf{X}_i,\mathbf{Y}_i)\sim\hat p(\mathcal{T}_i)}\mathcal{L}(f_{\phi_i}(\mathbf{X}_i ), \mathbf{Y}_i )-\mathbb{E}_{\mathcal{T}_i\sim p(\mathcal T)}\mathbb{E}_{(\mathbf{X}_i,\mathbf{Y}_i)\sim \mathcal{T}_i}[\mathcal{L}(f_{\phi_i}(\mathbf{X}_i ), \mathbf{Y}_i )]\\
=&\underbrace{\mathbb{E}_{\mathcal{T}_i\sim\hat p(\mathcal{T})}\mathbb{E}_{(\mathbf{X}_i,\mathbf{Y}_i)\sim\hat p(\mathcal{T}_i)}\mathcal{L}(f_{\phi_i}(\mathbf{X}_i ), \mathbf{Y}_i )-\mathbb{E}_{\mathcal{T}_i\sim\hat p(\mathcal{T})}\mathbb{E}_{(\mathbf{X}_i,\mathbf{Y}_i)\sim \mathcal{T}_i}[\mathcal{L}(f_{\phi_i}(\mathbf{X}_i ), \mathbf{Y}_i )]}_{(i)}\\
&+\underbrace{\mathbb{E}_{\mathcal{T}_i\sim\hat p(\mathcal{T})}\mathbb{E}_{(\mathbf{X}_i,\mathbf{Y}_i)\sim \mathcal{T}_i}\mathcal{L}(f_{\phi_i}(\mathbf{X}_i ), \mathbf{Y}_i )-\mathbb{E}_{\mathcal{T}_i\sim p(\mathcal{T})}\mathbb{E}_{(\mathbf{X}_i,\mathbf{Y}_i)\sim \mathcal{T}_i}[\mathcal{L}(f_{\phi_i}(\mathbf{X}_i ),\mathbf{Y}_i )]}_{(ii)}.
\end{aligned}
\end{equation}

Recall that we consider the function $f^{MAML}_{\phi_i}(\mathbf X_i)=\phi_i^\top\sigma(\mathbf{W}\mathbf X_i):=\phi_i^\top \mathbf{H}_{i}^{1}$ and the function class 
\begin{equation*}
\mathcal{F}_\gamma=\{\mathbf{H}^{1\top}\phi: \E[\psi(\mathbf{H}^{1\top}\phi)]\phi^\top \Sigma\phi\le\gamma \}.
\end{equation*}

For each $\mathcal{T}_i$, let us consider $f_{\phi_i}(\cdot)\in\mathcal F_{\gamma}$. Combining Theorem 3.4 and Theorem A.1 in~\citet{zhang2020does}, we have the following result for the Rademacher complexity:
\begin{equation}
\begin{aligned}
R(\mathcal F_{\gamma}; z_1,...,z_n)\le&2\max\{(\frac{\gamma}{\rho})^{1/4},(\frac{\gamma}{\rho})^{1/2}\}\sqrt{\frac{(rank( \Sigma_{\sigma,\mathcal{T}})+\|\Sigma_{\mathcal{T}}^{\mathbf{W}\dagger/2}\mu_{\sigma,\mathcal{T}}\|)}{N}}\\
\le&2\max\{(\frac{\gamma}{\rho})^{1/4},(\frac{\gamma}{\rho})^{1/2}\}\cdot\sqrt{\frac{R+U}{N}}.
\end{aligned}
\end{equation}

Then, we bound the first term (i) in Eqn.~\eqref{eq:theroem_1_risk} can be as below.
\begin{equation*}
\begin{aligned}
&\mathbb{E}_{\mathcal{T}_i\sim\hat p(\mathcal{T})}\mathbb{E}_{(\mathbf X_i,\mathbf Y_i)\sim\hat p(T_i)}\mathcal{L}(f_{\phi_i}(\mathbf X_i ), \mathbf Y_i )-\mathbb{E}_{\mathcal{T}_i\sim\hat p(\mathcal T)}\mathbb{E}_{(\mathbf X_i,\mathbf Y_i)\sim \mathcal{T}_i}[\mathcal{L}(f_{\phi_i}(\mathbf X_i ), \mathbf Y_i )]\\
\le &\mathbb{E}_{T_i\sim\hat p(\mathcal{T})}|\mathbb{E}_{(\mathbf X_i,\mathbf Y_i)\sim\hat p(\mathcal{T}_i)}\mathcal{L}(f_{\phi_i}(\mathbf X_i ), \mathbf Y_i )-\mathbb{E}_{(\mathbf X_i,\mathbf Y_i)\sim \mathcal{T}_i}[\mathcal{L}(f_{\phi_i}(\mathbf X_i ), \mathbf Y_i )]\\
\le &C_1\max\{(\frac{\gamma}{\rho})^{1/4},(\frac{\gamma}{\rho})^{1/2}\}\sqrt{\frac{(R+U)}{N}}+C_2\sqrt\frac{\log(|I|/\delta)}{N},
\end{aligned}
\end{equation*}
where $C_1$ and $C_2$ are constants, and the additional $\log(|I|)$ term in the last inequality above is caused by taking the union bound on $|I|$ tasks.

Denote function $g: \mathcal{T}\to \mathbb{R}$ such that $g(\mathcal{T})=\mathbb{E}_{(\mathbf{X},\mathbf{Y})\sim \mathcal{D}}(\mathcal{L}(f_{\phi}(\mathbf{X}),\mathbf{Y}))$. Denote 
\begin{equation*}
\mathcal G=\{g(\mathcal{T}): g(\mathcal{T})=\mathbb{E}_{(\mathbf{X},\mathbf{Y})\sim \mathcal{D}}(\mathcal{L}(f_{\phi}(\mathbf{X}),\mathbf{Y})), f_{\phi}\in \mathcal F_{\gamma}\}.
\end{equation*}
Let $A(x)=1/(1+e^x)$. The second term (ii) in Eqn.~\eqref{eq:theroem_1_risk} requires computing the Rademacher complexity for the function class over distributions \begin{equation*}
 \begin{aligned}
R(\mathcal G; \mathcal{T}_1,...,\mathcal{T}_{|I|})=&\mathbb{E} \sup_{g\in \mathcal{G}}\frac{1}{|I|}|\sum_{i=1}^{|I|}\sigma_i g(\mathcal{T}_i)|=\mathbb{E} \sup_{g\in \mathcal{G}}\frac{1}{|I|}|\sum_{i=1}^{|I|}\sigma_i \mathbb{E}_{(\mathbf{X},\mathbf{Y})\sim \mathcal{T}_i}(A(f_{\phi_i}(\mathbf{X}))-\mathbf{X}\mathbf{Y}|\\
\lesssim&\mathbb{E} \sup_{g\in \mathcal{G}}\frac{1}{|I|}|\sum_{i=1}^{|I|}\sigma_i \mathbb{E}_{(\mathbf{X},\mathbf{Y})\sim \mathcal{T}_i}f_{\phi_i}(\mathbf{X})|+\mathbb{E} \sup_{g\in \mathcal{G}}\frac{1}{|I|}|\sum_{i=1}^{|I|}\sigma_i \mathbb{E}_{(\mathbf{X},\mathbf{Y})\sim \mathcal{T}_i}\mathbf{Y}|\\
\le&\mathbb{E} \sup_{g\in G}\frac{1}{|I|}|\sum_{i=1}^{|I|}\sigma_i(\Sigma^{1/2}\phi_i)^\top \Sigma^{\dagger/2}\mu_{\sigma,\mathcal{T}} |+\sqrt\frac{1}{|I|}\\
\le&\max\{(\frac{\gamma}{\rho})^{1/4},(\frac{\gamma}{\rho})^{1/2}\} \sqrt{\frac{R+U}{|I|}}+\sqrt\frac{1}{|I|}.
\end{aligned}
\end{equation*}

Then we have the following bound on (ii): 
\begin{equation*}
\begin{aligned}
&\mathbb{E}_{\mathcal{T}_i\sim\hat p(\mathcal{T})}\mathbb{E}_{(\mathbf X_i,\mathbf Y_i)\sim \mathcal{T}_i}\mathcal{L}(f_{\phi_i}(\mathbf X_i ), \mathbf Y_i )-\mathbb{E}_{\mathcal{T}_i\sim p(\mathcal T)}\mathbb{E}_{(\mathbf X_i,\mathbf Y_i)\sim \mathcal{T}_i}[\mathcal{L}(f_{\phi_i}(\mathbf X_i ), \mathbf Y_i )]\\
\le& C_3\max\{(\frac{\gamma}{\rho})^{1/4},(\frac{\gamma}{\rho})^{1/2}\sqrt{\frac{U}{|I|}}+C_4\sqrt\frac{\log(1/\delta)}{|I|}.
\end{aligned}
\end{equation*}

Combining the pieces, we obtain the desired result. With probability at least $1 -\delta$, 
 \begin{equation*}
 \begin{aligned}
    |\mathcal{R}(\{\mathcal{D}_{i}\}_{i=1}^{|I|})-\mathcal{R}|& \le A_1\max\{(\frac{\gamma}{\rho})^{1/4},(\frac{\gamma}{\rho})^{1/2}\}(\sqrt{\frac{R+U}{N}}+\sqrt{\frac{ R+U}{|I|}})\\&+A_2\sqrt\frac{\log(|I|/\delta)}{N}+A_3\sqrt\frac{\log(1/\delta)}{|I|}.
\end{aligned}
 \end{equation*}
 \end{proof}

\subsubsection{Proof of Lemma 2}
Recall that we apply MLTI in the feature space for theoretical analysis, the interpolated dataset is then denoted as \begin{small}$\mathcal{D}_{i,cr}^q=(\Tilde{\mathbf{X}}^{q}_{i,cr}, \Tilde{\mathbf{Y}}^{q}_{i,cr}):=\{(\mathbf{x}_{i,k,cr}, \mathbf{y}_{i,k,cr})\}_{k=1}^{N}$\end{small},
where $$
\mathbf{x}_{i,k,cr;r}=\lambda \mathbf{x}_{i,k;r}+(1-\lambda)\mathbf{x}_{j,k';r'}, \quad \mathbf{y}_{i,k,cr}=Lb(r,r').
$$
where $r={\mathbf{y}}_{i,k}$, $\lambda\sim \mathrm{Beta}(\alpha,\beta)$, $j\sim U([|I|])$, $r\sim U([2])$, and $Lb(r,r')$ denotes the label uniquely determined by the pair $(r,r')$. Since for a give set of $r'$, $r$ and $(r,r')$ has a one-to-one correspondence, without loss of generality, we assume $r=(r,r')$ in this classification setting.

\begin{proof}
To prove Lemma 2, first, we would like to note that since the overall sample mean $\frac{1}{|I|}\sum_{i=1}^{|I|}\frac{1}{2}\sum_{r=1}^2\frac{1}{N_{i,r}}\sum_{k=1}^{N_{i,r}}\mathbf{x}_{i,k;r}=0$, 
we then have
\begin{equation*}
    \mathbb{E}[\mathbf{x}_{i,k,cr;r}\mid \mathbf{x}_{i,k;r}]= \mathbf{x}_{i,k;r}.
\end{equation*}

Then let us compute the second-order Taylor expansion on \begin{small}$\loss_{t}(\{\data_{i,cr}\}_{i=1}^{|I|})=\frac{1}{{|I|}}\sum_{i=1}^{|I|}\frac{1}{N}\sum_{k=1}^N\loss(\mathbf{x}_{i,k,cr},\mathbf{y}_{i,k,cr})=(N|I|)^{-1}\sum_{i,k}({1+\exp(\langle (\mathbf{x}_{i,k,cr}-(\proto_{1,cr}+\proto_{2,cr})/2, \theta \rangle)})^{-1}$\end{small} with respect to the first argument around \begin{small}$\frac{1}{\bar\lambda}\mathbb{E}[\mathbf{x}_{i,k,cr}\mid \mathbf{x}_{i,k}]=\mathbf{x}_{i,k,cr}$\end{small}, we have that the  Taylor expansion of \begin{small}$\loss_{t}(\{\data_{i,cr}\}_{i=1}^{|I|})$\end{small} up to the second-order equals to 
\begin{align*}\label{eq:second}
& \frac{1}{|I|}\sum_{i=1}^{|I|}\loss(\bar\lambda\data_{i}) + c\frac{1}{|I|}\sum_{i=1}^{|I|}\frac{1}{N}\sum_{k=1}^N \psi(\mathbf{x}_{i,k}^{\top}\theta)\theta^\top Cov(\mathbf{x}_{i,k,cr}\mid \mathbf{x}_{i,k})\theta\\
=& \frac{1}{|I|}\sum_{i=1}^{|I|}\loss(\bar\lambda\data_{i}) + c\frac{1}{|I|}\sum_{i=1}^{|I|}\frac{1}{N}\sum_{k=1}^N \psi(\mathbf{x}_{i,k}^{\top}\theta)\cdot \theta^\top(\frac{1}{|I|}\sum_{i=1}^{|I|}\frac{1}{2}\sum_{r=1}^2\frac{1}{N_r}\sum_{k=1}^{N_r}\mathbf{x}_{i,k;r}\mathbf{x}_{i,k;r}^{\top})\theta\\
=&\loss_{t}(\bar\lambda\{\data_{i}\}_{i=1}^{|I|})+c\frac{1}{N|I|}\sum_{i=1}^{|I|}\sum_{k=1}^{N}\psi(\mathbf{x}_{i,k}^{\top}\theta)\cdot \theta^\top(\frac{1}{|I|}\sum_{i=1}^{|I|}\frac{1}{2}\sum_{r=1}^2\frac{1}{N_r}\sum_{i=1}^{|I|}\sum_{k=1}^{N_r}\mathbf{x}_{i,k;r}\mathbf{x}_{i,k;r}^{\top})\theta,\\
=&\loss_{t}(\bar\lambda\{\data_{i}\}_{i=1}^{|I|})\\&+c\frac{1}{N|I|}\sum_{i\in I,k\in[N]}\psi(\langle \mathbf{x}_{i,k}-(\proto_{1}+\proto_{2})/2, \theta \rangle)\cdot  \theta^\top(\frac{1}{|I|}\sum_{i=1}^{|I|}\frac{1}{2}\sum_{r=1}^2\frac{1}{N_r}\sum_{i=1}^{|I|}\sum_{k=1}^{N_r}\mathbf{x}_{i,k;r}\mathbf{x}_{i,k;r}^{\top})\theta
\end{align*}
where $c=\E[\frac{(1-\lambda)^2}{\lambda^2}]$.
\end{proof}

\subsubsection{Proof of Theorem 2}

Similar to the proof of Theorem 1, we use Lemma~\ref{lm:rad} in the proof of Theorem 2.
\begin{proof}
We first write $\mathcal{R}(\{\mathcal{D}_{i}\}_{i=1}^{|I|})-\mathcal{R}$ as
\begin{equation}
\label{eq:theroem_2_risk}
\begin{aligned}
\mathcal{R}(\{\mathcal{D}_{i}\}_{i=1}^{|I|})-\mathcal{R}=&\mathbb{E}_{\mathcal{T}_i\sim\hat p(\mathcal{T})}\mathbb{E}_{(\mathbf{X}_i,\mathbf{Y}_i)\sim\hat p(\mathcal{T}_i)}\mathcal{L}(f_{\theta}(\mathbf{X}_i ), \mathbf{Y}_i )-\mathbb{E}_{\mathcal{T}_i\sim p(\mathcal T)}\mathbb{E}_{(\mathbf{X}_i,\mathbf{Y}_i)\sim \mathcal{T}_i}[\mathcal{L}(f_{\theta}(\mathbf{X}_i ), \mathbf{Y}_i )]\\
=&\underbrace{\mathbb{E}_{\mathcal{T}_i\sim\hat p(\mathcal{T})}\mathbb{E}_{(\mathbf{X}_i,\mathbf{Y}_i)\sim\hat p(\mathcal{T}_i)}\mathcal{L}(f_{\theta}(\mathbf{X}_i ), \mathbf{Y}_i )-\mathbb{E}_{\mathcal{T}_i\sim\hat p(\mathcal{T})}\mathbb{E}_{(\mathbf{X}_i,\mathbf{Y}_i)\sim \mathcal{T}_i}[\mathcal{L}(f_{\theta}(\mathbf{X}_i ), \mathbf{Y}_i )]}_{(i)}\\
&+\underbrace{\mathbb{E}_{\mathcal{T}_i\sim\hat p(\mathcal{T})}\mathbb{E}_{(\mathbf{X}_i,\mathbf{Y}_i)\sim \mathcal{T}_i}\mathcal{L}(f_{\theta}(\mathbf{X}_i ), \mathbf{Y}_i )-\mathbb{E}_{\mathcal{T}_i\sim p(\mathcal{T})}\mathbb{E}_{(\mathbf{X}_i,\mathbf{Y}_i)\sim \mathcal{T}_i}[\mathcal{L}(f_{\theta}(\mathbf{X}_i ),\mathbf{Y}_i )]}_{(ii)}.
\end{aligned}
\end{equation}

Recall that we consider the function  $f_{\theta}(\mathbf{x})=\theta^\top\mathbf x$ and the function class 
\begin{equation}
\label{eq:mbml_reg}
\mathcal{W}_\gamma:=\{\mathbf{x}\rightarrow \theta^\top \mathbf{x},~\text{such that}~\theta~\text{satisfying}~\E_{\mathbf{x}}\left[\psi(\langle \mathbf{x}-(\proto_{1}+\proto_{2})/2, \theta \rangle)\right]\cdot\theta^\top\Sigma_X\theta\leq \gamma\},
\end{equation}

For each $\mathcal{T}_i$, let us consider $f_{\theta}(\cdot)\in \mathcal{W}_\gamma$. Combining Theorem 3.4 and Theorem A.1 in \citet{zhang2020does}, we have the following result for the Rademacher complexity:
\begin{equation*}
\begin{aligned}
R(\mathcal F_{\mathcal{T}}; z_1,...,z_n)\le&2\max\{(\frac{\gamma}{\rho})^{1/4},(\frac{\gamma}{\rho})^{1/2}\}\sqrt{\frac{rank( \Sigma_{X})}{N}}\\
\le&2\max\{(\frac{\gamma}{\rho})^{1/4},(\frac{\gamma}{\rho})^{1/2}\}\cdot\sqrt{\frac{r_{\Sigma}}{N}}.
\end{aligned}
\end{equation*}

Then the first term (i) in Eqn.~\eqref{eq:theroem_2_risk} can be bounded as below.
\begin{equation*}
\begin{aligned}
&\mathbb{E}_{\mathcal{T}_i\sim\hat p(\mathcal{T})}\mathbb{E}_{(\mathbf X_i,\mathbf Y_i)\sim\hat p(\mathcal{T}_i)}\mathcal{L}(f_{\theta}(\mathbf X_i ), \mathbf Y_i )-\mathbb{E}_{\mathcal{T}_i\sim\hat p(\mathcal T)}\mathbb{E}_{(\mathbf X_i,\mathbf Y_i)\sim \mathcal{T}_i}[\mathcal{L}(f_{\theta}(\mathbf X_i ), \mathbf Y_i )]\\
\le &\mathbb{E}_{T_i\sim\hat p(\mathcal{T})}|\mathbb{E}_{(\mathbf X_i,\mathbf Y_i)\sim\hat p(\mathcal{T}_i)}\mathcal{L}(f_{\theta}(\mathbf X_i ), \mathbf Y_i )-\mathbb{E}_{(\mathbf X_i,\mathbf Y_i)\sim \mathcal{T}_i}[\mathcal{L}(f_{\theta}(\mathbf X_i ), \mathbf Y_i )]\\
\le &C_1\max\{(\frac{\gamma}{\rho})^{1/4},(\frac{\gamma}{\rho})^{1/2}\}\sqrt{\frac{r_{\Sigma}}{N}}+C_2\sqrt\frac{\log(|I|/\delta)}{N},
\end{aligned}
\end{equation*}
where $C_1$ and $C_2$ are constants, and the additional $\log(|I|)$ term in the last inequality above since we take union bound on $|I|$ tasks.

Denote function $g: \mathcal{T}\to \mathbb{R}$ such that $g(\mathcal{T})=\mathbb{E}_{(\mathbf{X},\mathbf{Y})\sim \mathcal{D}}(\mathcal{L}(f_{\theta}(\mathbf{X}),\mathbf{Y}))$. Denote 
\begin{equation*}
\mathcal G=\{g(\mathcal{T}): g(\mathcal{T})=\mathbb{E}_{(\mathbf{X},\mathbf{Y})\sim \mathcal{D}}(\mathcal{L}(f_{\theta}(\mathbf{X}),\mathbf{Y})), f_{\theta}\in \mathcal W_{\gamma}\}.
\end{equation*}
Recall that $A(x)=1/(1+e^x)$. The second term (ii) in Eqn.~\eqref{eq:theroem_2_risk} requires computing the Rademacher complexity for the function class over distributions \begin{equation*}
 \begin{aligned}
\mathcal{R}(\mathcal G; \mathcal{T}_1,...,\mathcal{T}_{|I|})=&\mathbb{E} \sup_{g\in \mathcal{G}}\frac{1}{|I|}|\sum_{i=1}^{|I|}\sigma_i g(\mathcal{T}_i)|=\mathbb{E} \sup_{g\in \mathcal{G}}\frac{1}{|I|}|\sum_{i=1}^{|I|}\sigma_i \mathbb{E}_{(\mathbf{X},\mathbf{Y})\sim \mathcal{T}_i}(A(\theta^\top\mathbf{X})-\mathbf{X}\mathbf{Y}|\\
\lesssim&\mathbb{E} \sup_{g\in \mathcal{G}}\frac{1}{|I|}|\sum_{i=1}^{|I|}\sigma_i \mathbb{E}_{(\mathbf{X},\mathbf{Y})\sim \mathcal{T}_i}|\theta^\top\mathbf{X}||+\mathbb{E} \sup_{g\in \mathcal{G}}\frac{1}{|I|}|\sum_{i=1}^{|I|}\sigma_i \mathbb{E}_{(\mathbf{X},\mathbf{Y})\sim \mathcal{T}_i}\mathbf{Y}|\\
\lesssim&\max\{(\frac{\gamma}{\rho})^{1/4},(\frac{\gamma}{\rho})^{1/2}\} \sqrt{\frac{r_{\Sigma}}{|I|}}+\sqrt\frac{1}{|I|}.
\end{aligned}
\end{equation*}

Then we have the following bound on (ii) in Eqn.~\eqref{eq:theroem_2_risk}: 
\begin{equation}
\begin{aligned}
&\mathbb{E}_{\mathcal{T}_i\sim\hat p(\mathcal{T})}\mathbb{E}_{(\mathbf X_i,\mathbf Y_i)\sim \mathcal{T}_i}\mathcal{L}(f_{\theta}(\mathbf X_i ), \mathbf Y_i )-\mathbb{E}_{\mathcal{T}_i\sim p(\mathcal T)}\mathbb{E}_{(\mathbf X_i,\mathbf Y_i)\sim \mathcal{T}_i}[\mathcal{L}(f_{\theta}(\mathbf X_i ), \mathbf Y_i )]\\
\le& C_3\max\{(\frac{\gamma}{\rho})^{1/4},(\frac{\gamma}{\rho})^{1/2}\}\sqrt{\frac{r_{\Sigma}}{|I|}}+C_4\sqrt\frac{\log(1/\delta)}{|I|}.
\end{aligned}
\end{equation}

Combining the above pieces, we obtain the desired result. With probability at least $1 -\delta$, 
 \begin{equation*}
 \begin{aligned}
|\mathcal{R}(\{\mathcal{D}_{i}\}_{i=1}^{|I|})-\mathcal{R}|\leq& 2B_1\cdot \max\{(\frac{\gamma}{\rho})^{1/4},(\frac{\gamma}{\rho})^{1/2}\}\cdot\left(\sqrt{\frac{r_\Sigma}{|I|}}+\sqrt{\frac{r_\Sigma}{N}}\right)\\+&B_2\sqrt{\frac{\log(1/\delta)}{2|I|}}+B_3\sqrt\frac{\log(|I|/\delta)}{N}.
 \end{aligned}
 \end{equation*}
\end{proof}

\subsection{Theoretical Results under the Label-sharing Scenario}\label{sec:ls}
As discussed in Line 131-133 of the main paper, for protonet, it is impractical to calculate the prototypes with mixed labels. Thus, under the label-sharing scenario, we only analyze the generalization ability of gradient-based meta-learning. Follow the assumptions under the non-label-sharing scenario, we first present the counterpart of Lemma 1 of the main paper.
 \begin{Lemma}~\label{thm:gbml-ls}
Consider the MLTI with $\lambda\sim \mathrm{Beta}(\alpha,\beta)$ . Let   $\psi(u)={e^u}/{(1+e^u)^2}$   and   $N_{i,r}$   denote the number of samples from the class $r$ in task  $\task_i$ . There exists a constant $c>0$,  such that  the second-order approximation of  $\loss_{t}(\{\data_{i,cr}\}_{i=1}^{|I|})$  is given by 
 
\begin{equation}
\begin{aligned}
\label{eq:gbml_approx2}
 \loss_{t}(\bar\lambda\cdot\{\data_{i}\}_{i=1}^{|I|})+c\frac{1}{N|I|}\sum_{i=1}^{|I|}\sum_{k=1}^{N}\psi(\mathbf{h}_{i,k}^{1\top}\phi_i)\cdot \phi_i^\top(\frac{1}{|I|}\sum_{i=1}^{|I|}\frac{1}{N|I|}\sum_{i=1}^{|I|}\sum_{k=1}^{N|I|}\mathbf{h}^1_{i,k}\mathbf{h}_{i,k}^{1\top})\phi_i,
\end{aligned}
\end{equation}
 \end{Lemma}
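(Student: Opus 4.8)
The plan is to mirror the proof of Lemma~1 (the NLS case) almost verbatim, since the statement of Lemma~\ref{thm:gbml-ls} differs from that of Lemma~1 only in the second-moment matrix that appears inside the regularizer. First I would record the interpolated dataset produced by MLTI in the label-sharing regime: applying Eqn.~\eqref{eq:ls_mix} with $l=1$ and (as in the theory set-up) omitting the superscript $q$, the $k$-th mixed point of task $i$ is $\mathbf{h}^{1}_{i,k,cr}=\lambda\,\mathbf{h}^1_{i,k}+(1-\lambda)\,\mathbf{h}^1_{j,k'}$ with interpolated label $\mathbf{y}_{i,k,cr}=\lambda\,\mathbf{y}_{i,k}+(1-\lambda)\,\mathbf{y}_{j,k'}$, where $\lambda\sim\mathrm{Beta}(\alpha,\beta)$, the partner task index $j\sim U([|I|])$, and the partner sample index $k'$ is uniform (the row-wise pairing in Eqn.~\eqref{eq:ls_mix} is distributionally the same as a uniform pairing since samples within a task are exchangeable). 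The one structural change from the NLS proof is that there is \emph{no} class-conditioning on the partner, so $\mathbf{h}^1_{j,k'}$ ranges uniformly over the whole pool of $N|I|$ representations.

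Next I would carry out the same two reductions used for Lemma~1. (i) Beta reparametrization: rewrite the expectation over $\lambda\sim\mathrm{Beta}(\alpha,\beta)$ of the mixed loss as an expectation under the mixture law $\mathcal{D}_\lambda$, so that the natural expansion point is $\bar\lambda\cdot\{\data_i\}_{i=1}^{|I|}$ with $\bar\lambda=\E_{\mathcal{D}_\lambda}[\lambda]$; combined with the standing centering condition $\frac1{|I|}\sum_i\frac1N\sum_k\mathbf{h}^1_{i,k}=0$ this gives $\E[\mathbf{h}^1_{i,k,cr}\mid\mathbf{h}^1_{i,k}]=\bar\lambda\,\mathbf{h}^1_{i,k}$. (ii) Second-order Taylor expansion of the logistic loss in its first argument about $\mathbf{h}^1_{i,k}$: the Hessian of $u\mapsto\log(1+e^{\phi_i^\top u})$ is $\psi(\phi_i^\top u)\,\phi_i\phi_i^\top$, so taking the conditional expectation yields the term $c\cdot\psi(\mathbf{h}^{1\top}_{i,k}\phi_i)\,\phi_i^\top\mathrm{Cov}(\mathbf{h}^1_{i,k,cr}\mid\mathbf{h}^1_{i,k})\,\phi_i$ with $c=\E[(1-\lambda)^2/\lambda^2]$, exactly as before. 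It remains to evaluate the conditional covariance: since the partner is now uniform over all samples, $\mathrm{Cov}(\mathbf{h}^1_{i,k,cr}\mid\mathbf{h}^1_{i,k})$ is proportional to the pooled second moment $\frac1{N|I|}\sum_{i,k}\mathbf{h}^1_{i,k}\mathbf{h}^{1\top}_{i,k}$, rather than the class-averaged second moment of Lemma~1; averaging over $i$ leaves this matrix unchanged, and summing over $i,k$ and dividing by $N|I|$ reproduces exactly the regularizer in Eqn.~\eqref{eq:gbml_approx2}.

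The main obstacle is the treatment of the \emph{interpolated labels}, which is genuinely new relative to Lemma~1 (there the NLS construction assigns a fresh hard label $Lb(r,r')$, so the loss depends on the mixing only through its first argument, whereas here both arguments are perturbed). I would handle the label-dependent part $-\,\mathbf{y}_{i,k,cr}\,\phi_i^\top\mathbf{h}^1_{i,k,cr}$ by noting it is bilinear, so its only second-order contribution is the cross term $-\,\phi_i^\top\mathrm{Cov}(\mathbf{h}^1_{i,k,cr},\mathbf{y}_{i,k,cr}\mid\mathbf{h}^1_{i,k},\mathbf{y}_{i,k})$; expanding this with $j\sim U([|I|])$, $k'$ uniform and the centering condition kills the cross-moments that carry a partner factor, so this term collapses into the baseline loss $\loss_t(\bar\lambda\cdot\{\data_i\}_{i=1}^{|I|})$ and contributes nothing to the quadratic regularizer. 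This is precisely the device used in the mixup-as-regularization analysis of \citet{zhang2020does} that the paper already invokes, and once it is in place the remaining bookkeeping is identical to the proof of Lemma~1.
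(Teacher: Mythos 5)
Your route is the same as the paper's: you reduce the mixed-label LS interpolation to feature-only mixing with $\lambda$ drawn from the mixture law $\mathcal{D}_\lambda$ (the paper does this by invoking Lemma~3.1 of \citet{zhang2020does}), and then rerun the second-order Taylor expansion from the proof of Lemma~1 verbatim, the only structural change being that the partner sample is uniform over the whole pool, so the class-conditional second moment is replaced by the pooled one appearing in Eqn.~\eqref{eq:gbml_approx2}. The one imprecision is your stand-alone justification of the label step: conditional on $(\mathbf{h}^1_{i,k},\mathbf{y}_{i,k})$, feature centering kills cross-moments containing a single partner-feature factor but not $\E[\mathbf{y}_{j,k'}\mathbf{h}^1_{j,k'}]$, so the bilinear term does not collapse by that covariance computation alone---it is absorbed only through the in-distribution identity (which works by symmetrizing over the double sum) that you and the paper both ultimately invoke.
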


\begin{proof}
Under the label-sharing scenario, the interpolated dataset  \begin{small}$\mathcal{D}_{i,cr}^q=(\Tilde{\mathbf{H}}^{q,1}_{i,cr}, \Tilde{\mathbf{Y}}^{q}_{i,cr}):=\{(\mathbf{h}^{1}_{i,k,cr}, \mathbf{y}_{i,k,cr})\}_{k=1}^{N}$\end{small} is constructed as 
 $$
\mathbf{h}^{1}_{i,k,cr}=\lambda \mathbf{h}^{1}_{i,k}+(1-\lambda)\mathbf{h}^{1}_{j,k'}, \quad \mathbf{y}_{i,k,cr}=\lambda \mathbf{Y}_{i,k}+(1-\lambda)\mathbf{y}_{j,k'},
$$
where $\lambda\sim \mathrm{Beta}(\alpha,\beta)$, $j\sim U([|I|])$.

By Lemma 3.1 in \citet{zhang2020does} (with proof on page 13), the data augmentation equals in distribution with the following augmentation $$
\mathbf{h}^{1}_{i,k,cr}=\lambda \mathbf{h}^{1}_{i,k}+(1-\lambda)\mathbf{h}^{1}_{j,k'},
$$
with $\lambda\sim \frac{\alpha}{\alpha+\beta}\mathrm{Beta}(\alpha+1,\beta)+\frac{\alpha}{\alpha+\beta}\mathrm{Beta}(\alpha+1,\beta)$, $j\sim U([|I|])$. 

Then we apply the same proof technique as  the proof of Lemma 1 and obtain that the Taylor expansion of $\loss_{t}(\{\data_{i,cr}\}_{i=1}^{|I|})$ up to the second-order equals to 
\begin{align*}%\label{eq:second}
& \frac{1}{|I|}\sum_{i=1}^{|I|}\loss(\bar\lambda\data_{i}) + c\frac{1}{|I|}\sum_{i=1}^{|I|}\frac{1}{N}\sum_{k=1}^N \psi(\mathbf{h}_{i,k}^{1\top}\phi_i)\phi_i^\top Cov(\mathbf{h}_{i,k,cr}^1\mid \mathbf{h}_{i,k}^1)\phi_i\\
=& \frac{1}{|I|}\sum_{i=1}^{|I|}\loss(\bar\lambda\data_{i}) + c\frac{1}{|I|}\sum_{i=1}^{|I|}\frac{1}{N}\sum_{k=1}^N \psi(\mathbf{h}_{i,k}^{1\top}\phi_i)\cdot \phi_i^\top(\frac{1}{|I|}\sum_{i=1}^{|I|}\frac{1}{N|I|}\sum_{k=1}^{N}\mathbf{h}_{i,k}^{1}\mathbf{h}_{i,k}^{1\top})\phi_i\\
=&\loss_{t}(\bar\lambda\{\data_{i}\}_{i=1}^{|I|})+c\frac{1}{N|I|}\sum_{i=1}^{|I|}\sum_{k=1}^{N}\psi(\mathbf{h}_{i,k}^{1\top}\phi_i)\cdot \phi_i^\top(\frac{1}{|I|}\sum_{i=1}^{|I|}\frac{1}{N|I|}\sum_{i=1}^{|I|}\sum_{k=1}^{N}\mathbf{h}_{i,k}^{1}\mathbf{h}_{i,k}^{1\top})\phi_i,
\end{align*}
where $c=\E_{D_\lambda}[\frac{(1-\lambda)^2}{\lambda^2}]$ and $D_\lambda=\frac{\alpha}{\alpha+\beta}\mathrm{Beta}(\alpha+1,\beta)+\frac{\alpha}{\alpha+\beta}\mathrm{Beta}(\alpha+1,\beta)$.
\end{proof}
Given Lemma~\ref{thm:gbml-ls}, the population version of the regularization term can be defined in the same form of Eq.~\eqref{eq:theroem_1_risk} and therefore the generalization theorem and its corresponding conclusions are the same as Theorem 1 and conclusions in the main paper. 

% \subsection{The results under the regression setting}
% \label{sec:app_regression_proof}
Besides, in this work, the regression setting is only well-defined under the label-sharing scenario. The theoretical analysis under the label-sharing scenario (i.e., Lemma~\ref{thm:gbml-ls}) in Section~\ref{sec:ls} are not specific to the classification setting and still hold in the regression setting. 

\subsection{Discussion about the variance of MLTI}
\label{sec:app_proof_variance}
From the above analysis, we can see that the second order of regularization depends on \begin{small}$Cov(\mathbf{h}_{i,k,cr}^{1}\mid \mathbf{h}_{i,k}^{1})$\end{small} in Eqn. (1) (gradient-based meta-learning) or \begin{small}$Cov(\mathbf{x}_{i,k,cr}\mid \mathbf{x}_{i,k})$\end{small} in Eqn. (14) (metric-based meta-learning). Let $G$ denote the random variable which takes a uniform distribution on the indices of the tasks. By using the law of total variance, we have \begin{small}$Cov(\mathbf{h}_{i,k,cr}^{1}\mid \mathbf{h}_{i,k}^{1})=\E[Cov(\mathbf{h}_{i,k,cr}^{1}\mid G, \mathbf{h}_{i,k}^{1})]+Cov(\E[\mathbf{h}_{i,k,cr}^{1}\mid G, \mathbf{h}_{i,k}^{1}])\ge \E[Cov(\mathbf{h}_{i,k,cr}^{1}\mid G, \mathbf{h}_{i,k}^{1})]$\end{small}, where the later is the covariance matrix induced by the individual task interpolation, i.e., $i=j$ in the interpolation process.

\revision{\section{Additional Discussions between MLTI and Individual Task Augmentation}
\label{sec:discussion_within}
As shown in Figure~\ref{fig:method_illustration}, MLTI directly densifies task distributions by generating more tasks rather than apply augmentation strategies to each individual tasks. Compared with individual task augmentation (e.g., ~\citep{yao2020improving,ni2020data}), the reasons why MLTI leads to more dense task distributions are summarized under both label-sharing and non-label-sharing settings.
\begin{itemize}[leftmargin=*]
    \item \textbf{Label-sharing Setting.} MLTI densifies the task distribution by enabling cross-task interpolation. For example, in Pose prediction, we not only interpolate samples within each object, but cross-task interpolation significantly increases the number of tasks. Assume we have two objects (O1 and O2), individual task interpolation approaches (e.g., Meta-Maxup) only generate more samples in O1 or O2, where only one object information is covered. However, MLTI further allows generating tasks with both O1 and O2 information by interpolating data samples from O1 and O2.
    \item \textbf{Non-label-sharing Setting.} MLTI also leads to more dense task distribution under the non-label-sharing setting. For example, in 2-way classification with 3 training classes (C0, C1, C2), there are three original tasks, i.e., three classification pairs (C0, C1), (C0, C2), (C1, C2). Individual task interpolation increases the number of samples for each classification pair by enabling data from mix(C0, C1), mix(C0, C2), mix(C1, C2). However, it does not distinguish pairs like (mix(C0, C1), mix(C0, C2)), whereas MLTI does by allowing cross-tasks interpolation.
\end{itemize}
}
\vspace{-0.3em}
\section{Additional Experimental Setup and Results under Label-sharing Scenario}
\vspace{-0.3em}
\subsection{Detailed Descriptions of Datasets and Experimental Setup}
\label{sec:app_experiment_ls}
Under the label-sharing scenario, We detail the four datasets as well as their corresponding base models. All hyperparameters are listed in Table~\ref{tab:hyperpara_ls}, which are selected by the cross-validation. \revision{Notice that all baselines use the same base models and interpolation-based methods (i.e., MetaMix, Meta-Maxup, MLTI) use the same interpolation strategies.}

\textbf{RainbowMNIST (RMNIST).} Follow~\citet{finn2019online}, we create the RainbowMNIST dataset by changing the size (full/half), color (red/orange/yellow/green/blue/indigo/violet) and angle ($0^{\circ}$, $90^{\circ}$, $180^{\circ}$, $270^{\circ}$) of the original MNIST dataset. Specifically, \revision{we combine training and test set of original MNIST data and randomly select 5,600 samples for each class.} We then split the combined dataset and create a series of subdatasets, where each subdataset corresponds to one combination of image transformations and has 1,000 samples\revision{, where each class has 100 samples}. Each task in RainbowMNIST is randomly sampled from one subdataset. We use 16/6/10 subdatasets for meta-training/validation/testing and list their corresponding combinations of image transformations as follows:

Meta-training combinations:

{\small \texttt{(red, full, 90$^{\circ}$), (indigo, full, 0$^{\circ}$), (blue, full, 270$^{\circ}$), (orange, half, 270$^{\circ}$), (green, full, 90$^{\circ}$), (green, full, 270$^{\circ}$), (orange, full, 180$^{\circ}$), (red, full, 180$^{\circ}$), (green, full, 0$^{\circ}$), (orange, full, 0$^{\circ}$), (violet, full, 270$^{\circ}$), (orange, half, 90$^{\circ}$), (violet, half, 180$^{\circ}$), (orange, full, 90$^{\circ}$), (violet, full, 180$^{\circ}$), (blue, full, 90$^{\circ}$)}}

Meta-validation combinations:

{\small \texttt{(indigo, half, 270$^{\circ}$), (blue, full, 0$^{\circ}$), (yellow, half, 180$^{\circ}$), (yellow, half, 0$^{\circ}$), (yellow, half, 90$^{\circ}$), (violet, half, 0$^{\circ}$)}}

Meta-testing combinations:

{\small \texttt{(yellow, full, 270$^{\circ}$), (red, full, 0$^{\circ}$), (blue, half, 270$^{\circ}$), (blue, half, 0$^{\circ}$), (blue, half, 180$^{\circ}$), (red, half, 270$^{\circ}$), (violet, full, 90$^{\circ}$), (blue, half, 90$^{\circ}$), (green, half, 270$^{\circ}$), (red, half, 90$^{\circ}$)}}

To analyze the effect of task number, we sequentially add more combinations, which are listed as follows:

{\small \texttt{(indigo, half, 180$^{\circ}$), (indigo, full, 180$^{\circ}$), (violet, half, 90$^{\circ}$), (green, full, 180$^{\circ}$), (indigo, half, 0$^{\circ}$), (yellow, full, 90$^{\circ}$), (indigo, 0, 90$^{\circ}$), (indigo, full, 270$^{\circ}$), (yellow, full, 0$^{\circ}$), (red, half, 180$^{\circ}$), (green, half, 0$^{\circ}$), (violet, half, 270$^{\circ}$), (yellow, half, 270$^{\circ}$), (red, full, 270$^{\circ}$), (orange, half, 180$^{\circ}$), (orange, half, 0$^{\circ}$), (green, half, 180$^{\circ}$), (indigo, half, 90$^{\circ}$), (blue, full, 180$^{\circ}$), (violet, full, 0$^{\circ}$), (yellow, full, 180$^{\circ}$), (orange, full, 270$^{\circ}$), (red, half, 0$^{\circ}$), (green, half, 90$^{\circ}$)}}

In RainbowMNIST, we apply the standard convolutional neural network with four convolutional blocks as the base learner, where each block contains 32 output channels. For MAML, we apply the task adaptation process on both the last convolutional block and the classifier. We further use CutMix~\citep{yun2019cutmix} for task interpolation.

\textbf{Pose prediction.} Follow~\citet{yin2020meta}, pose prediction aims to to predict the pose of each object relative to its canonical orientation. We use the released dataset from~\citet{yin2020meta} to evaluate the performance of
MLTI, where 50 and 15 objects are used for meta-training and meta-testing. Each category includes $100$ gray-scale images, and the size of each image is $128\times 128$. 

As for the base model, we follow~\citet{yin2020meta} and define the base model with three fixed blocks and four adaptive blocks, where MAML only performs task-specific adaptation on the adapted blocks. Each fixed block contains one convolutional layer and one batch normalization layer, where the number of the output channels in the three convolutional layers are set as 32, 48, 64, respectively. After the second fixed block, we add one max pooling layer, where both the kernel size and stride are set as 2. The output of the fixed blocks is fed into a fixed Linear layer and reshaped to $14\times 14\times 1$, which is further treated as the input of adapted blocks. Each adapted block includes one convolutional layer and one batch normalization layer, where the number of output channels of all convolutional layer is set as 64. ReLU function is used as the activation layer for all blocks in this experiment. Manifold Mixup~\citep{verma2019manifold} is used for feature interpolation. All baselines are rerun under the same environment.

\textbf{NCI.} We use the \revision{"NCI balanced"} dataset released in~\citep{NCI}, where 9 subdatasets are included (i.e., NCI 1, 33, 41, 47, 81, 83, 109, 123, 145). Each NCI subdataset is a complete bioassay for an binary anticancer activity classification (i.e., positive/negative), where each assay contains a set of chemical compounds. We randomly sample 1000 data samples for each subdataset. In our experiments, we represent each drug compound through the 1024 bit fingerprint features extracted by RDKit~\citep{Landrum2016RDKit2016_09_4}, where each fingerprint bit corresponds to a fragment of the molecule. We select NCI 41, 47, 81, 83, 109, 145 for meta-training and NCI 1, 33, 123 for meta-testing, where each task is sampled from one subdataset.

The extracted 1024 bit fingerprint features are fed into an neural network with two fully connected blocks and one linear regressor. Each fully connected block contains one linear layer, one batch normalization layer and one Leakyrelu function (negative slope: 0.01) as activation layer, where the number of output neurons of each fully connected block is set as 500. In our experiments, for MAML, the parameters in the first fully connected block is globally shared across all tasks, and the rest layers are set as adapted layers. We adopt Manifold Mixup~\citep{verma2019manifold} as the interpolation strategy.

\textbf{TDC Metabolism.} Similar to NCI dataset, we create another bio-related dataset -- TDC Metabolism. In TDC Metabolism, we select 8 subdatasets related to drug metabolism from the whole TDC dataset~\citep{tdc}, including CYP P450 2C19/2D6/3A4/1A2/2C9 Inhibition, CYP2C9/CYP2D6/CYP3A4 Substrate. The aim of each dataset is to predict whether each drug compound has the corresponding property. We use P450 1A2/3A4/2D6 and CYP2C9/CYP2D6 substrate for meta-training, and CYP2C19/2C9 and CYP3A4 substrate for meta-testing. We balance each subdataset by randomly selecting at most 1000 data samples and each task is randomly sampled from one subdataset. Analogy to the NCI dataset, we use the same neural network architecture and features (i.e., 1024 bit fingerprint) for TDC Metabolism. Manifold Mixup~\citep{verma2019manifold} is used as the interpolation strategy.

\begin{table*}[h]
\small
\caption{Hyperparameters under the label-sharing scenario.}
\vspace{-1em}
\label{tab:hyperpara_ls}
\begin{center}
\begin{tabular}{l|c|c|c|c}
\toprule
Hyperparameters (MAML) &  Pose & RMNIST & NCI & Metabolism\\\midrule
inner-loop learning rate & 0.01 & 0.01 & 0.01 & 0.01 \\
outer-loop learning rate& 0.001 & 0.001  & 0.001 & 0.001 \\
Beta($\alpha$, $\beta$), $\alpha=\beta$ & 0.5 ($i=j$), 0.1 ($i\neq j$) & 2.0 & 2.0 & 0.5\\
num updates & 5 & 5 & 5 & 5\\
batch size & 10 & 4 & 4 & 4 \\
query size for meta-training & 15 & 1 & 10 & 10\\
maximum training iterations & 10,000 & 30,000 & 10,000 & 10,000\\\midrule\midrule
Hyperparameters (ProtoNet) &  Pose & RMNIST & NCI & Metabolism\\\midrule
learning rate& n/a & 0.001  & 0.001 & 0.001 \\
Beta($\alpha$, $\beta$), $\alpha=\beta$ & n/a & 2.0 & 0.5 & 0.5\\
batch size & n/a & 4 & 4 & 4 \\
query size for meta-training & n/a & 1 & 10 & 10\\
maximum training iterations & n/a & 30,000 & 10,000 & 10,000\\
\bottomrule
\end{tabular}
\end{center}
\end{table*}

\subsection{Compatibility Analysis under Label-sharing Scenario}
\label{sec:app_compatibility_ls}
In Table~\ref{tab:compatibility_ls}, we show the additional compatibility analysis under the label-sharing scenario. We observe that MLTI achieves the best performance under different backbone meta-learning algorithms, indicating the compatibility and effectiveness of MLTI in improving the generalization ability.

\subsection{Effect of the Number of Meta-training Tasks under Label-sharing Scenario}
\label{sec:app_task_num_ls}
For RainbowMNIST, we analyze the effect of the number of meta-training combinations with respect to the performance, where the number of meta-training combinations directly reflects the number of meta-training tasks. Figure~\ref{fig:rmnist_maml} and~\ref{fig:rmnist_protonet} illustrate the results of MAML and ProtoNet, respectively. The results indicate that MLTI consistently improves the performance, especially when the number of combinations are limited (e.g., Figure~\ref{fig:rmnist_protonet}).

\begin{figure}[h]
	\centering
	\begin{subfigure}[c]{0.35\textwidth}
		\centering
		\includegraphics[width=\textwidth]{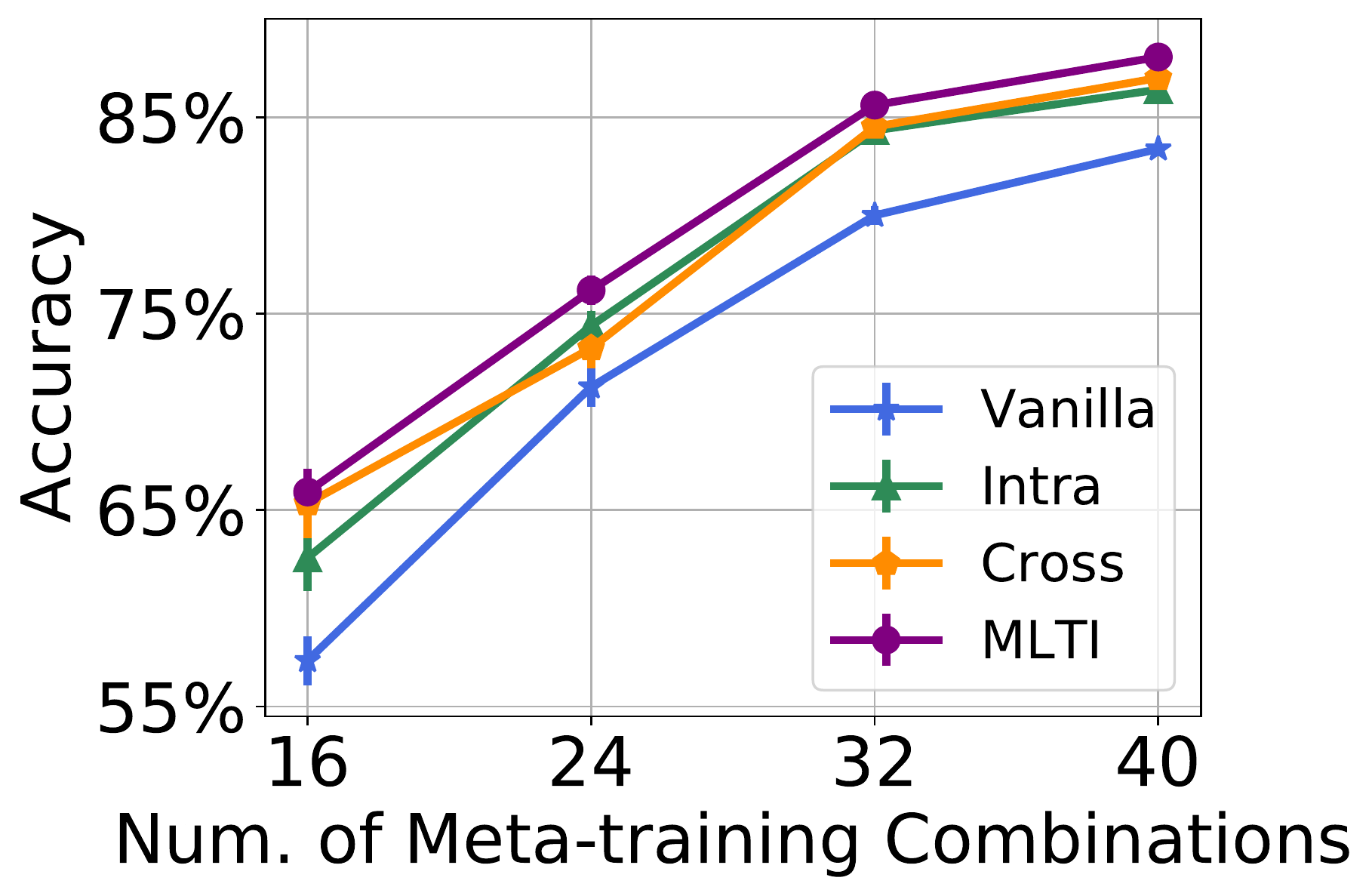}
		\caption{\label{fig:rmnist_maml}: MAML}
	\end{subfigure}
    \begin{subfigure}[c]{0.35\textwidth}
		\centering
		\includegraphics[width=\textwidth]{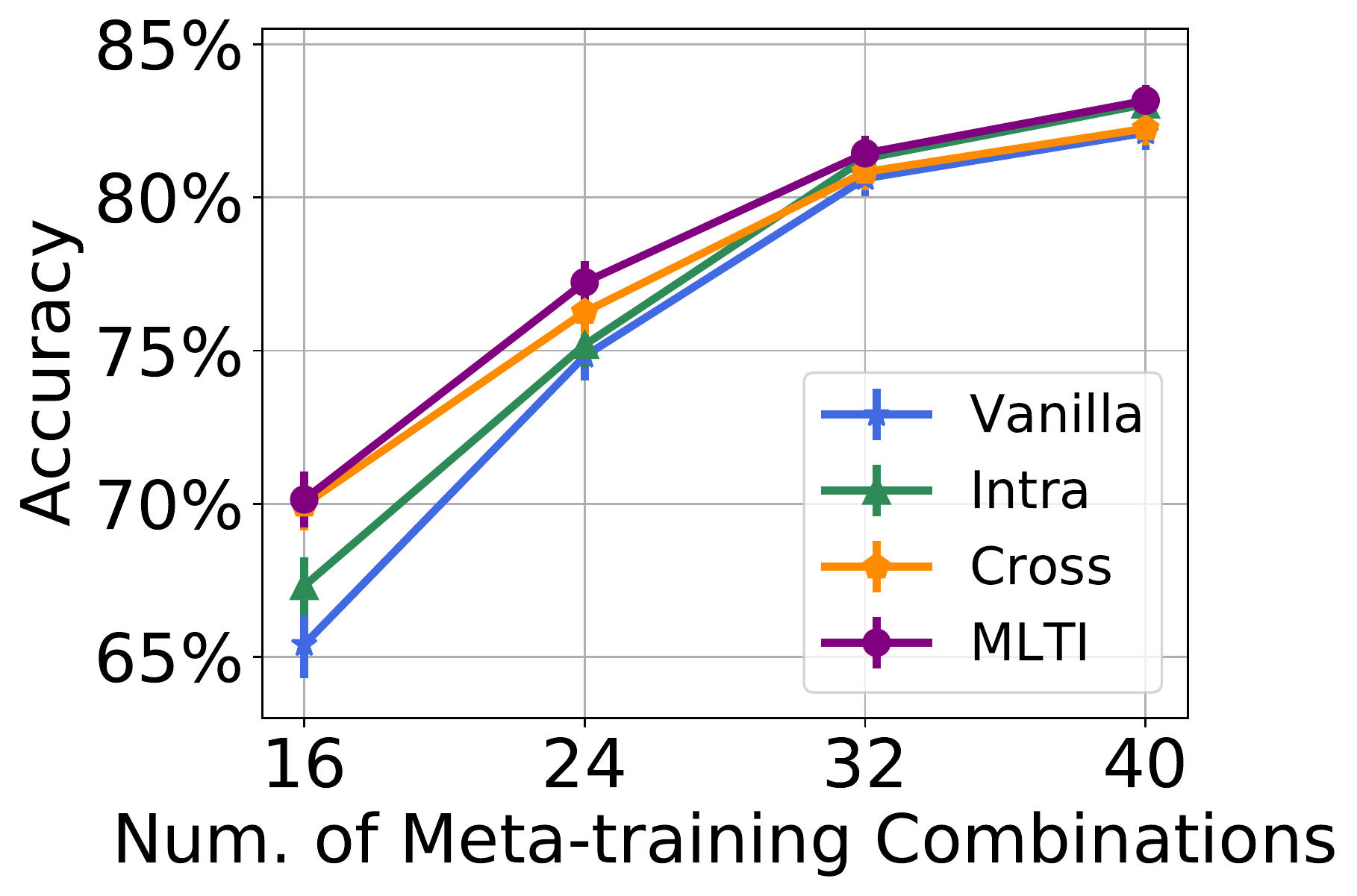}
		\caption{\label{fig:rmnist_protonet}: ProtoNet}
	\end{subfigure}
	\caption{Accuracy w.r.t. the number of meta-training combinations of transformations in RainbowMNIST. Intra and Cross represent the intra-task interpolation (i.e., \begin{small}$\mathcal{T}_i=\mathcal{T}_j$\end{small}) and the cross-task interpolation (i.e., \begin{small}$\mathcal{T}_i\neq\mathcal{T}_j$\end{small}), respectively. }
	\label{fig:task_num}
\end{figure}

\begin{table*}[t]
\small
\caption{Additional compatibility analysis under the label-sharing scenario (evaluation metric: MSE for Pose and accuracy for other datasets), where the 95\% confidence intervals are also reported.}
\vspace{-1em}
\label{tab:compatibility_ls}
\begin{center}
\setlength{\tabcolsep}{1.6mm}{
\begin{tabular}{ll|c|c|c|c}
\toprule
\multicolumn{2}{l|}{Model} &  Pose (15-shot) & RMNIST (1-shot) & NCI (5-shot) & Metabolism (5-shot) \\\midrule
\multirow{2}{*}{MatchingNet} & & n/a &  73.87 $\pm$ 1.24\% & 75.03 $\pm$ 0.89\% & 60.95 $\pm$ 0.94\% \\
& +MLTI & n/a & \textbf{75.36 $\pm$ 0.81\%} & \textbf{76.81 $\pm$ 0.77\%} & \textbf{63.02 $\pm$ 1.09\%} \\\midrule
\multirow{2}{*}{MetaSGD} & &  2.227 $\pm$ 0.098 & 66.68 $\pm$ 1.28\% & 77.74 $\pm$ 0.82\% & 57.54 $\pm$ 1.03\% \\
& +MLTI & \textbf{1.938 $\pm$ 0.078} & \textbf{72.78 $\pm$ 1.06\%} & \textbf{78.43 $\pm$ 0.86\%} & \textbf{61.83 $\pm$ 0.99\%} \\\midrule
\multirow{2}{*}{ANIL} & & 6.947 $\pm$ 0.159  & 56.52 $\pm$ 1.18\% & 77.65 $\pm$ 0.79\% & 57.63 $\pm$ 1.07\% \\
& +MLTI & \textbf{6.042 $\pm$ 0.146} & \textbf{64.63 $\pm$ 1.47\%} & \textbf{78.46 $\pm$ 0.75\%} & \textbf{60.34 $\pm$ 1.01\%}\\\midrule
\multirow{2}{*}{MC} &  & 2.174 $\pm$ 0.096 & 58.03 $\pm$ 1.24\% & 77.25 $\pm$ 0.80\% & 58.37 $\pm$ 1.02\% \\
& +MLTI & \textbf{1.904 $\pm$ 0.073} & \textbf{63.25 $\pm$ 1.36\%} & \textbf{78.52 $\pm$ 0.86\%} & \textbf{60.59 $\pm$ 1.05\%} \\
\bottomrule
\end{tabular}}
\end{center}
\vspace{-1.5em}
\end{table*}

\section{Additional Experimental Setup and Results under Non-label-sharing Scenario}
\subsection{Detailed Dataset Descriptions of Experimental Setup}
\label{sec:app_experiment_nls}
In this section, we detail the dataset description and the model architecture under the non-label-sharing scenario. The hyperparameters are selected by cross-validation and listed in Table~\ref{tab:hyperpara_nls}. \revision{For fair comparison, all baselines adopt the same base models. Additionally, all interpolation-based methods (i.e., MetaMix, Meta-Maxup, MLTI) adopt the same interpolation strategies.}

\textbf{miniImagenet-S.} In miniImagenet-S, we reduce the number of tasks by controlling the number of meta-training classes. Specifically, in miniImagenet-S, the following classes are used for meta-training:

{\small \texttt{n03017168, n07697537, n02108915, n02113712, n02120079, n04509417, n02089867, n03888605, n04258138,  n03347037, n02606052, n06794110}}

To analyze the effect of task number, we incrementally add more classes by the following sequence: 

{\small \texttt{n03476684, n02966193, n13133613, n03337140, n03220513, n03908618, n01532829, n04067472, n02074367, n03400231, n02108089, n01910747, n02747177, n02795169, n04389033, n04435653, n02111277, n02108551, n04443257, n02101006, n02823428, n03047690, n04275548, n04604644, n02091831, n01843383, n02165456, n03676483, n04243546, n03527444, n01770081, n02687172, n09246464, n03998194, n02105505, n01749939, n04251144, n07584110, n07747607, n04612504, n01558993, n03062245, n04296562, n04596742, n03838899, n02457408, n13054560, n03924679, n03854065, n01704323, n04515003, n03207743}}

We apply the same base learner as~\citet{finn2017model} in our experiments, which contains four convolutional blocks and a classifier layer. Each convolutional block includes a convolutional layer, a batch normalization layer and a ReLU activation layer. For MAML, we apply the task-specific adaptation on the last convolutional block and the classifier layer, which yields the best empirical performance.

\textbf{ISIC.} In ISIC dataset, we select task 3 in ``ISIC 2018: Skin Lesion Analysis Towards Melanoma Detection" challenge~\citep{milton2019automated}, where 10,015 medical images are labeled by seven lesion categories: Nevus, Dermatofibroma, Melanoma, Pigmented Bowen's, Benign Keratoses, Basal Cell Carcinoma, Vascular. Follow~\citet{li2020difficulty}, we use four categories with the largest number of categories as meta-training classes, including Nevus, Melanoma, Benign Keratoses, Basal Cell Carcinoma. The rest three categories are treated as meta-testing classes. We apply N-way, K-shot settings in ISIC and set $N=2$ in our experiments. Thus, there are only six class combinations for the meta-training process. Each medical image in ISIC are re-scaled to the size of $84\times84\times3$ and the base model as well as other settings are the same as miniImagenet-S.

\textbf{DermNet-S.} We construct the Dermnet-S dataset from the public Dermnet Skin Disease Atlas~\citep{Dermnet}, which includes more than 22,000 across 625 fine-grained classes after removing duplicated images/classes. Similar to~\citep{prabhu2018prototypical}, we focus on the classes with no less than 30 images, resulting in 203 selected classes. \revision{The base model and other settings are the same as miniImagenet-S and ISIC.} The selected classes has a long-tail and we use the top-30 classes for meta-training and the bottom-53 classes for meta-testing. The detailed meta-training and meta-testing classes are listed as follows. 

Meta-training classes:\\
{\small \texttt{Seborrheic Keratoses Ruff, Herpes Zoster, Atopic Dermatitis Adult Phase, Psoriasis Chronic Plaque, Eczema Hand, Seborrheic Dermatitis, Keratoacanthoma, Lichen Planus, Epidermal Cyst, Eczema Nummular, Tinea (Ringworm) Versicolor, Tinea (Ringworm) Body, Lichen Simplex Chronicus, Scabies, Psoriasis Palms Soles, Malignant Melanoma, Candidiasis large Skin Folds, Pityriasis Rosea, Granuloma Annulare, Erythema Multiforme, Seborrheic Keratosis Irritated, Stasis Dermatitis and Ulcers, Distal Subungual Onychomycosis, Allergic Contact Dermatitis, Psoriasis, Molluscum Contagiosum, Acne Cystic, Perioral Dermatitis, Vasculitis, Eczema Fingertips}}

Meta-testing classes:\\
{\small \texttt{Warts, Ichthyosis Sex Linked, Atypical Nevi, Venous Lake, Erythema Nodosum, Granulation Tissue, Basal Cell Carcinoma Face, Acne Closed Comedo, Scleroderma, Crest Syndrome, Ichthyosis Other Forms, Psoriasis Inversus, Kaposi Sarcoma, Trauma, Polymorphous Light Eruption, Dermagraphism, Lichen Sclerosis Vulva, Pseudomonas, Cutaneous Larva Migrans, Psoriasis Nails, Corns, Lichen Sclerosus Penis, Staphylococcal Folliculitis, Chilblains Perniosis, Psoriasis Erythrodermic, Squamous Cell Carcinoma Ear, Basal Cell Carcinoma Ear, Ichthyosis Dominant, Erythema Infectiosum, Actinic Keratosis Hand, Basal Cell Carcinoma Lid, Amyloidosis, Spiders, Erosio Interdigitalis Blastomycetica, Scarlet Fever, Pompholyx, Melasma, Eczema Trunk Generalized, Metastasis, Warts Cryotherapy, Nevus Spilus, Basal Cell Carcinoma Lip, Enterovirus, Pseudomonas Cellulitis, Benign Familial Chronic Pemphigus, Pressure Urticaria, Halo Nevus, Pityriasis Alba, Pemphigus Foliaceous, Cherry Angioma, Chapped Fissured Feet, Herpes Buttocks, Ridging Beading}}

To further analyze the effect of task number, similar to miniImagenet, we incrementally add more classes for meta-training by the following sequence:

{\small \texttt{Lupus Chronic Cutaneous, Rosacea, Genital Warts, Dermatofibroma, Seborrheic Keratoses Smooth, Basal Cell Carcinoma Lesion, Sun Damaged Skin, Tinea (Ringworm) Groin, Lichen Sclerosus Skin, Atopic Dermatitis Childhood Phase, Psoriasis Guttate, Warts Common, Warts Plantar, Herpes Cutaneous, Eczema Subacute, Psoriasis Scalp, Bullous Pemphigoid, Sebaceous Hyperplasia, Pyogenic Granuloma, Phototoxic Reactions, Urticaria Acute, CTCL Cutaneous T-Cell Lymphoma, Drug Eruptions, Mucous Cyst, Alopecia Areata, Hidradenitis Suppurativa, Herpes Type 1 Recurrent, Viral Exanthems, Skin Tags Polyps, Melanocytic Nevi, Dermatitis Herpetiformis, Eczema Foot, Morphea, Intertrigo, Atopic Dermatitis Infant phase, Bowen Disease, Necrobiosis Lipoidica, Lentigo Adults, Xanthomas, Rhus Dermatitis, Keratosis Pilaris, Schamberg Disease, Rosacea Nose, Chondrodermatitis Nodularis, Keloids, Tinea (Ringworm) Foot Webs, Tinea (Ringworm) Laboratory, Porokeratosis, Impetigo, Basal Cell Carcinoma Pigmented, Porphyrias, Epidermal Nevus, Fixed Drug Eruption, Venous Malformations, Acne Open Comedo, Perlèche, Acne Pustular, Herpes Type 1 Primary, Tinea (Ringworm) Scalp, Neurofibromatosis, Warts Flat, Pityriasis Rubra Pilaris, Hemangioma, Herpes Type 2 Primary, Tinea (Ringworm) Hand Dorsum, Neurotic Excoriations, Tinea (Ringworm) Primary Lesion, Basal Cell Carcinoma Nose, Dariers disease, Tinea (Ringworm) Foot Dorsum, Tinea (Ringworm) Face, Tinea (Ringworm) Incognito, Acanthosis Nigricans, Onycholysis, Warts Digitate, Psoriasis Pustular Generalized, Varicella, Basal Cell Carcinoma Superficial, Herpes Simplex, Nevus Sebaceous, Actinic Keratosis 5 FU, Acne Keloidalis, Hemangioma Infancy, Candida Penis, Tuberous Sclerosis, Stucco Keratoses, Eczema Herpeticum, Dyshidrosis, Epidermolysis Bullosa, Actinic Cheilitis Squamous  Cell Lip, Ticks, Actinic Keratosis Face, Chronic Paronychia, Biting Insects, Dermatomyositis, Grovers Disease, Atypical Nevi Dermoscopy, Patch Testing, Telangiectasias, Pityriasis Lichenoides, Psoriasis Hand, Actinic Keratosis Lesion, Lichen Planus Oral, Tinea (Ringworm) Foot Plantar, Eczema Chronic, Herpes Type 2 Recurrent, Lupus Acute, Eczema Asteatotic, Pilar Cyst, Pemphigus, Vitiligo, Keratolysis Exfoliativa, AIDS (Acquired Immunodeficiency Syndrome), Syringoma, Habit Tic Deformity, Congenital Nevus, Angiokeratomas, Prurigo Nodularis, Pediculosis Pubic, Tinea (Ringworm) Palm}}

We use CutMix~\citep{yun2019cutmix} to interpolate samples in the above three image classification datasets. Besides, the interpolation strategy is applied on the query set when $i=j$, which empirically achieves better performance.

\textbf{Tabular Murris.} Follow~\citep{cao2020concept}, the Tabular Murris dataset is collected from 23 organs, which contains 105,960 cells of 124 cell types. We aim to classify the cell type of each cell, which is represented by 2,866 genes (i.e, the dimension of features is 2,866). We use the code of~\citet{cao2020concept} to construct tasks, where 15/4/4 organs are selected for meta-training/validation/testing. The selected organs are detailed as follows:

Meta-training organs: \\
{\small \texttt{BAT, MAT, Limb Muscle, Trachea, Heart, Spleen, GAT, SCAT, Mammary Gland, Liver, Kidney, Bladder, Brain Myeloid, Brain Non-Myeloid, Diaphragm.}}

Meta-validation organs: \\
\texttt{Skin, Lung, Thymus, Aorta}

Meta-testing organs: \\
\texttt{Large Intestine, Marrow, Pancreas, Tongue}

In Tabular Murris, the base model contains two fully connected blocks and a linear regressor, where each fully connected block contains a linear layer, a batch normalization layer, a ReLU activation layer, and a dropout layer. Follow~\citet{cao2020concept}, the default dropout ratio and the output channels of the linear layer are set as 0.2, 64, respectively. We apply Mainfold Mixup~\citep{verma2019manifold} as the interpolation strategy. It also worthwhile to mention that the performance of gradient-based methods (e.g., MAML) significantly outperforms the reported results in~\citet{cao2020concept} since they only apply 1-step inner-loop gradient descent in their released code. In addition, during the whole meta-testing process, we change the mode from training to evaluation, resulting in the better performance of metric-based methods (e.g., Protonet). 

\begin{table*}[h]
\small
\caption{Hyperparameters under the non-label-sharing scenario.}
\vspace{-1em}
\label{tab:hyperpara_nls}
\begin{center}
\begin{tabular}{l|c|c|c|c}
\toprule
Hyperparameters (MAML) & miniImagenet-S & ISIC & DermNet-S & Tabular Murris\\\midrule
inner-loop learning rate & 0.01 & 0.01 & 0.01 & 0.01 \\
outer-loop learning rate& 0.001 & 0.001  & 0.001 & 0.001 \\
%\revision{maximum iterations} & \revision{60000} & \revision{60000} & \revision{60000} & \revision{5000}\\
Beta($\alpha$, $\beta$), $\alpha=\beta$ & 2.0 & 2.0 & 2.0 & 2.0\\
num updates & 5 & 5 & 5 & 5\\
batch size & 4 & 4 & 4 & 4 \\
query size for meta-training & 15 & 15 & 15 & 15\\
maximum training iterations & 50,000 & 50,000 & 50,000 & 10,000\\\midrule\midrule
Hyperparameters (ProtoNet) &  Pose & RMNIST & NCI & Metabolism\\\midrule
learning rate& 0.001 & 0.001  & 0.001 & 0.001 \\
% \revision{maximum iterations} & \revision{60000} & \revision{60000} & \revision{60000} & \revision{5000}\\
Beta($\alpha$, $\beta$), $\alpha=\beta$ & 2.0 & 2.0 & 0.5 & 0.5\\
batch size & 4 & 4 & 4 & 4 \\
query size for meta-training & 15 & 15 & 15 & 15 \\
maximum training iterations & 50,000 & 50,000 & 50,000 & 10,000\\
\bottomrule
\end{tabular}
\end{center}
\vspace{-1em}
\end{table*}

\revision{

\subsection{Results on full-size few-shot image classification datasets}
\label{sec:app_full_size}
In this subsection, we provide the results of MLTI and other strategies on full-size miniImagenet and DermNet in Table~\ref{tab:overall_fullsize}, where 64 and 150 training classes are used in the meta-training process, respectively. Under the full-size miniImagenet and DermNet settings, the original meta-training tasks are sufficient to obtain satisfying performance. Nevertheless, applying MLTI also outperforms other strategies, demonstrating its effectiveness in improving generalization ability in meta-learning.

\begin{table*}[t]
\small
\caption{\revision{Results (averaged accuracy $\pm$ 95\% confidence interval) of full-size miniImagenet and DermNet.}}
\vspace{-1em}
\label{tab:overall_fullsize}
\begin{center}
\setlength{\tabcolsep}{1.mm}{
\begin{tabular}{ll|cc|cc}
\toprule
\multirow{2}{*}{Backbone}  & \multirow{2}{*}{Strategies} & \multicolumn{2}{c|}{miniImagenet-full}  & \multicolumn{2}{c}{DermNet-full}\\
& & 1-shot & 5-shot & 1-shot & 5-shot \\\midrule
\multirow{7}{*}{MAML} & Vanilla & 46.90 $\pm$ 0.79\% & 63.02 $\pm$ 0.68\% & 49.58 $\pm$ 0.83\% & 69.15 $\pm$ 0.69\% \\
& Meta-Reg & 47.02 $\pm$ 0.77\% & 63.19 $\pm$ 0.69\% & 50.10 $\pm$ 0.86\% & 69.73 $\pm$ 0.70\%  \\
& TAML & 46.40 $\pm$ 0.82\% & 63.26 $\pm$ 0.68\% & 50.26 $\pm$ 0.85\% & 69.40 $\pm$ 0.75\%  \\
& Meta-Dropout & 47.47 $\pm$ 0.81\% & 64.11 $\pm$ 0.71\% & 51.10 $\pm$ 0.84\% & 69.08 $\pm$ 0.69\%  \\
& MetaMix & 47.81 $\pm$ 0.78\% & 64.22 $\pm$ 0.68\% & 51.83 $\pm$ 0.83\% &  71.57 $\pm$ 0.67\% \\
& Meta-Maxup & 47.68 $\pm$ 0.79\% & 63.51 $\pm$ 0.75\% & 51.95 $\pm$ 0.88\% & 70.84 $\pm$ 0.68\%  \\\cmidrule{2-6}
& \textbf{MLTI (ours)} & \textbf{48.62 $\pm$ 0.76\%} & \textbf{64.65 $\pm$ 0.70\%} & \textbf{52.32 $\pm$ 0.88\%} &  \textbf{71.77 $\pm$ 0.67\%} \\\midrule
\multirow{4}{*}{ProtoNet} & Vanilla & 47.05 $\pm$ 0.79\% & 64.03 $\pm$ 0.68\% & 49.91 $\pm$ 0.79\% & 67.45 $\pm$ 0.70\%   \\
& MetaMix & 47.21 $\pm$ 0.76\% & 64.38 $\pm$ 0.67\% & 51.50 $\pm$ 0.76\% & 69.55 $\pm$ 0.68\%  \\
& Meta-Maxup & 47.33 $\pm$ 0.79\% & 64.43 $\pm$ 0.69\% & 51.18 $\pm$ 0.83\% & 69.07 $\pm$ 0.72\%  \\\cmidrule{2-6}
& \textbf{MLTI (ours)}  & \textbf{48.11 $\pm$ 0.81\%}  & \textbf{65.22 $\pm$ 0.70\%} & \textbf{52.91 $\pm$ 0.81\%} & \textbf{71.30 $\pm$ 0.69\%}  \\
\bottomrule
\end{tabular}}
\end{center}
\end{table*}
}

\subsection{Compatibility Analysis under Non-label-sharing Scenario}
\label{sec:app_compatibility_nls}
In Table~\ref{tab:compatibility_nls_app}, we report the results of additional compatibility analysis under the non-label-sharing scenario. The results validate the effectiveness and compatibility of the proposed MLTI.

\begin{table*}[t]
\small
\caption{Additional compatibility analysis under the setting of the non-label-sharing scenario. We show averaged accuracy $\pm$ 95\% confidence interval.}
\vspace{-1em}
\label{tab:compatibility_nls_app}
\begin{center}
\setlength{\tabcolsep}{1.6mm}{
\begin{tabular}{l|ll|c|c|c|c}
\toprule
 & \multicolumn{2}{l|}{Model}   & miniImagenet-S & ISIC & DermNet-S  & Tabular Muris\\\midrule
\multirow{9}{*}{1-shot} & \multirow{2}{*}{MatchingNet} & & 39.40 $\pm$ 0.70\%  & 61.01 $\pm$ 1.00\%  & 46.50 $\pm$ 0.84\%  &  80.37 $\pm$ 0.90\% \\
& & +MLTI & \textbf{42.09 $\pm$ 0.81\%}  & \textbf{63.87 $\pm$ 1.08\%}  & \textbf{49.11 $\pm$ 0.86\%}  & \textbf{81.72 $\pm$ 0.89\%} \\\cmidrule{2-7}
& \multirow{2}{*}{MetaSGD} &  & 37.98 $\pm$ 0.75\% & 58.03 $\pm$ 0.79\%  & 41.56 $\pm$ 0.80\% & 81.55 $\pm$ 0.91\%\\
& & +MLTI & \textbf{39.58 $\pm$ 0.76\%}  & \textbf{61.57 $\pm$ 1.10\%}  & \textbf{45.49 $\pm$ 0.83\%}   & \textbf{83.31 $\pm$ 0.87\%} \\\cmidrule{2-7}
& \multirow{2}{*}{ANIL} & & 37.66 $\pm$ 0.77\%  & 59.08 $\pm$ 1.04\%  & 43.88 $\pm$ 0.82\%  & 75.67 $\pm$ 0.99\%  \\
& & +MLTI & \textbf{39.15 $\pm$ 0.73\%}  & \textbf{61.78 $\pm$ 1.24\%}  & \textbf{46.79 $\pm$ 0.77\%}  & \textbf{77.11 $\pm$ 1.00\%}  \\\cmidrule{2-7}
& \multirow{2}{*}{MC} & & 37.43 $\pm$ 0.75\% & 58.77 $\pm$ 1.06\% & 43.09 $\pm$ 0.86\% & 80.47 $\pm$ 0.91\% \\
& & +MLTI & \textbf{40.22 $\pm$ 0.77\%} & \textbf{61.53 $\pm$ 0.79\%} & \textbf{47.40 $\pm$ 0.83\%} & \textbf{82.44 $\pm$ 0.88\%}  \\
\midrule\midrule
\multirow{9}{*}{5-shot} & \multirow{2}{*}{MatchingNet} & & 50.21 $\pm$ 0.68\% & 70.16 $\pm$ 0.72\% & 62.56 $\pm$ 0.71\% & 85.99 $\pm$ 0.76\% \\
& & +MLTI & \textbf{54.59 $\pm$ 0.72\%} & \textbf{73.62 $\pm$ 0.84\%} & \textbf{65.65 $\pm$ 0.71\%} & \textbf{87.75 $\pm$ 0.60\%} \\\cmidrule{2-7}
& \multirow{2}{*}{MetaSGD} &  & 49.52 $\pm$ 0.73\% & 68.01 $\pm$ 0.87\% & 58.97 $\pm$ 0.73\% & 91.03 $\pm$ 0.55\% \\
& & +MLTI & \textbf{53.19 $\pm$ 0.69\%} & \textbf{70.44 $\pm$ 0.65\%} & \textbf{63.86 $\pm$ 0.71\%}  & \textbf{92.05 $\pm$ 0.51\%} \\\cmidrule{2-7}
& \multirow{2}{*}{ANIL} & & 49.21 $\pm$ 0.70\% & 69.48 $\pm$ 0.66\% & 60.54 $\pm$ 0.76\% & 81.32 $\pm$ 0.89\% \\
& & +MLTI & \textbf{52.76 $\pm$ 0.72\%} & \textbf{72.01 $\pm$ 0.68\%} & \textbf{63.07 $\pm$ 0.71\%} & \textbf{82.75 $\pm$ 0.89\%} \\\cmidrule{2-7}
& \multirow{2}{*}{MC} & & 49.66 $\pm$ 0.69\% & 68.29 $\pm$ 0.85\% & 60.03 $\pm$ 0.72\% & 89.30 $\pm$ 0.56\% \\
& & +MLTI & \textbf{53.42 $\pm$  0.71\%} & \textbf{70.58 $\pm$ 0.82\%} & \textbf{63.10 $\pm$ 0.68\%} & \textbf{91.23 $\pm$ 0.52\%} \\
\bottomrule
\end{tabular}}
\end{center}
\vspace{-0.6em}
\end{table*}

\subsection{Results of Ablation Study under Non-label-sharing Scenario}
\label{sec:app_ablation_nls}
In Table~\ref{tab:ablation_nls_app}, we report the ablation study under the non-label-sharing scenario. The results indicate that MLTI outperforms all other ablation strategies and achieves better generalization ability.

\begin{table*}[t]
\small
\caption{Ablation study under the non-label-sharing scenario. We find that MLTI performs best.}
\vspace{-1em}
\label{tab:ablation_nls_app}
\begin{center}
\setlength{\tabcolsep}{1.3mm}{
\begin{tabular}{ll|c|c|c|c}
\toprule
Backbone  & Strategies & miniImagenet-S & ISIC & DermNet-S & Tabular Murris\\\midrule
\multirow{4}{*}{MAML (1-shot)} & Vanilla & 38.27 $\pm$ 0.74\%  & 57.59 $\pm$ 0.79\%  & 43.47 $\pm$ 0.83\% & 79.08 $\pm$ 0.91\% \\

& Intra-Intrpl & 39.31 $\pm$ 0.75\% & 60.39 $\pm$ 0.93\% & 47.16 $\pm$ 0.86\% & 81.49 $\pm$ 0.91\% \\

& Cross-Intrpl & 39.91 $\pm$ 0.74\% & 61.06 $\pm$ 1.23\% & 46.21 $\pm$ 0.79\% & 80.65 $\pm$ 0.92\% \\\cmidrule{2-6}

& \textbf{MLTI (ours)} & \textbf{41.58 $\pm$ 0.72\%} & \textbf{61.79 $\pm$ 1.00\%} & \textbf{48.03 $\pm$ 0.79\%} & \textbf{81.73 $\pm$ 0.89\%} \\\midrule

\multirow{4}{*}{MAML (5-shot)} & Vanilla & 52.14 $\pm$ 0.65\%  & 65.24 $\pm$ 0.77\%  & 60.56 $\pm$ 0.74\% & 88.55 $\pm$ 0.60\%  \\

& Intra-Intrpl & 52.74 $\pm$ 0.74\% & 68.96 $\pm$ 0.74\% & 63.65 $\pm$ 0.70\% & 89.89 $\pm$ 0.62\% \\

& Cross-Intrpl & 53.34 $\pm$ 0.77\% & 70.20 $\pm$ 0.70\% & 62.59 $\pm$ 0.76\% & 89.97 $\pm$ 0.56\% \\\cmidrule{2-6}

& \textbf{MLTI (ours)} & \textbf{55.22 $\pm$ 0.76\%} & \textbf{70.69 $\pm$ 0.68\%} & \textbf{64.55 $\pm$ 0.74\%} & \textbf{91.08 $\pm$ 0.54\%} \\\midrule\midrule

\multirow{4}{*}{ProtoNet (1-shot)} & Vanilla & 36.26 $\pm$ 0.70\% & 58.56 $\pm$ 1.01\% & 44.21 $\pm$ 0.75\% & 80.03 $\pm$ 0.90\% \\

& Intra-Intrpl & 39.31 $\pm$ 0.75\% & 60.70 $\pm$ 1.16\% & 46.97 $\pm$ 0.81\% & 80.56 $\pm$ 0.94\%  \\

& Cross-Intrpl & 40.95 $\pm$ 0.76\% & 62.22 $\pm$ 1.19\% & 48.68 $\pm$ 0.85\% & 81.22 $\pm$ 0.90\% \\\cmidrule{2-6}

& \textbf{MLTI (ours)}  & \textbf{41.36 $\pm$ 0.75\%} & \textbf{62.82 $\pm$ 1.13\%} & \textbf{49.38 $\pm$ 0.85\%} & \textbf{81.89 $\pm$ 0.88\%} \\\midrule

\multirow{4}{*}{ProtoNet (5-shot)} & Vanilla & 50.72 $\pm$ 0.70\% & 66.25 $\pm$ 0.96\% & 60.33 $\pm$ 0.70\% & 89.20 $\pm$ 0.56\% \\

& Intra-Intrpl & 53.33 $\pm$ 0.68\% & 70.12 $\pm$ 0.88\% & 62.91 $\pm$ 0.75\% & 89.78 $\pm$ 0.58\% \\

& Cross-Intrpl & 54.62 $\pm$ 0.72\% & 71.47 $\pm$ 0.89\% & 64.32 $\pm$ 0.71\% & 90.05 $\pm$ 0.57\% \\\cmidrule{2-6}

& \textbf{MLTI (ours)}  & \textbf{55.34 $\pm$ 0.74\%} & \textbf{71.52 $\pm$ 0.89\%} & \textbf{65.19 $\pm$ 0.73\%} & \textbf{90.12 $\pm$ 0.59\%} \\

\bottomrule
\end{tabular}}
\end{center}
\end{table*}

\revision{

\subsection{Additional Analysis about Model Capacity and Hyperparameters}
\label{sec:app_hyperparameter}
\subsubsection{Model Capacity Analysis}
Here, we investigate the performance of MLTI with a heavier backbone model. To increase the model capacity, we use ResNet-12 as the base model. The results on miniImagenet-S and Dermnet-S are reported in Table~\ref{tab:overall_resnet}. Here, the results of Meta-Maxup and MetaMix are also reported for comparison. According to the results, MLTI outperforms vanilla MAML/ProtoNet, Meta-Maxup and MetaMix, verifying its effectiveness even with a larger base model.

\begin{table*}[t]
\small
\caption{\revision{Analysis on the heavier base model (ResNet-12) under 1-shot miniImagenet-S and DermNet-S settings.}}
\vspace{-1em}
\label{tab:overall_resnet}
\begin{center}
\setlength{\tabcolsep}{1.mm}{
\begin{tabular}{ll|c|c}
\toprule
Backbone  & Strategies & miniImagenet-S  & DermNet-S\\\midrule
\multirow{4}{*}{MAML} & Vanilla & 40.02 $\pm$ 0.78\% & 47.58 $\pm$ 0.93\% \\
& MetaMix & 42.26 $\pm$ 0.75\% & 51.40 $\pm$ 0.89\% \\
& Meta-Maxup & 41.97 $\pm$ 0.78\% & 50.82 $\pm$ 0.85\% \\\cmidrule{2-4}
& \textbf{MLTI (ours)} & \textbf{43.35 $\pm$ 0.80\%} & \textbf{52.03 $\pm$ 0.90\%}  \\\midrule
\multirow{4}{*}{ProtoNet} & Vanilla & 40.96 $\pm$ 0.75\% & 48.65 $\pm$ 0.85\% \\
& MetaMix & 42.95 $\pm$ 0.87\% & 51.18 $\pm$ 0.90\% \\
& Meta-Maxup & 42.68 $\pm$ 0.78\% & 50.96 $\pm$ 0.88\% \\\cmidrule{2-4}
& \textbf{MLTI (ours)}  & \textbf{44.08 $\pm$ 0.83\%} & \textbf{52.01 $\pm$ 0.93\%} \\
\bottomrule
\end{tabular}}
\end{center}
\end{table*}

\subsubsection{Analysis of the Interpolation Layers}
We further conduct experiments on Metabolism and Tabular Murris to analyze the performance with different interpolation layers when Manifold Mixup (i.e., interpolating features) is used for data interpolation. Here, ProtoNet is used as backbone. We report the results in Table~\ref{tab:interpolation_layer}. The results indicate (1) fixing the interpolation layer can also boost the performance; (2) randomly selecting the interpolation layer achieves the best performance; (3) interpolating at the lower layer performs similarly as interpolating at the higher layer, indicating the robustness of MLTI with different selected layers. 

\begin{table*}[t]
\small
\caption{\revision{Analysis of interpolation layers. Layer 0, 1 represents randomly select layer 0 or layer 1 for interpolation. None means vanilla ProtoNet.}}
\vspace{-1em}
\label{tab:interpolation_layer}
\begin{center}
\setlength{\tabcolsep}{1.mm}{
\begin{tabular}{l|c|c}
\toprule
Interpolation Layer & Metabolism: 5-shot  &  Tabular Murris: 1-shot \\\midrule
None & 61.06 $\pm$ 0.94\% & 80.03 $\pm$ 0.90\% \\
Layer 0 & 62.53 $\pm$ 0.98\% &  81.18 $\pm$ 0.93\% \\
Layer 1 & 62.38 $\pm$ 0.94\% & 81.25 $\pm$ 0.90\% \\\midrule
Layer 0, 1 & \textbf{63.47 $\pm$ 0.96\%} &  \textbf{81.89 $\pm$ 0.88\%} \\
\bottomrule
\end{tabular}}
\end{center}
\end{table*}

}

\subsection{Additional Results of Analysis about the Number of Tasks}
\label{sec:app_task_num_nls}
Besides the results in the main paper, we further provide the 1-shot results for miniImagenet and Dermnet in Figure~\ref{fig:miniImagenet_1} and~\ref{fig:DermNet_1}, respectively. The results corroborate our findings in the main paper that MLTI consistently improves the performance, especially when the number of tasks is limited.
\begin{figure}[h]
	\centering
	\begin{subfigure}[c]{0.35\textwidth}
		\centering
		\includegraphics[width=\textwidth]{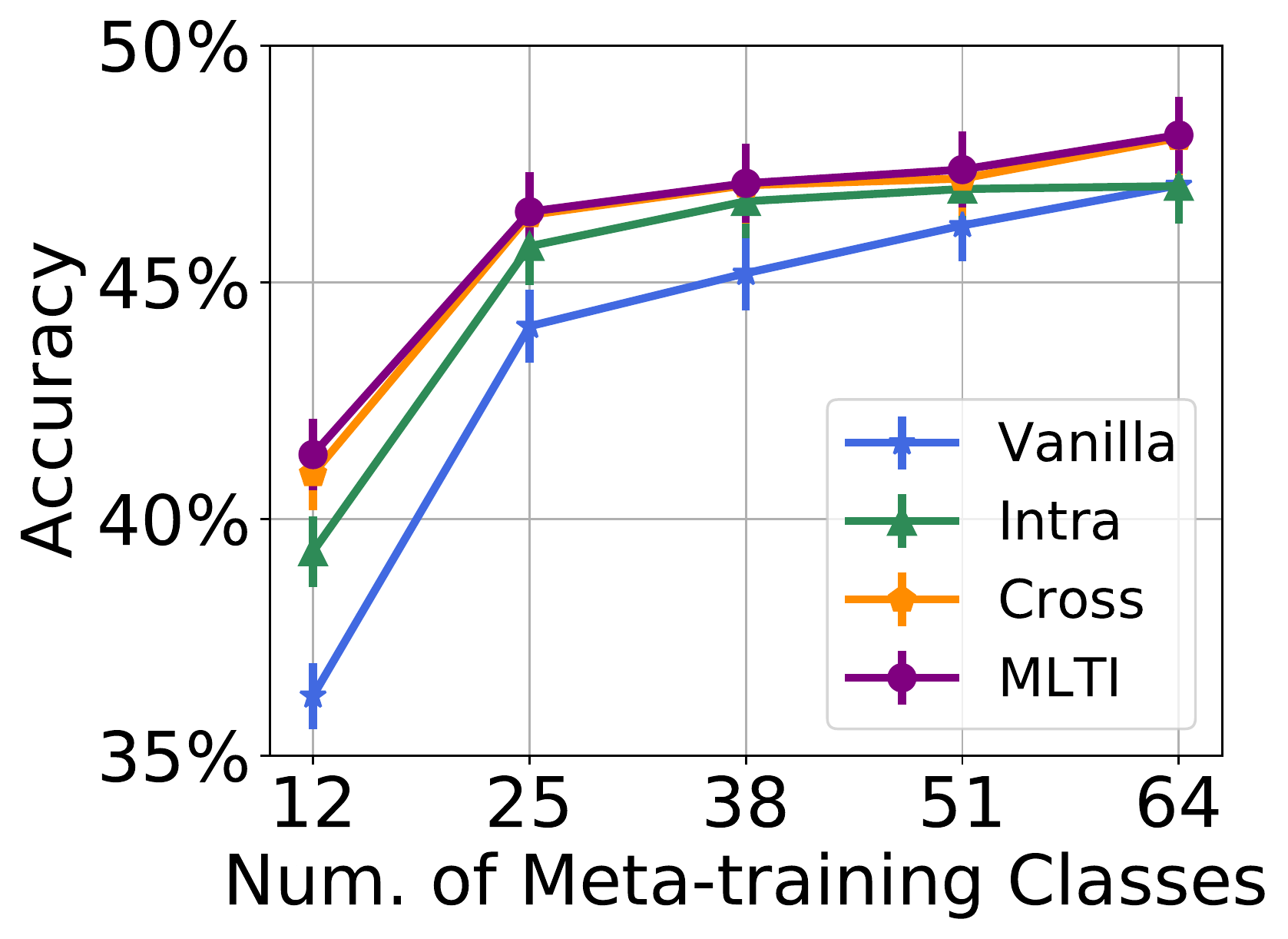}
		\caption{\label{fig:miniImagenet_1}: miniImagenet}
	\end{subfigure}
    \begin{subfigure}[c]{0.35\textwidth}
		\centering
		\includegraphics[width=\textwidth]{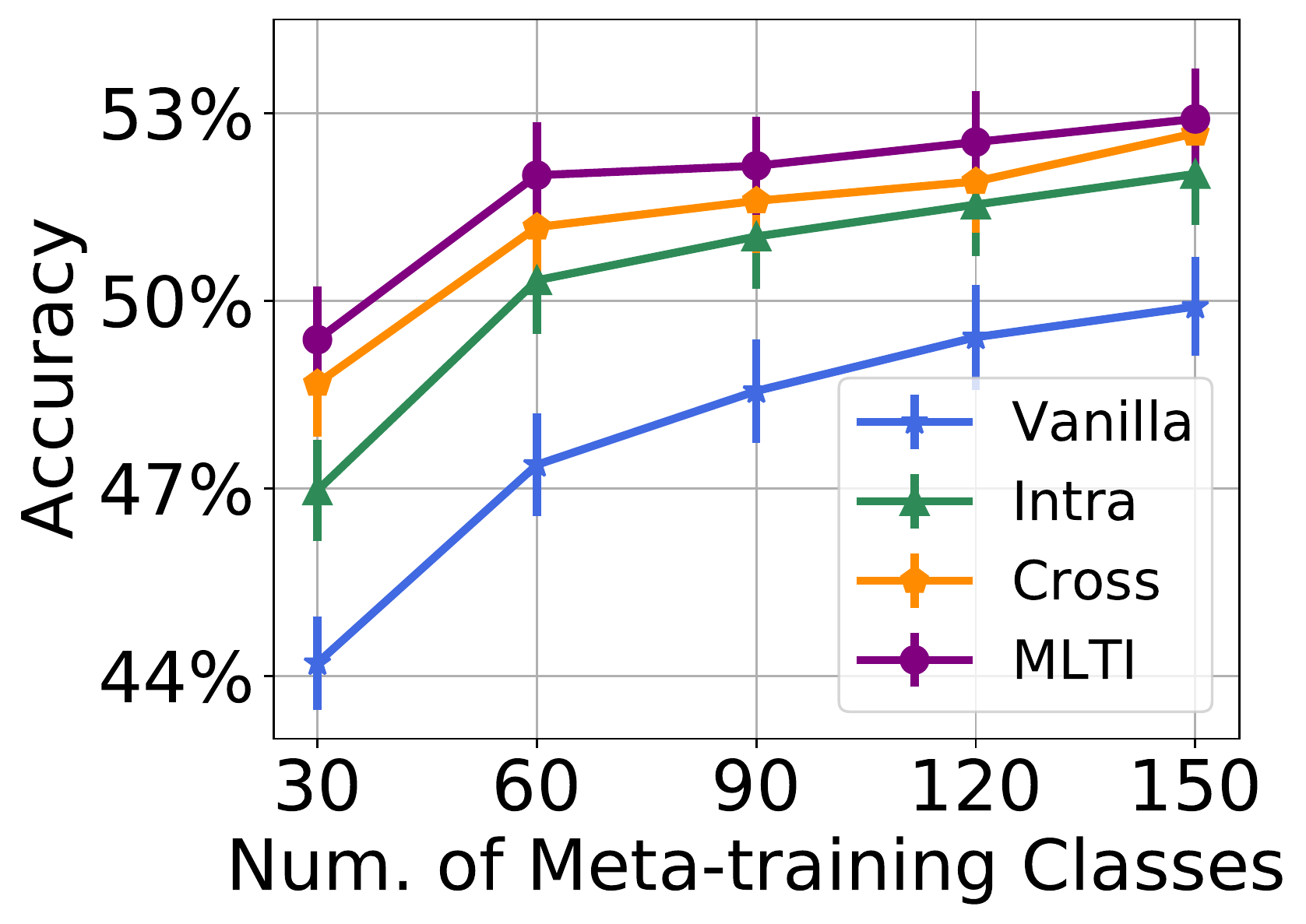}
		\caption{\label{fig:DermNet_1}: DermNet}
	\end{subfigure}
	%%CF.5.18: is there a way to add error bars to these?
	\caption{Accuracy w.r.t. the number of meta-training classes under the non-label-sharing scenario (1-shot). Intra and Cross represent the intra-task interpolation (i.e., \begin{small}$\mathcal{T}_i=\mathcal{T}_j$\end{small}) and the cross-task interpolation (i.e., \begin{small}$\mathcal{T}_i\neq\mathcal{T}_j$\end{small}), respectively. }
	\label{fig:task_num}
\end{figure}

\subsection{MLTI with Extremely Limited Tasks}
\label{sec:app_extremly_limited}
In this section, we investigate how MLTI performs when we only have extremely limited tasks. Here, we decrease the number of distinct meta-training tasks of miniImagenet and DermNet to 56 by reducing the number of base classes to 8 since $ {8 \choose 5}=56$. Under this setting, two additional baselines with supervised training process (SL)~\citep{dhillon2019baseline} and multi-task training process (MTL)~\citep{wang2021bridging} are also used for comparison. We also report the results of the best baseline -- MetaMix. All results are listed in Table~\ref{tab:extremly_small} and corroborate the effectiveness of MLTI even with extremely limited meta-training tasks.
\begin{table}[h]
\small
\caption{Results of MLTI with extremely limited tasks. SL and MTL represent methods with supervised and multi-task training process, respectively.}
\vspace{-1em}
\label{tab:extremly_small}
\begin{center}
\setlength{\tabcolsep}{1.5mm}{
\begin{tabular}{ll|cc|cc}
\toprule
\multirow{2}{*}{Model} &  & \multicolumn{2}{c|}{	miniImagenet-S (8 classes)} & \multicolumn{2}{c}{DermNet-S (8 classes)}\\
&  & 1-shot & 5-shot & 1-shot & 5-shot\\\midrule

& SL  & 32.37 $\pm$ 0.60\% &	45.57 $\pm$ 0.69\% & 35.69 $\pm$ 0.58\% & 53.38 $\pm$ 0.60\% \\
& MTL  & 33.01 $\pm$ 0.64\% & 46.79 $\pm$ 0.65\% & 36.20 $\pm$ 0.64\% & 54.53 $\pm$ 0.63\%
\\\midrule
\multirow{3}{*}{MAML} & Vanilla & 36.09 $\pm$ 0.75\% & 50.01 $\pm$ 0.67\% & 37.98 $\pm$ 0.66\% & 54.35 $\pm$ 0.67\% \\
& MetaMix & 37.74 $\pm$ 0.77\% & 51.79 $\pm$ 0.68\% & 40.36 $\pm$ 0.73\% & 55.75 $\pm$ 0.69\% \\\cmidrule{2-6}
& \textbf{MLTI (ours)} & 38.13 $\pm$ 0.70\% & \textbf{53.53 $\pm$ 0.72\%} & \textbf{41.32 $\pm$ 0.71\%} & \textbf{56.95 $\pm$ 0.64\%} \\\midrule

\multirow{3}{*}{ProtoNet} & Vanilla & 35.07 $\pm$ 0.73\% & 45.10 $\pm$ 0.63\% & 37.72 $\pm$ 0.67\% & 53.18 $\pm$ 0.66\% \\
& MetaMix & 38.12 $\pm$ 0.71\% &	50.25 $\pm$ 0.69\% &	40.07 $\pm$ 0.69\% &	55.07 $\pm$ 0.68\% \\\cmidrule{2-6}
& \textbf{MLTI (ours)} & \textbf{39.64 $\pm$ 0.77\%} &	51.64 $\pm$ 0.65\% &	41.31 $\pm$ 0.71\% &	56.09 $\pm$ 0.67\%
 \\
\bottomrule
\end{tabular}}
\end{center}
\vspace{-1.5em}
\end{table}

\revision{
\subsection{Results on Additional Datasets}
We further provided two additional datasets under the non-label-sharing setting to show the effectiveness of MLTI -- tieredImageNet-S and Huffpost. Both datasets are non-label-sharing datasets. We detail the data descriptions and hyperparameters in the following.
\begin{itemize}[leftmargin=*]
    \item \textbf{tieredImageNet-S}. tieredImageNet~\citep{ren2018meta} is a few-shot image classification dataset, which consists of 351/97/160 images for meta-training/validation/testing. Following miniImagenet-S and DermNet-S, we use 35 original meta-training classes in tieredImageNet. All hyperparameters, base model and interpolation strategies are set as the same as miniImagenet-S.
    \item \textbf{Huffpost}. Huffpost~\citep{huffpost} aims to classify the category for each sentence. We follow~\citet{bao2019few} to preprocess Huffpost data, where 25/6/10 classes are used for meta-training/validation/testing. In our experiments, to construct the base model, we use ALBERT~\citep{lan2019albert} as the encoder and use two fully connected layers as the classifier. The query set size for training and testing is set as 5. Due to the memory limitation, we set the batch size as 1 and the learning rate (outer-loop learning rate) for MAML as 2e-5. The inner-loop learning rate for MAML is set as 0.01. We set $\alpha=\beta=2.0$ in $\mathrm{Beta}(\alpha, \beta)$. The number of inner loop updates in MAML is set as 5 and the maximum training iteration is set as 10,000. Manifold Mixup is used for data interpolation. 
\end{itemize}

We report the results in Table~\ref{tab:additional_data}. In these two additional datasets, MLTI also outperforms other methods, showing its promise in improving generalization in meta-learning.

\begin{table*}[h]
\small
\caption{\revision{Results on Huffpost and tieredImageNet-S. Here, averaged accuracies $\pm$ 95\% confidence intervals are reported.}}
\vspace{-1em}
\label{tab:additional_data}
\begin{center}
\setlength{\tabcolsep}{1.mm}{
\begin{tabular}{ll|cc|cc}
\toprule
\multirow{2}{*}{Backbone}  & \multirow{2}{*}{Strategies} & \multicolumn{2}{c|}{tieredImageNet-S} & \multicolumn{2}{c}{NLP: Huffpost}\\
& & 1-shot & 5-shot & 1-shot & 5-shot \\\midrule
\multirow{7}{*}{MAML} & Vanilla & 42.20 $\pm$ 0.84\% & 58.23 $\pm$ 0.77\%& 39.51 $\pm$ 1.07\% & 50.68 $\pm$ 0.90\% \\
& Meta-Reg & 42.87 $\pm$ 0.86\% &  59.16 $\pm$ 0.79\% & 40.32 $\pm$ 1.05\% & 50.96 $\pm$ 0.98\% \\
& TAML & 42.86 $\pm$ 0.84\% & 59.33 $\pm$ 0.76\% & 40.03 $\pm$ 1.00\% & 50.89 $\pm$ 0.88\% \\
& Meta-Dropout & 41.94 $\pm$ 0.82\% & 58.37 $\pm$ 0.77\% & 39.89 $\pm$ 0.98\% & 51.03 $\pm$ 0.91\%   \\
& MetaMix & 43.40 $\pm$ 0.85\% &  61.92 $\pm$ 0.80\%  & 40.64 $\pm$ 1.02\% & 51.65 $\pm$ 0.92\% \\
& Meta-Maxup & 43.69 $\pm$ 0.88\% &  60.00 $\pm$ 0.82\% & 40.39 $\pm$ 1.01\% & 51.80 $\pm$ 0.91\% \\\cmidrule{2-6}
& \textbf{MLTI (ours)} & \textbf{44.32 $\pm$ 0.82\%} & \textbf{62.22 $\pm$ 0.79\%} & \textbf{41.06 $\pm$ 1.04\%} & \textbf{52.53 $\pm$ 0.90\%} \\\midrule
\multirow{4}{*}{ProtoNet} & Vanilla & 43.35 $\pm$ 0.82\% & 59.98 $\pm$ 0.77\% & 41.85 $\pm$ 1.01\% & 58.98 $\pm$ 0.92\%   \\
& MetaMix & 44.14 $\pm$ 0.83\% & 60.97 $\pm$ 0.81\% & 42.27 $\pm$ 0.98\% & 60.43 $\pm$ 0.90\% \\
& Meta-Maxup & 44.40 $\pm$ 0.83\% &  61.79 $\pm$ 0.78\% & 42.39 $\pm$ 1.01\% & 60.27 $\pm$ 0.88\% \\\cmidrule{2-6}
& \textbf{MLTI (ours)}  &  \textbf{45.47 $\pm$ 0.86\%} &  \textbf{62.35 $\pm$ 0.80\%} & \textbf{42.74 $\pm$ 0.96\%} & \textbf{61.09 $\pm$ 0.91\%}  \\
\bottomrule
\end{tabular}}
\end{center}
\end{table*}

}

% \begin{table*}[t]
% \small
% \caption{\revision{Results on tieredImageNet.}}
% \vspace{-1em}
% \label{tab:overall_fullsize}
% \begin{center}
% \setlength{\tabcolsep}{1.mm}{
% \begin{tabular}{ll|cc|cc}
% \toprule
% \multirow{2}{*}{Backbone}  & \multirow{2}{*}{Strategies} & \multicolumn{2}{c|}{tieredImageNet-S}  & \multicolumn{2}{c}{tieredImageNet-full}\\
% & & 1-shot & 5-shot & 1-shot & 5-shot \\\midrule
% \multirow{7}{*}{MAML} & Vanilla &  & 50.53 $\pm$ 0.90\% & 69.18 $\pm$ 0.74\% \\
% & Meta-Reg & & & &   \\
% & TAML &  & &   \\
% & Meta-Dropout & & & &   \\
% & MetaMix &  & & &   \\
% & Meta-Maxup &  & & &   \\\cmidrule{2-6}
% & \textbf{MLTI (ours)} & 44.32 $\pm$ 0.82\% &  & &   \\\midrule
% \multirow{4}{*}{ProtoNet} & Vanilla &  & 48.62 $\pm$ 0.84\% & 69.02 $\pm$ 0.74\% \\
% & MetaMix &  & 49.13 $\pm$ 0.85\% & 69.25 $\pm$ 0.53\%  \\
% & Meta-Maxup &  &  & &   \\\cmidrule{2-6}
% & \textbf{MLTI (ours)}  &  & 50.08 $\pm$ 0.83\% & 70.14 $\pm$ 0.75\% \\
% \bottomrule
% \end{tabular}}
% \end{center}
% \vspace{-2em}
% \end{table*}

\subsection{Full Tables with Confidence Interval}
\label{sec:app_full_table}
Table~\ref{tab:overall_nls_app_full},~\ref{tab:cross_nls_full_app} report the full results (accuracy $\pm$ 95\% confidence interval) of Table~\ref{tab:overall_nls},~\ref{tab:cross_nls} in the paper.

\begin{table*}[h]
\small
\caption{Full table of the overall performance (averaged accuracy $\pm$ 95\% confidence interval) under the non-label-sharing scenario.}
\vspace{-1em}
\label{tab:overall_nls_app_full}
\begin{center}
\setlength{\tabcolsep}{1.mm}{
\begin{tabular}{ll|c|c|c|c}
\toprule
Backbone  & Strategies & miniImagenet-S & ISIC & DermNet-S & Tabular Murris\\\midrule
\multirow{7}{*}{MAML (1-shot)} & Vanilla & 38.27 $\pm$ 0.74\%  & 57.59 $\pm$ 0.79\%  & 43.47 $\pm$ 0.83\% & 79.08 $\pm$ 0.91\% \\

& Meta-Reg & 38.35 $\pm$ 0.76\% & 58.57 $\pm$ 0.94\% & 45.01 $\pm$ 0.83\% & 79.18 $\pm$ 0.87\% \\

& TAML & 38.70 $\pm$ 0.77\% & 58.39 $\pm$ 1.00\% & 45.73 $\pm$ 0.84\% & 79.82 $\pm$ 0.87\% \\

& Meta-Dropout & 38.32 $\pm$ 0.75\% & 58.40 $\pm$ 1.02\% & 44.30 $\pm$ 0.84\% & 78.18 $\pm$ 0.93\% \\

& MetaMix & 39.43 $\pm$ 0.77\% & 60.34 $\pm$ 1.03\% & 46.81 $\pm$ 0.81\% & 81.06 $\pm$ 0.86\% \\

& Meta-Maxup & 39.28 $\pm$ 0.77\% & 58.68 $\pm$ 0.86\% & 46.10 $\pm$ 0.82\% & 79.56 $\pm$ 0.89\%  \\\cmidrule{2-6}

& \textbf{MLTI (ours)} & \textbf{41.58 $\pm$ 0.72\%} & \textbf{61.79 $\pm$ 1.00\%} & \textbf{48.03 $\pm$ 0.79\%} & \textbf{81.73 $\pm$ 0.89\%} \\\midrule

\multirow{7}{*}{MAML (5-shot)} & Vanilla & 52.14 $\pm$ 0.65\%  & 65.24 $\pm$ 0.77\%  & 60.56 $\pm$ 0.74\% & 88.55 $\pm$ 0.60\%  \\

& Meta-Reg & 51.74 $\pm$ 0.68\% & 68.45 $\pm$ 0.81\% & 60.92 $\pm$ 0.69\% & 89.08 $\pm$ 0.61\% \\

& TAML & 52.75 $\pm$ 0.70\% & 66.09 $\pm$ 0.71\% & 61.14 $\pm$ 0.72\% & 89.11 $\pm$ 0.59\% \\

& Meta-Dropout & 52.53 $\pm$ 0.69\% & 67.32 $\pm$ 0.92\% & 60.86 $\pm$ 0.73\% & 89.25 $\pm$ 0.59\% \\

& MetaMix & 54.14 $\pm$ 0.73\% & 69.47 $\pm$ 0.60\% & 63.52 $\pm$ 0.73\% & 89.75 $\pm$ 0.58\% \\

& Meta-Maxup & 53.02 $\pm$ 0.72\% & 69.16 $\pm$ 0.61\% & 62.64 $\pm$ 0.72\% & 88.88 $\pm$ 0.57\% \\\cmidrule{2-6}

& \textbf{MLTI (ours)} & \textbf{55.22 $\pm$ 0.76\%} & \textbf{70.69 $\pm$ 0.68\%} & \textbf{64.55 $\pm$ 0.74\%} & \textbf{91.08 $\pm$ 0.54\%} \\\midrule\midrule

\multirow{4}{*}{ProtoNet} & Vanilla  & 36.26 $\pm$ 0.70\% & 58.56 $\pm$ 1.01\% & 44.21 $\pm$ 0.75\% & 80.03 $\pm$ 0.90\% \\

& MetaMix & 39.67 $\pm$ 0.71\% & 60.58 $\pm$ 1.17\% & 47.71 $\pm$ 0.83\% & 80.72 $\pm$ 0.90\% \\

& Meta-Maxup & 39.80 $\pm$ 0.73\% & 59.66 $\pm$ 1.13\% & 46.06 $\pm$ 0.78\% & 80.87 $\pm$ 0.95\% \\\cmidrule{2-6}

& \textbf{MLTI (ours)}  & \textbf{41.36 $\pm$ 0.75\%} & \textbf{62.82 $\pm$ 1.13\%} & \textbf{49.38 $\pm$ 0.85\%} & \textbf{81.89 $\pm$ 0.88\%} \\\midrule\midrule

\multirow{4}{*}{ProtoNet} & Vanilla & 50.72 $\pm$ 0.70\% & 66.25 $\pm$ 0.96\% & 60.33 $\pm$ 0.70\% & 89.20 $\pm$ 0.56\% \\

& MetaMix & 53.10 $\pm$ 0.74\% & 70.12 $\pm$ 0.94\% & 62.68 $\pm$ 0.71\% & 89.30 $\pm$ 0.61\% \\

& Meta-Maxup & 53.35 $\pm$ 0.68\% & 68.97 $\pm$ 0.83\% & 62.97 $\pm$ 0.74\% & 89.42 $\pm$ 0.64\% \\\cmidrule{2-6}

& \textbf{MLTI (ours)}  & \textbf{55.34 $\pm$ 0.74\%} & \textbf{71.52 $\pm$ 0.89\%} & \textbf{65.19 $\pm$ 0.73\%} & \textbf{90.12 $\pm$ 0.59\%} \\
\bottomrule
\end{tabular}}
\end{center}
\end{table*}

\begin{table}[h]
\small
\caption{Full table (accuracy $\pm$ 95\% confidence interval) of the cross-domain adaptation under the non-label-sharing scenario. A $\rightarrow$ B represents that the model is meta-trained on A and then is meta-tested on B.}
\vspace{-1em}
\label{tab:cross_nls_full_app}
\begin{center}
\setlength{\tabcolsep}{1.2mm}{
\begin{tabular}{ll|cc|cc}
\toprule
\multirow{2}{*}{Model} &  & \multicolumn{2}{c|}{miniImagenet-S $\rightarrow$ Dermnet-S} & \multicolumn{2}{c}{Dermnet-S $\rightarrow$ miniImagenet-S}\\
&  & 1-shot & 5-shot & 1-shot & 5-shot\\\midrule

\multirow{2}{*}{MAML} &  & 33.67 $\pm$ 0.61\% & 50.40 $\pm$ 0.63\% & 28.40 $\pm$ 0.55\%  & 40.93 $\pm$ 0.63\% \\
& +MLTI & \textbf{36.74 $\pm$ 0.64\%} & \textbf{52.56 $\pm$ 0.62\%} & \textbf{30.03 $\pm$ 0.58\%} & \textbf{42.25 $\pm$ 0.64\%}   \\\midrule

\multirow{2}{*}{ProtoNet} & & 33.12 $\pm$ 0.60\% & 50.13 $\pm$0.65\% & 28.11 $\pm$ 0.53\% & 40.35 $\pm$ 0.61\% \\
& +MLTI & \textbf{35.46 $\pm$ 0.63\%} & \textbf{51.79 $\pm$ 0.62\%} &  \textbf{30.06 $\pm$ 0.56\%} & \textbf{42.23 $\pm$ 0.61\%} \\
\bottomrule
\end{tabular}}
\end{center}
\end{table}

\end{document}